\documentclass{article}

\usepackage[accepted]{icml2026}

\usepackage{amsmath,amssymb,amsfonts,amsthm}
\usepackage{mathtools}
\usepackage{bm}
\usepackage{tikz}
\usetikzlibrary{arrows.meta}

\usepackage{graphicx}
\usepackage{subcaption}
\usepackage{booktabs}   
\usepackage{microtype}
\usepackage{needspace}
\usepackage{siunitx} 

\usepackage{hyperref}

\usepackage{enumitem}
\usepackage[capitalize,noabbrev]{cleveref}
\usepackage{placeins}

\usepackage{multirow}

\usepackage{placeins}
\usepackage{dblfloatfix}
\usepackage{float}

\usepackage{siunitx}
\usepackage{algorithm}
\usepackage{algorithmic}

\theoremstyle{plain}
\newtheorem{theorem}{Theorem}[section]
\newtheorem{proposition}[theorem]{Proposition}

\theoremstyle{definition}
\newtheorem{definition}[theorem]{Definition}
\newtheorem{assumption}[theorem]{Assumption}
\theoremstyle{remark}

\newcommand{\R}{\mathbb{R}}
\newcommand{\E}{\mathbb{E}}
\newcommand{\Var}{\mathrm{Var}}
\newcommand{\Cov}{\mathrm{Cov}}

\newcommand{\ip}[2]{\left\langle #1,\,#2\right\rangle}

\icmltitlerunning{TASTE: Task-Aware Stein OOD Detection}

\begin{document}

\twocolumn[
\icmltitle{TASTE: Task-Aware Out-of-Distribution Detection via Stein Operators}

\begin{icmlauthorlist}
\icmlauthor{Micha\l\ Kozyra}{ox}
\icmlauthor{Gesine Reinert}{ox}
\end{icmlauthorlist}

\icmlaffiliation{ox}{Department of Statistics, University of Oxford, United Kingdom}

\icmlcorrespondingauthor{Micha\l\ Kozyra}{michal.kozyra@seh.ox.ac.uk}
\icmlkeywords{Out-of-distribution detection, Stein operators, Uncertainty quantification}

\vskip 0.3in
]

\printAffiliationsAndNotice{\hspace{-1em}}

\begin{abstract}

Out-of-distribution  detection methods are often either data-centric,
detecting deviations from the training input distribution irrespective of their
effect on a trained model, or model-centric, relying on classifier outputs
without explicit reference to data geometry.  We propose TASTE (Task-Aware STEin operators): a task-aware framework
based on so-called Stein operators,
which allows us to link distribution shift to the input sensitivity of the model. We show that the resulting operator admits a clear geometric interpretation as
a projection of distribution shift onto the  sensitivity field of the model,
yielding theoretical guarantees. Beyond detecting the presence of a shift,
the same construction enables its localisation through a coordinate-wise
decomposition, and—for image data—provides interpretable per-pixel
diagnostics. Experiments on controlled Gaussian shifts, MNIST under geometric perturbations, and CIFAR-10 perturbed benchmarks demonstrate that
the proposed method aligns closely with task degradation while outperforming established baselines
\end{abstract}

\section{Introduction}

Deep models are typically trained under the assumption that training and
test samples are i.i.d.\ from a fixed joint distribution $P(X,Y)$.  At deployment
, this assumption frequently fails: the test-time marginal $Q_X$ may differ due to
changes in, for example, in sensors or 
preprocessing pipelines.
Such
shifts are commonly categorised as \emph{covariate shift} ($Q_X\neq P_X$ with
$P(Y\!\mid\!X)$ unchanged), \emph{label shift} (changes in $P(Y)$), and, in
streaming or non-stationary settings, broader forms of \emph{concept/conditional}
shift affecting $P(Y\!\mid\!X)$
\citep{quinonero-candelaDatasetShiftMachine2008,gamaSurveyConceptDrift2014,luLearningConceptDrift2018}.
A central practical question is therefore not only whether $Q_X\neq P_X$, but
whether the shift \emph{meaningfully affects} the behaviour of the model $f_\theta$.

{\textbf{Limitations of current OOD approaches.}}
Out-of-distribution (OOD) detection methods usually approach this question from
two complementary angles.  \emph{Data-centric} methods model the training
marginal $P_X$ and flag inputs that appear unlikely, while 
methods operate on classifier outputs (e.g.\ softmax confidence) or learned
representations (e.g.\ feature-space distances); see for example 
\citep{hendrycksBaselineDetectingMisclassified2018,liangEnhancingReliabilityOutofdistribution2020,liuEnergybasedOutofdistributionDetection2021,nalisnickDeepGenerativeModels2019}.
Each captures part of the picture: data-centric scores can detect distribution
change but do not specify whether it matters for the task, whereas model-centric
scores can reveal brittleness yet do not explicitly connect $f_\theta$ to the
geometry of the input distribution
\citep{rabanserFailingLoudlyEmpirical2019,yangGeneralizedOutofDistributionDetection2024}.
What is missing is a principled mechanism that links the \emph{geometry of the data} to
the \emph{sensitivity of the model} in a way that remains usable at test time.

\textbf{Our perspective: Stein operators.} To bridge data geometry and model sensitivity we propose a novel way of using Stein operators.
Stein's method compares a candidate distribution $q$ to a
reference distribution $p$ via an operator $\mathcal{T}_p$ satisfying
$\mathbb{E}_p[\mathcal{T}_p g]=0$ for a rich class of test functions $g$
\citep{stein1972bound,leySteinsMethodComparison2016}.
Departures $\mathbb{E}_q[\mathcal{T}_p g]\neq 0$ certify $q\neq p$.  
Stein operators underpin discrepancy measures and goodness-of-fit
tests such as kernelised Stein discrepancies, and more broadly provide a
computational interface between score information and distributional mismatch
\citep{gorhamMeasuringSampleQuality2019,liuKernelizedSteinDiscrepancy2016,anastasiouSteinsMethodMeets2022}. 
Crucially, most Stein-based discrepancy tests treat $g$ as a free element of a function
class and optimise or randomise over it to obtain \emph{distribution-level}
tests.  In contrast, in deployment we already have a trained model
$f_\theta(x)\approx \mathbb{E}[Y\mid X=x]$, and we are specifically interested
in how a distribution shift affects \emph{this} function.  This motivates a
model-centric question:

\emph{What can we say about distribution shift and reliability 
if we fix the
Stein test function to be the deployed predictor $f_\theta$ itself?}

\textbf{TASTE: Task-Aware TASTE functional.}
Fixing the test function $f$ of interest instead of aiming for a result over a wide class of test functions is a novel use of Stein's method. We use it here to 
develop \textbf{TASTE}, a task-aware framework built around the Langevin Stein operator
\[
  \mathcal{L}_p f(x)=\Delta f(x)+\nabla\log p(x)^\top\nabla f(x),
\]
constructed from a (potentially unnormalised) density/score model $p$ for
training data and a fixed $f=f_\theta$.  The 
score
$\nabla\log p$ connects the operator to score-based views of data geometry and
score-matching models \citep{songScoreBasedGenerativeModeling2021}.
For each sample $X$, the value $\mathcal{L}_p f(X)$ gives a {\it per-sample Stein residual}, with mean 0 under  $p$.
We study what we call the {\it task-aware Stein, or TASTE,  functional} 
\begin{align}\label{eq:stein-functional-def-main}
    S_f(p,q):=\mathbb{E}_q[\mathcal{L}_p f(X)]
\end{align}
under a test distribution $q$ which is often approximated by the empirical distribution of the data.
Under mild
regularity assumptions $S_f(p,q)$ admits an explicit ``projection'' form,
\begin{align} \label{eq:steinfun}
    S_f(p,q)
 & = -\,\mathbb{E}_q\!\Big[\nabla f(X)^\top\nabla\log\frac{q(X)}{p(X)}\Big],
\end{align}
which can be interpreted as an $L^2(q)$ inner product between
the sensitivity field $\nabla f$ of the model, and
the shift-score field $\nabla\log(q/p)$.  
Eq.\eqref{eq:steinfun} 
clarifies 
task-aware nature of the signal: 
for fixed $f$, $S_f(p,q)$ 
responds to the components of distribution shift that influence 
$\nabla f$ in expectation;  
see Figure~\ref{fig:score-shift-aligned-orth}.

We focus on the Langevin Stein operator as it provides a scalar functional with a 
geometric interpretation. While simpler first-order Stein operators can also be constructed, they exhibit limitations such as directional blind spots and loss of linear sensitivity; see 
Appendix~\ref{app:first-order-stein}.

\textbf{Practical appeal of TASTE: plug-and-play, model-agnostic, interpretable.}
Our approach is \emph{plug-and-play}: at test time it combines any fixed
$f_\theta$ with any score/density estimator for $p$ and produces an
OOD signal without retraining the classifier or requiring negative samples.  It
is \emph{model-agnostic} in the sense that it only requires access to
input-gradients of $f_\theta$ (available via autodiff for standard architectures)
and an estimate of $\nabla\log p$.  Finally, the operator admits a natural
\emph{per-input-dimension} decomposition; for images this yields per-pixel
residual maps, providing interpretable localisation of task-relevant anomalies
and enabling fine-grained diagnostic visualisations.

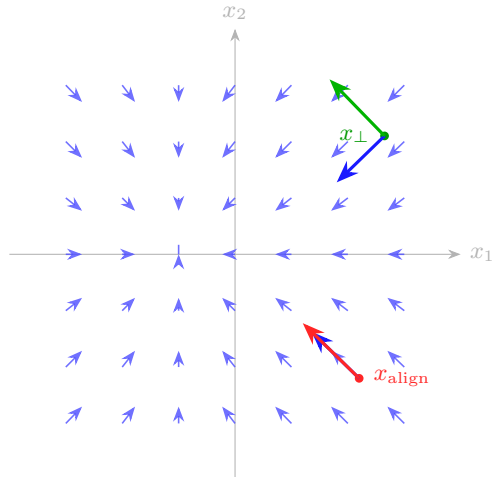
\begin{figure}[t]
\centering
\begin{tikzpicture}[scale=0.75, >=Stealth, every node/.style={font=\footnotesize}]

  \draw[->, gray!60] (-4,0) -- (4,0) node[right] {$x_1$};
  \draw[->, gray!60] (0,-4) -- (0,4) node[above] {$x_2$};

  \def\fieldscale{0.30}  
  \foreach \x in {-3,-2,-1,0,1,2,3} {
    \foreach \y in {-3,-2,-1,0,1,2,3} {
      \pgfmathsetmacro{\sx}{-tanh(\x+1)}
      \pgfmathsetmacro{\sy}{-tanh(\y)}
      \draw[->, blue!70, line width=0.6pt, opacity=0.80]
        (\x,\y) -- ++({\fieldscale*\sx},{\fieldscale*\sy});
    }
  }


  \coordinate (XR) at (2.2,-2.2);
  \fill[red!85] (XR) circle (2.2pt);
  \node[red!85, right=2pt] at (XR) {$x_{\mathrm{align}}$};

  \pgfmathsetmacro{\sxR}{-tanh(2.2+1)}
  \pgfmathsetmacro{\syR}{-tanh(-2.2)}
  \draw[->, very thick, blue!90]
    (XR) -- ++({0.85*\sxR},{0.85*\syR});

  \pgfmathsetmacro{\normR}{sqrt((\sxR)*(\sxR)+(\syR)*(\syR))}
  \pgfmathsetmacro{\gxR}{(\sxR)/\normR}
  \pgfmathsetmacro{\gyR}{(\syR)/\normR}
  \draw[->, very thick, red!85]
    (XR) -- ++({1.4*\gxR},{1.4*\gyR});

  \coordinate (XG) at (2.65,2.1);
  \fill[green!60!black] (XG) circle (2.2pt);
  \node[green!60!black, left=2pt] at (XG) {$x_{\perp}$};

  \pgfmathsetmacro{\sxG}{-tanh(2.65+1)}
  \pgfmathsetmacro{\syG}{-tanh(2.1)}
  \draw[->, very thick, blue!90]
    (XG) -- ++({0.85*\sxG},{0.85*\syG});

  \pgfmathsetmacro{\normG}{sqrt((\sxG)*(\sxG)+(\syG)*(\syG))}
  \pgfmathsetmacro{\pgx}{(\syG)/\normG}
  \pgfmathsetmacro{\pgy}{ -(\sxG)/\normG}
  \draw[->, very thick, green!70!black]
    (XG) -- ++({1.4*\pgx},{1.4*\pgy});



\end{tikzpicture}
\caption{\textbf{Task-aware intuition behind TASTE.}
Blue contours depict the score-shift field between the training density p(x) and test density q(x). 
TASTE measures -- without explicit knowledge of q -- how distribution shift aligns with the
input sensitivity of the model $\nabla f(x)$: shifts aligned with $\nabla f$ (red) produce a large
response, while orthogonal shifts (green) are largely ignored even if they are large in
density terms. 
}
\label{fig:score-shift-aligned-orth}
\end{figure}

\textbf{Contributions.}
Our main contributions are as follows.
\begin{itemize}[noitemsep, leftmargin=*]
  \item \textbf{TASTE: model-centric Stein diagnostics.}
        We formalise a task-aware TASTE functional by applying the Langevin Stein
        operator to a fixed model, and derive a projection identity that
        links distribution shift to the  sensitivity field of the model.

  \item \textbf{Robustness to score-model error.}
        We analyse how inaccuracies in learned score models affect the proposed methodology and provide a simple correction that removes systematic bias
        using training data.

  \item \textbf{Per-sample TASTE scores and interpretable decompositions.}
        We derive per-sample Stein residuals as OOD scores and show how their
        coordinate-wise decomposition yields per-input (and per-pixel) anomaly maps
        for image data.

  \item \textbf{Empirical validation across diverse shift regimes.}
        We demonstrate that the Stein signal aligns with task degradation in
        controlled shift models and achieves strong performance against standard
        OOD baselines on common benchmark suites (including corruption and
        perturbation benchmarks, mixed in-/out-distribution test sets, and
        adversarially perturbed inputs).
\end{itemize}
\textbf{Scope.}
The goal of this paper is not to propose yet another OOD score in isolation,
but to articulate a general framework in which a fixed $f_\theta$ and
a density/score model for $p(x)$ are linked via a principled operator.  Within
this framework, TASTE functionals act as task-aware probes of distribution
shift, offering a bridge between generative and discriminative views of
uncertainty in deep learning.

\section{Related Work}
\label{sec:related-work}

\textbf{Dataset shift, OOD detection, and evaluation.}
The dataset-shift literature distinguishes changes in $P_X$, changes in $P_Y$,
and changes in $P(Y\mid X)$, and studies both detection and adaptation under
non-stationarity
\citep{quinonero-candelaDatasetShiftMachine2008,gamaSurveyConceptDrift2014,luLearningConceptDrift2018}.
In modern OOD detection, the dominant focus is on \emph{detecting} atypical test
inputs and quantifying separation between in- and out-distribution samples,
with a growing emphasis on unified taxonomies and evaluation protocols
\citep{yangGeneralizedOutofDistributionDetection2024,rabanserFailingLoudlyEmpirical2019}.
Our work complements this direction by emphasising \emph{task relevance}: the
goal is not to detect all deviations from $P_X$, but to detect distribution shifts that influence the deployed model.

\textbf{Data-centric vs.\ model-centric OOD scoring.}
Data-centric methods use density or score models to assess novelty under $P_X$,
yet the likelihood can be misaligned with semantic novelty in high-dimensional image
data \citep{nalisnickDeepGenerativeModels2019,renLikelihoodRatiosOutofDistribution2019}.
Model-centric methods instead use confidence and calibration heuristics or
representation distances, including Maximum Softmax Probability (MSP)-style baselines and methods that shape
output energies or logits to improve separability
\citep{hendrycksBaselineDetectingMisclassified2018,liangEnhancingReliabilityOutofdistribution2020,liuEnergybasedOutofdistributionDetection2021}.
Our Stein construction is 
a hybrid: it uses a learned score model
to represent data geometry while explicitly coupling it to the
input-sensitivity of the model, yielding a task-aware signal.

\textbf{Stein's method in computation and machine learning.}
Stein operators 
have been used to
construct discrepancy measures and goodness-of-fit tests such as kernelised
Stein discrepancies \citep{liuKernelizedSteinDiscrepancy2016} and computable
measures of sample quality \citep{gorhamMeasuringSampleQuality2019}, see the survey 
\citep{anastasiouSteinsMethodMeets2022} for details and further uses.
In contrast to the typical setting 
which obtains a measure of discrepancy by optimising the TASTE functional \eqref{eq:steinfun}  over a set of test functions, TASTE 
fixes the test function, yielding a
model-centric diagnostic tool.

\textbf{Score models and geometric viewpoints.}
Score matching and diffusion-based generative modelling provide scalable ways to
estimate $\nabla\log p(x)$ for complex, high-dimensional data, and can be
interpreted geometrically as learning vector fields aligned with local density
gradients \citep{songScoreBasedGenerativeModeling2021}.  This makes score models
a natural component in operator-based diagnostics that require access to the
data geometry, including our Stein-based construction.

\textbf{Localisation and per-input diagnostics.}
Beyond binary OOD detection, many applications require localisation: identifying
which features or pixels contribute to abnormality.  In computer vision, pixel-
level anomaly localisation has been studied extensively 
\citep{bergmannUninformedStudentsStudentTeacher2020,rothTotalRecallIndustrial2022}.
Separately, feature-level shift localisation aims to identify which coordinates
have shifted between train and test distributions \citep{kulinskiFeatureShiftDetection2021}.
Our 
TASTE residuals provide a task-aware route to such
localisation: they decompose the same operator used for global and per-sample
diagnostics into interpretable per-input contributions.

\section{Stein operators and data geometry}
\label{sec:stein-background}

Here we provide the minimal Stein machinery needed for our 
goal 
to connect a \emph{fixed} model
$f$ to the geometry of a reference data distribution $p$ via a differential
operator whose expectation vanishes under $p$. We refer to \cite{anastasiouSteinsMethodMeets2022} for details.
We use the following setting and notation. Let $f:\R^d\to\R$ be a scalar test function (or predictor); for a
vector-valued predictor $f:\R^d\to\R^m$ we apply all definitions componentwise.
We write $\nabla f(x)\in\R^d$ for the gradient and
$
  \Delta f(x) := \sum_{i=1}^d \frac{\partial^2 f(x)}{\partial x_i^2}
$ 
for the Euclidean Laplacian. The divergence of a vector field $v$ is  $\nabla\cdot v = \sum_{i=1}^d \partial_{x_i} v_i$. 

We assume that $p:\R^d\to(0,\infty)$ is a continuously differentiable probability density with
\emph{score function} 
\[
  s_p(x) := \nabla \log p(x) \in \R^d.
\] 
The score function requires the density $p$ to be available only up to a normalising constant. Expectations under $p$ 
are denoted $\E_p[\cdot]$. 

A so-called {\it Stein operator} 
is an operator $\mathcal{T}_p$ such that $
  \E_p[\mathcal{T}_p g(X)] = 0 $ 
for a sufficiently rich class of test functions $g$.  
Here we
focus on the \emph{Langevin Stein operator}
\begin{equation}
  \label{eq:langevin-stein-operator-main}
  \mathcal{L}_p f(x)
  :=
  \Delta f(x) + s_p(x)^\top \nabla f(x),
\end{equation}
where $f$ is a smooth scalar function.
This operator can be recognised as the infinitesimal generator of a Langevin diffusion for which $p$ is an invariant density, and it provides the
standard starting point for kernelised Stein discrepancies and Stein-based
goodness-of-fit testing.
A key property of $\mathcal{L}_p$ is that it admits a divergence form:
\begin{equation}
  \label{eq:langevin-stein-divergence-main}
  \mathcal{L}_p f(x)
  =
  \frac{1}{p(x)}\,\nabla\cdot\big(p(x)\,\nabla f(x)\big).
\end{equation}


Under mild regularity and boundary decay conditions, see Appendix~\ref{app:stein-basics}, the divergence form \eqref{eq:langevin-stein-divergence-main} implies the classical
\emph{Stein identity}:
\begin{equation}
  \label{eq:stein-identity-main}
  \E_p[\mathcal{L}_p f(X)] = 0
\end{equation}
holds for any sufficiently regular $f$. 
Eq. \eqref{eq:stein-identity-main} is the formal
reason Stein operators are useful: they produce quantities that equal zero for
in-distribution data (in expectation), but become nonzero under distribution shift. 

The divergence view \eqref{eq:langevin-stein-divergence-main} also provides a geometric interpretation.  The vector field
$p(x)\nabla f(x)$ can be seen as a density-weighted \emph{sensitivity flow}
induced by the model.  The Stein identity states that, under the reference
density $p$, this field cannot generate net flux in expectation.  In our
setting, this interpretation is particularly natural: $\nabla f$ captures local
model sensitivity, while $p$ encodes the data geometry; $\mathcal{L}_p f$
combines both into a single scalar diagnostic.

\section{TASTE: Task-aware Stein functionals}
\label{sec:stein-fixed-predictors}

We now shift from the classical ``$f$ is free'' viewpoint in Stein's method to
our model-centric setting: we \emph{fix} $f$ to be a pre-trained model
and ask how Stein operators behave when the test distribution differs from the
training distribution.

\subsection{
TASTE functionals under a shifted test distribution}
\label{sec:stein-functional}

Let $q$ be another smooth density on $\R^d$, absolutely continuous with respect
to $p$, and let $S_f(p,q)$ be the   
TASTE functional of the fixed predictor $f$ at
$(p,q)$ given in \eqref{eq:stein-functional-def-main}.
When $q=p$, \eqref{eq:stein-identity-main} implies $S_f(p,p)=0$.  When $q\neq p$,
the functional measures how the test distribution violates the Stein identity
that holds under $p$. 
A central identity expresses $S_f(p,q)$ directly in terms of the density ratio $q/p$.

\begin{proposition}
[Projection identity]
\label{prop:projection-main}
Under mild regularity (see Appendix~\ref{app:projection}),
\begin{equation}
  \label{eq:projection-identity-main}
  S_f(p,q)
  =
  -\,\E_q\!\Big[
    \nabla f(X)^\top \nabla\log\frac{q(X)}{p(X)}
  \Big].
\end{equation}
\end{proposition}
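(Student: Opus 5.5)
The proof is a single integration by parts under $q$, plus bookkeeping of the score terms. Write $S_f(p,q)=\int q\,\Delta f\,dx+\int q\, s_p^\top\nabla f\,dx$. The divergence theorem, applied to the vector field $q\nabla f$, gives
\[
  \int q\,\Delta f\,dx=\int \nabla\cdot(q\nabla f)\,dx-\int \nabla q^\top\nabla f\,dx .
\]
The first integral on the right is a boundary flux term. Under the regularity conditions of Appendix~\ref{app:projection} it vanishes: $q\nabla f$ is $C^1$, integrable, and decays at infinity, so the flux through large spheres tends to zero.

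On the set where $q>0$ we have $\nabla q=q\,s_q$ with $s_q=\nabla\log q$. Hence $\int q\,\Delta f\,dx=-\E_q[s_q^\top\nabla f]$. Combining this with the second term gives
\[
  S_f(p,q)=\E_q\big[(s_p-s_q)^\top\nabla f\big]=-\E_q\Big[\nabla f^\top\nabla\log\tfrac{q}{p}\Big],
\]
which is \eqref{eq:projection-identity-main}.

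An equivalent route uses the divergence form \eqref{eq:langevin-stein-divergence-main}. Write $\E_q[\mathcal L_p f]=\int r\,\nabla\cdot(p\nabla f)\,dx$ with $r=q/p$, and integrate by parts against $p$. This gives $-\int p\,\nabla r^\top\nabla f\,dx=-\E_q[\nabla\log r^\top\nabla f]$. I would present this version, since it makes the link to the Stein identity \eqref{eq:stein-identity-main} explicit: setting $r\equiv1$ recovers $S_f(p,p)=0$.

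The main obstacle is justifying that the boundary term vanishes and that every integral is finite. I would first assume $\E_q[|\Delta f|]$, $\E_q[\|\nabla f\|\,\|s_p\|]$ and $\E_q[\|\nabla f\|\,\|s_q\|]$ are all finite. I would then apply the divergence theorem on balls $B_R$ and let $R\to\infty$. The surface integral $\oint_{\partial B_R} q\,\nabla f\cdot n\,dS$ is handled by a standard averaging argument: since $\int\|q\nabla f\|\,dx<\infty$, we have $\int_0^\infty\big(\oint_{\partial B_R}\|q\nabla f\|\,dS\big)\,dR<\infty$, so there is a sequence $R_k\to\infty$ along which the flux tends to zero. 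Dominated convergence then passes the volume integrals to the limit.

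If $q$ may vanish on parts of $\R^d$, I would restrict to densities that are positive everywhere, or interpret $\nabla\log q$ only on $\{q>0\}$ and note that $\nabla q=0$ almost everywhere on $\{q=0\}$ for $C^1$ densities. For vector-valued $f$, the identity follows by applying the scalar result componentwise.
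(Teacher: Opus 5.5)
Your proposal is correct, and the divergence-form version you say you would present is exactly the paper's proof: write $S_f(p,q)=\int l\,\nabla\cdot(p\nabla f)\,dx$, integrate by parts, discard the flux of $l\,p\nabla f=q\nabla f$, and use $\nabla l=l\,\nabla\log l$. The paper simply assumes that the flux vanishes as (P4), whereas your averaging argument along $R_k\to\infty$ obtains it from integrability of $q\nabla f$ and of its divergence. Your first route (integrating $q\,\Delta f$ by parts against $q$) has the same boundary term. However, it needs $\Delta f$ and $s_p^\top\nabla f$ to be separately $q$-integrable, which the divergence-form route avoids.
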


Defining the \emph{shift score field}
  $u_{p\to q}(x) := \nabla\log\frac{q(x)}{p(x)}$, 
equation \eqref{eq:projection-identity-main} becomes
 $ S_f(p,q) = -\,\ip{\nabla f}{u_{p\to q}}_{L^2(q)}.$
This is the key ``task-aware'' mechanism: for a fixed predictor $f$, the Stein
functional measures only the component of the distribution shift that aligns with the predictor's sensitivity field $\nabla f$.  Shifts
that are large in density terms but lie mostly in directions orthogonal to
$\nabla f$ can be invisible to $S_f$; conversely, comparatively small shifts
can trigger a strong response if they move mass along directions where $f$ is
sensitive.  This projection view is the mathematical basis for our claim that
Stein residuals are \emph{task-aware}.

\subsection{Small-change analysis via exponential tilts}
\label{sec:small-change-main}

To understand the behaviour of $S_f(p,q)$,
first we consider
smooth perturbations of $p$ via exponential tilts
\begin{equation}
  \label{eq:tilt-main}
  q_\varepsilon(x) = \frac{1}{Z_\varepsilon}\,p(x)\,e^{\varepsilon h(x)},
\end{equation}
where $h:\R^d\to\R$ is a shift potential and $\varepsilon$ controls the shift
strength.  Under regularity assumptions on $h$, we have $\nabla\log\frac{q_\varepsilon(x)}{p(x)}
= \varepsilon\,\nabla h(x)$, and substituting \eqref{eq:tilt-main} into
\eqref{eq:projection-identity-main} yields
\begin{align}
S_f(p,q_\varepsilon)
&= -\,\varepsilon\,\E_{q_\varepsilon}\!\left[\nabla f(X)^\top \nabla h(X)\right].
\end{align}
Moreover, expanding expectations under $q_\varepsilon$ around $p$ gives the
first-order response
\begin{equation}
  \label{eq:tilt-expansion-main}
  S_f(p,q_\varepsilon)
  =
  \varepsilon\,\Cov_p\!\big(\mathcal{L}_p f(X),\,h(X)\big)
  + O(\varepsilon^2),
\end{equation}
and an analogous expansion for $\Var_{q_\varepsilon}[\mathcal{L}_p f(X)]$ (a full
derivation and additional assumptions of $h$ are in Appendix~\ref{app:tilt}).  
Thus, for small smooth
shifts, the 
TASTE functional responds linearly in $\varepsilon$, with  slope
given by a covariance under the training distribution.  In practice, this
regime may provide a useful diagnostic lens even when real shifts are not exactly
tilts.

\subsection{Correcting for using an approximate score model}
\label{sec:score-error-main}

Computing $\mathcal{L}_p f$ requires access to $s_p=\nabla\log p$, yet in
applications $p$ is unknown and we substitute a learned model $\tilde p$ with
score $\tilde s=\nabla\log\tilde p$, yielding 
the approximate operator
\[
  \mathcal{L}_{\tilde p} f(x) = \Delta f(x) + \tilde s(x)^\top \nabla f(x).
\]
A useful consequence of the projection identity is that the error induced by
$\tilde s$ is \emph{directional}: it depends on how score-model error aligns
with $\nabla f$ and the shift direction $q/p$.

Concretely, letting $l(x):=q(x)/p(x)$ and placing $g(x) := (\tilde s(x)-s_p(x))^\top \nabla f(x)$,
under mild regularity assumptions, one obtains the decomposition (see
Appendix~\ref{app:directional})
\begin{equation}
\label{eq:directional-main}
  \E_q[\mathcal{L}_{\tilde p} f]
  =
  \E_p[\mathcal{L}_{\tilde p} f]
  +
  S_f(p,q)
  +
  \langle g,\ l-1\rangle_{L^2(p)},
\end{equation}
This motivates the \emph{adjusted TASTE functional}
\begin{equation}
  \label{eq:corrected-main}
  \tilde S_f(p,q)
  :=
  \E_q[\mathcal{L}_{\tilde p} f]
  -
  \E_p[\mathcal{L}_{\tilde p} f],
\end{equation}
where $\E_p[\mathcal{L}_{\tilde p} f]$ can be estimated on held-out training
data.

The adjusted 
{TASTE} functional $\tilde S_f(p,q)$ 
gives rise to several observations.
First, it guarantees \emph{no false alarm under no shift}: when $q=p$, the density ratio satisfies $l\equiv 1$ and both
$S_f(p,p)=0$ and $\langle g, r-1\rangle_{L^2(p)}=0$, so the corrected functional vanishes even if the score model
$\tilde s_p$ is imperfect.
Second, the response of $\tilde S_f(p,q)$ is inherently \emph{task-aware}. As the error term
$\langle g, l-1\rangle_{L^2(p)}$ is an inner product in $L^2(p)$, 
 only those distribution shifts whose
direction has nonzero projection onto
$g(x)=(\tilde s_p(x)-s_p(x))^\top\nabla f(x)$ produce a response in expectation; shifts orthogonal to the 
gradient field of the model are invisible.
Third, the influence of score-model mismatch is quantitatively controlled.  Since
$g(x) = (\tilde s_p(x)-s_p(x))^\top\nabla f(x)$, the bias term 
satisfies 
\begin{align} \label{eq:scoreerror}
  \bigl|\langle g,\ l-1\rangle_{L^2(p)}\bigr|
  \;\le\;
  \sqrt{J(p\|\tilde p) } \;
  \|\nabla f\|_{L^4(p)}\;
  \|l-1\|_{L^4(p)},
\end{align}
where $J(p\|\tilde p)$ is the Fisher divergence minimized by denoising score matching \citep{songScoreBasedGenerativeModeling2021}.
As long as $\|l-1\|_{L^4(p)}$ is controlled, improved score estimation 
translates into tighter 
bounds for the adjusted TASTE functional. 

\textbf{On access to the test distribution.}
We do not assume access to an explicit density for the
test distribution $q$.  The distribution $q$ enters only implicitly through
test samples, and all expectations with respect to $q$ are estimated
empirically by sample averages over the observed test set.


\subsection{Per-sample and per-dimension residuals}
\label{sec:per-sample-per-dim-main}

The corrected quantity in \eqref{eq:corrected-main} can be used at different
granularity levels.

\textbf{Per-sample TASTE scores.}
Given a test input $x \sim q$, define the adjusted \emph{TASTE residual}
\[
  r_f(x) := \mathcal{L}_{\tilde p} f(x) - D_f.
\]
where $D_f = \E_p[\mathcal{L}_{\tilde p} f(X)]$, which in practice is estimated from training data. Per-sample scores such as $|r_f(x)|$ (or signed $r_f(x)$) can be used for OOD
ranking, while averages over test sets estimate $\tilde S_f(p,q)$.

\textbf{Per-dimension decomposition and interpretability.}
Writing derivatives component-wise gives the representation
$
  \mathcal{L}_p f(x)
  =
  \sum_{i=1}^d
  \Big(\partial_{ii} f(x) + \partial_i\log p(x)\,\partial_i f(x)\Big).
$ 
This 
motivates introducing per-dimension \emph{TASTE residuals}
\begin{equation}\tag{$\star$}\label{eq:componentwise-stein}
      r_{f,i}(x)
  :=
  \partial_{ii} f(x) + \partial_i\log p(x)\,\partial_i f(x),
\end{equation}

so that $r_f(x)=\sum_i r_{f,i}(x)$. Importantly, each coordinate-wise 
$r_{f,i}(x)$ is itself a valid Langevin Stein operator applied to the one-dimensional test function
\(x \mapsto \partial_i f(x)\).
Under the same regularity conditions as for the full operator,
each component satisfies
\(
\mathbb{E}_{p}[r_{f,i}(X)] = 0
\),
and admits an analogous projection formula under a shifted distribution.
As a result, all theoretical properties established for the aggregated TASTE functional— task-aware sensitivity to shift and robustness via baseline correction—extend directly to the per-dimension setting.

In imaging applications, the index $i$ corresponds to a pixel (or pixel-channel), yielding an interpretable
anomaly map $i\mapsto r_{f,i}(x)$.  In this sense, TASTE 
residuals provide a
\emph{model-agnostic and interpretable} mechanism: they apply to any pre-trained
predictor for which gradients are available, and their per-input\footnote{Note that the "per-input" here refers to a single coordinate of the gradient/Laplacian. Evaluating each $r_{f,i}(x)$ requires the entire $x\in R^n$.} decomposition
highlights which input coordinates contribute most strongly to the task-aware
shift signal \eqref{eq:componentwise-stein}. This setting supports
use cases such as pixel-level anomaly segmentation and localisation of
distributional shift within an image, which have been extensively studied in
industrial inspection and medical imaging 
\citep{bergmannUninformedStudentsStudentTeacher2020,
rothTotalRecallIndustrial2022,zenatiAdversariallyLearnedAnomaly2018}.

\section{TASTE Methodology}

Overall, the proposed pipeline is as follows:
(i) take any fixed predictor $f_\theta$,
(ii) take any score/density model $\tilde p$ for $p$ (e.g.\ a diffusion/score model),
(iii) compute $\mathcal{L}_{\tilde p} f_\theta(x)$ (exactly or approximately) over the test,
(iv) subtract a train-estimated baseline $D_f$, and (v) use the resulting
per-sample or per-dimension residuals for task-aware OOD detection and
diagnostics. For details see Algorithm~\ref{alg:merged-stein-batched-clean}
\begin{algorithm}[tb]
\caption{Batched Adjusted TASTE Residuals}
\label{alg:merged-stein-batched-clean}
\begin{algorithmic}[1]
\STATE \textbf{Input:} test set $\mathcal{X}_{\mathrm{test}}$, (optional) calibration set $\mathcal{X}_{\mathrm{train}}$, predictor $f_\theta$, score model $\hat{s}_p$, batch sizes $B_{\mathrm{train}},B_{\mathrm{test}}$, options: \texttt{compute\_D} (bool),  Hutchinson samples $K$
\STATE \textbf{Helper:} \texttt{STEIN\_OP\_BATCH}($X$)
\STATE \quad Compute gradients $G \leftarrow \nabla f_\theta(X)$ for batch $X$
\STATE \quad Estimate Laplacians $L \leftarrow \Delta f_\theta(X)$ for batch $X$ (exact or Hutchinson with $K$ samples)
\STATE \quad Compute batch Stein values $U \leftarrow L + \hat{s}_p(X)^\top G$ \quad (vector of length $|X|$)
\STATE \quad \textbf{Return} $U$
\IF{\texttt{compute\_D} is true}
  \STATE $S \leftarrow 0,\; n \leftarrow 0$
  \FOR{each training batch $X_{\mathrm{train},b}$ of size $B_{\mathrm{train}}$}
    \STATE $U_b \leftarrow \texttt{STEIN\_OP\_BATCH}(X_{\mathrm{train},b})$
    \STATE $S \leftarrow S + \sum U_b,\; n \leftarrow n + |U_b|$
  \ENDFOR
  \STATE $D_f \leftarrow S / n$
\ELSE
  \STATE $D_f \leftarrow 0$
\ENDIF
\STATE Initialize list of scores $\text{Scores}\leftarrow\varnothing$
\FOR{each test batch $X_{\mathrm{test},b}$ of size $B_{\mathrm{test}}$}
  \STATE $U_b \leftarrow \texttt{STEIN\_OP\_BATCH}(X_{\mathrm{test},b})$
  \STATE $R_b \leftarrow U_b - D_f$ \quad (vector of adjusted residuals)
  \STATE Append $\{R_b\}$ to $\text{Scores}$  \quad (or append $\{|R_b|\}$)
\ENDFOR
\STATE \textbf{Output:} \(\text{Scores}\)
\end{algorithmic}
\end{algorithm}

\textbf{Calibrated testing interpretation.}
The per-sample Stein residual can also be used to define a simple calibrated OOD
test.
Using held-out training data, we estimate a threshold $\tau_\alpha$ as the
empirical $(1-\alpha)$-quantile of $\lvert r_f(X)\rvert$ (or signed $r_f(x)$) under $X\sim p$.
At test time, an input $x$ is declared out-of-distribution if
$\lvert r_f(x)\rvert>\tau_\alpha$.
This yields a model-agnostic decision rule that approximately controls the false
positive rate under the training distribution, while remaining sensitive to
shifts that affect the predictor’s behaviour.

\textbf{Aggregation.}
The per-sample scores returned by Algorithm~\ref{alg:merged-stein-batched-clean} are often aggregated (e.g.\ averaged over the test set or used in a batch statistic) to produce a single test-set operator value \( \frac{1}{|\mathcal{X}_{\mathrm{test}}|}\sum_{x\in\mathcal{X}_{\mathrm{test}}} r_f(x) \) for global shift detection.

\textbf{Computational efficiency.}
Computing the Laplacian term $\Delta f_\theta(x)=\mathrm{tr}(\nabla^2 f_\theta(x))$
exactly is computationally expensive in high dimensions.  In practice we
therefore rely on stochastic trace estimators, most notably Hutchinson’s
unbiased estimator and its modern variance-reduced variants
\citep{hutchinsonStochasticEstimatorTrace1989,meyerHutchOptimalStochastic2021}.

\textbf{Per-dimension residual maps.}
A per-dimension decomposition computes coordinate-wise terms \(r_i(x)=\partial_{ii} f_\theta(x) + \hat{s}_{p,i}(x)\,\partial_i f_\theta(x)\) for each input dimension \(i\), producing a spatial/feature map \(\{r_i(x)\}_{i=1}^d\). If this map is used, the correction \(D_f\) must also be estimated as a per-dimension vector \(D_f = \big(\mathbb{E}_p[r_1(X)],\dots,\mathbb{E}_p[r_d(X)]\big)\) (in practice estimated by aggregating per-dimension raw residuals on the calibration set). This per-dimension mode enables fine-grained anomaly heatmaps but increases compute as Hutchinson approximation cannot be used directly.

\textbf{On the Laplacian term for ReLU networks.}
For architectures with a piecewise-affine backbone (e.g.\ ResNets with ReLU and
pooling) followed by a smooth softmax output, classical second derivatives of
the backbone vanish almost everywhere.
Let $z(x)\in\mathbb{R}^K$ denote the logits and
$f_k(x)=\sigma_k(z(x))$ the softmax probabilities.
Since $\nabla_x^2 z_i(x)=0$ a.e., all non-zero curvature of $f_k$ arises from the
softmax via the chain rule, yielding
$
\Delta f_k(x)
=
\sum_{i,j}
\frac{\partial^2 \sigma_k}{\partial z_i \partial z_j}\,
\langle \nabla_x z_i(x), \nabla_x z_j(x)\rangle.
$ 
Thus, for such networks the Laplacian can be computed using only first-order
input gradients of the logits and the closed-form softmax Hessian, without
second-order backpropagation through ReLU or pooling layers.
If the model has no smooth output nonlinearity (e.g.\ regression), the Laplacian
vanishes a.e.\ and may be omitted.
For large $K$, the sum can be restricted to the top-$k$ logits for efficiency.

\section{Experimental results}
\label{sec:experiments}

We present concise empirical evidence that (i) the Langevin Stein residual
tracks task-relevant shifts, (ii) remains stable under harmless perturbations,
and (iii) yields effective per-sample OOD signals in mixed test sets.  Full
implementation details and additional plots are deferred to Appendix~\ref{app:experimental-setup}. Ablations can be found in Appendix~\ref{app:ablations}

\subsection{Controlled 2D shift: directional sensitivity}
\label{sec:2d-rotation-experiment}

We first study a controlled two-dimensional setting in which the direction of
the distribution shift can be varied independently of its magnitude.  The training
distribution is 2D standard Gaussian $p(x)=\mathcal{N}(0,I_2)$ and the test distribution is obtained
by translating $p$ by a fixed magnitude $\varepsilon=10$ along a rotated version of the base direction $u=(1,1)^\top/\!\sqrt{2}$, producing
$q_\varphi=\mathcal{N}(\varepsilon R_\varphi u,I_2)$, where $R_\varphi$ denotes the rotation matrix by the angle $\varphi$.

The prediction task is $y=x_2-x_1$, which is invariant to translations along
$(1,1)$ and maximally sensitive along $(1,-1)$.  A small ReLU network is trained
under $p$, and we evaluate prediction error, the TASTE functional
$S_f(p,q_\varphi)$, and the log-likelihood under $p$ as $\varphi$ varies.

\begin{figure}[t]
  \centering

  \begin{subfigure}[t]{0.85\linewidth}
    \centering
    \includegraphics[width=0.8\linewidth]{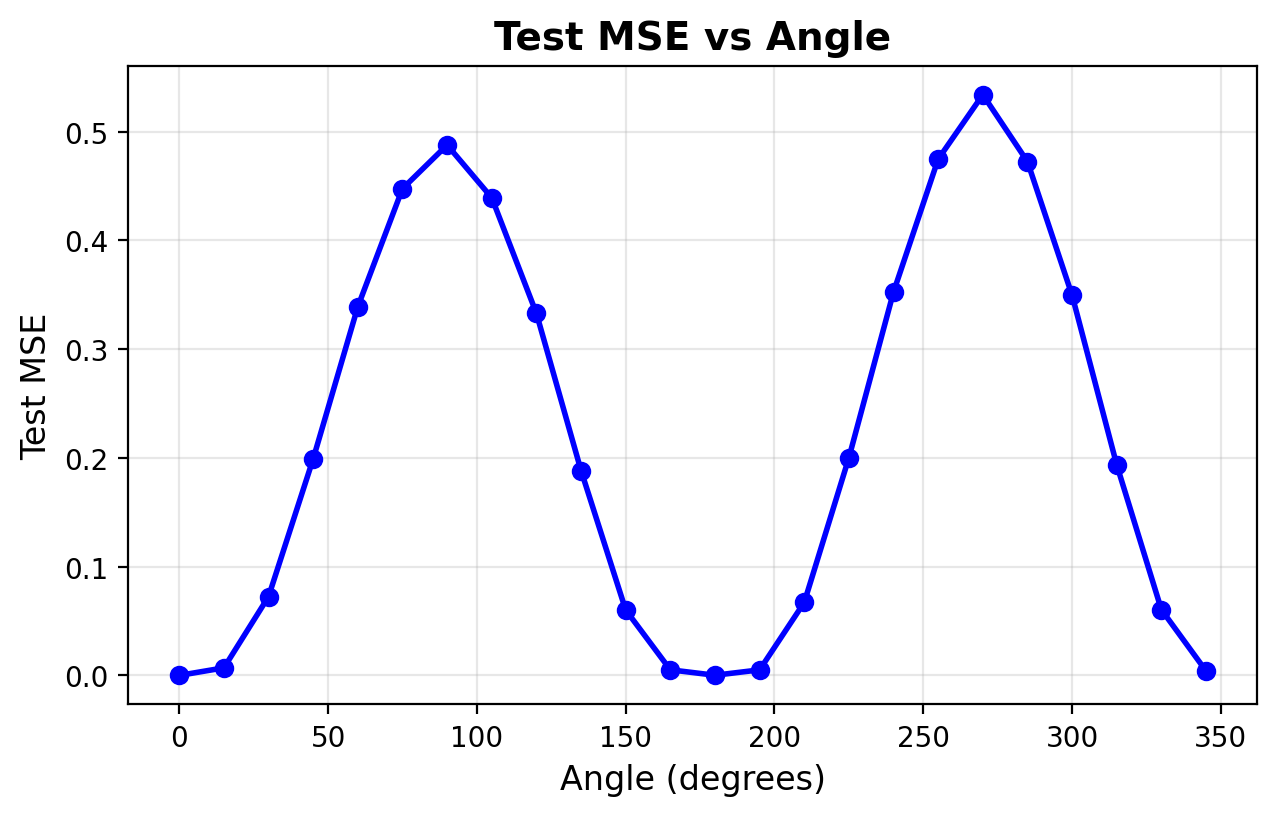}
    \label{fig:metrics_vs_angle}
  \end{subfigure}

  \vspace{0.5em}

  \begin{subfigure}[t]{0.85\linewidth}
    \centering
    \includegraphics[width=0.8\linewidth]{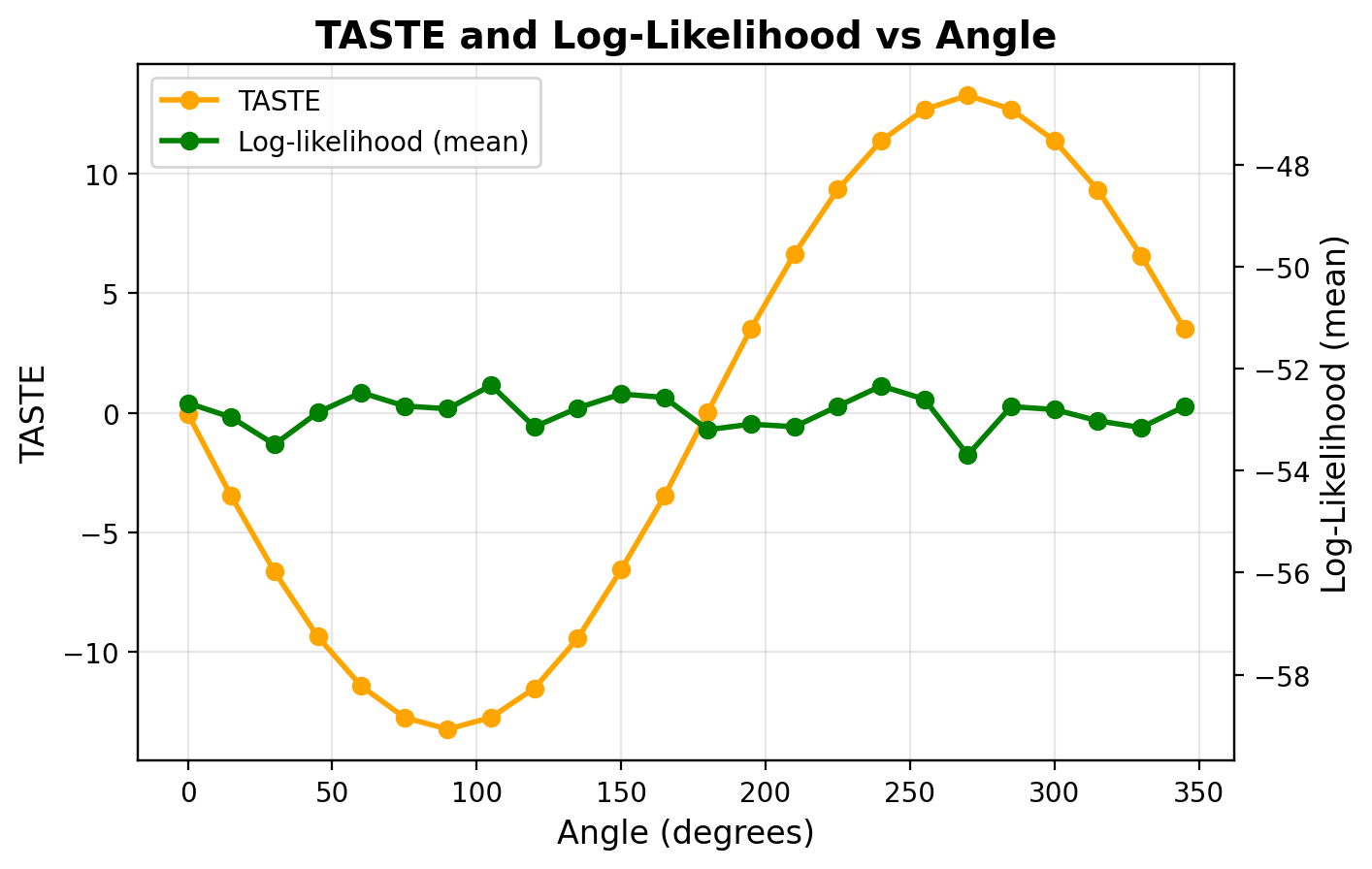}
    \label{fig:stein_vs_loglik}
  \end{subfigure}

  \caption{
    \textbf{Directional shift experiment.}
    Prediction error and the task-aware Stein signal vary strongly with the
    direction of shift, peaking (in magnitude) when the shift aligns with the sensitive
    direction $(1,-1)$.  In contrast, the density-based score remains nearly
    constant for all directions.
  }
  \label{fig:rotation-experiment-lines}
\end{figure}

 Figure~\ref{fig:rotation-experiment-lines} shows how the Stein signal tracks
task degradation precisely: its extrema align with those of the prediction
error, while remaining invariant to shifts that leave the task unchanged.
This confirms that the TASTE functional responds selectively to task-relevant
components of distribution shift.

\subsection{MNIST: invariance to translations, sensitivity to rotations}
\label{sec:mnist-geometric}

We next evaluate the method on MNIST under geometric perturbations.  Convolutional
networks are known to exhibit approximate translation invariance while remaining
sensitive to rotations; we therefore compare translations (largely benign) and
rotations (harmful) as test-time shifts. 
\citep{hendrycksBenchmarkingNeuralNetwork2019,
bronsteinGeometricDeepLearning2021}.

An AlexNet-style CNN \cite{krizhevskyImageNetClassificationDeep2012} is trained on MNIST.  
So that translations do not
remove semantic content, all images are zero-padded from $28\times28$ to
$64\times64$ prior to training and evaluation. A denoising
score-matching model provides an estimate $\hat{s}_p(x)\approx\nabla\log p(x)$.
We compute the adjusted Langevin Stein residual $r_f(x)$ using
Algorithm~\ref{alg:merged-stein-batched-clean}. 
Test sets are constructed using translations of increasing $\ell^\infty$
magnitude and rotations of increasing angle.

\begin{figure}[t]
  \centering

  \begin{subfigure}[t]{0.75\linewidth}
    \centering
    \includegraphics[width=\linewidth]{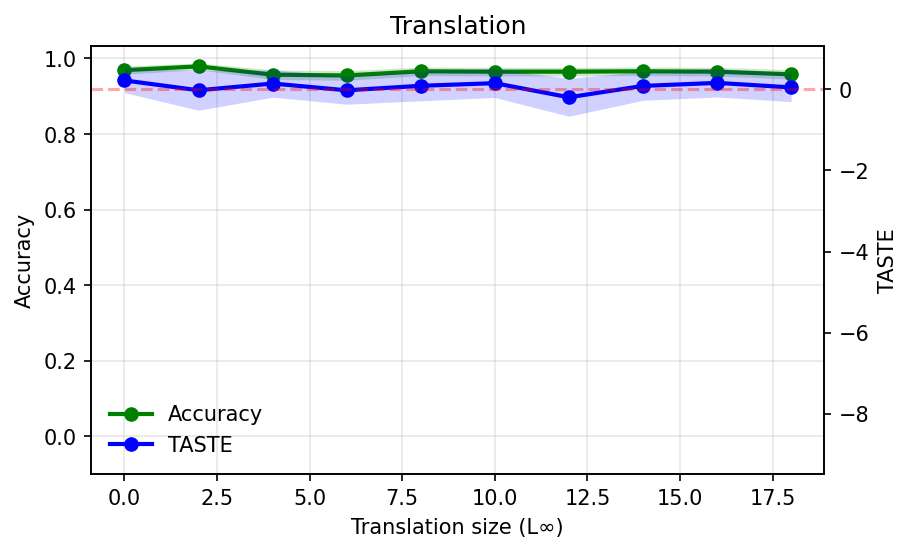}
  \end{subfigure}

  \vspace{0.5em}

  \begin{subfigure}[t]{0.75\linewidth}
    \centering
    \includegraphics[width=\linewidth]{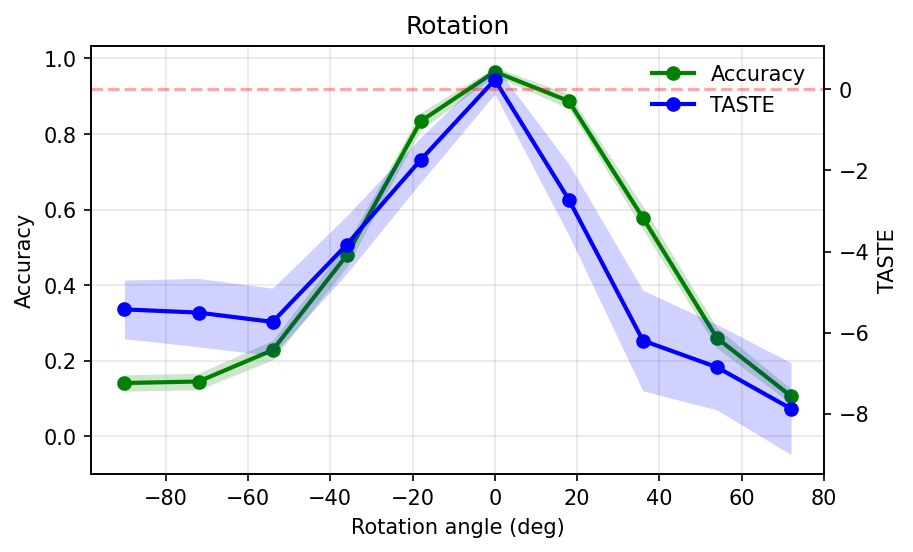}
  \end{subfigure}

  \caption{
    \textbf{MNIST under geometric perturbations.}
    Top: classification accuracy.
    Bottom: corresponding Stein signal.
    Translations (left) leave both accuracy and Stein score unchanged, while rotations (right)
    induce a monotonic increase in the Stein signal aligned with performance
    degradation.
  }
  \label{fig:translation-rotation-cnn}
\end{figure}

Figure~\ref{fig:translation-rotation-cnn} shows that the Stein residual remains
near zero under translations but increases sharply under rotations, closely
mirroring the drop in accuracy.  Likelihood-based scores do not distinguish
between these cases, highlighting the task-awareness of the proposed signal.

Consistent with the per-dimension decomposition introduced in the methodology,
the Stein residual can also be evaluated coordinate-wise on image inputs.  On
MNIST, this yields per-pixel residual maps that localise regions most
incompatible with the training distribution as processed by the classifier.
We defer the visualizations of per-dimension residuals to Appendix~\ref{ap:per-pixel-heatmaps};
additional evaluations using a mixed MNIST / Fashion-MNIST OOD setup are found in  Appendix~\ref{sec:mixed-ood}.

\subsection{Benchmark evaluation on CIFAR-10}\label{sec:benchmark-cifar}

We evaluate the proposed TASTE residual against widely used OOD
detection baselines on CIFAR-10, covering heterogeneous shift regimes including
adversarial perturbations, synthetic corruption benchmarks, and standard OOD
datasets \cite{krizhevskyLearningMultipleLayers,yangGeneralizedOutofDistributionDetection2024}.  The in-distribution data is CIFAR-10,
the classifier is a ResNet-18 \cite{heDeepResidualLearning2015}, and the score model is a
diffusion-based estimator of $\nabla\log p(x)$ \cite{hoDenoisingDiffusionProbabilistic2020}. Implementation and configuration details are deferred to Appendix~\ref{app:experimental-setup}.

\textbf{Benchmarks.}
We group evaluation datasets into four categories:
(i) \emph{adversarial perturbations:} FGSM, PGD, and AutoAttack \cite{goodfellowExplainingHarnessingAdversarial2015,madryDeepLearningModels2019,croceReliableEvaluationAdversarial2020};
(ii) \emph{corruption benchmarks:} CIFAR-10-C, averaged over all corruption
types;
(iii) \emph{perturbation benchmarks:} CIFAR-10-P, averaged over all perturbation
types \cite{hendrycksBenchmarkingNeuralNetwork2019}; and
(iv) \emph{classical OOD datasets:} SVHN, LSUN, iSUN, Textures, and Places365
\cite{netzerReadingDigitsNatural,yuLSUNConstructionLargescale2016,xuTurkerGazeCrowdsourcingSaliency2015,cimpoiDescribingTexturesWild2013,zhouPlacesImageDatabase2016}.
All results are reported using AUROC and false positive rate at 95\% true
positive rate (FPR95), averaged within each category.

\textbf{Baselines.}
We compare against representative confidence-based, distance-based, and
energy-based OOD detectors, including MSP \cite{hendrycksBaselineDetectingMisclassified2018}, ODIN \cite{liangEnhancingReliabilityOutofdistribution2020},
Mahalanobis distance \cite{leeSimpleUnifiedFramework2018}, energy scores \cite{liuEnergybasedOutofdistributionDetection2021},
kNN-based methods \cite{sunOutofDistributionDetectionDeep2022}, and 
Gradient Short-Circuit \cite{guGradientShortCircuitEfficient2025}. 
Details are
provided in Appendix!\ref{app:experimental-setup}.

\begin{table*}[t]
  \centering
  \caption{OOD detection performance on CIFAR-10 across heterogeneous shift regimes.}
  \label{tab:multilevel-example}
  \resizebox{\textwidth}{!}{%
    \begin{tabular}{l
                    cc  
                    cc  
                    cc  
                    cc
                    cc
                    }
      \toprule
      & \multicolumn{2}{c}{\textbf{Adversarial}} 
      & \multicolumn{2}{c}{\textbf{CIFAR-10-C}} 
      & \multicolumn{2}{c}{\textbf{CIFAR-10-P}} 
      & \multicolumn{2}{c}{\textbf{OOD-benchmarks}} 
      & \multicolumn{2}{c}{\textbf{Overall}} \\
      \cmidrule(lr){2-3} \cmidrule(lr){4-5} \cmidrule(lr){6-7} \cmidrule(lr){8-9}\cmidrule(lr){10-11}
      Method & AUROC$\uparrow$ & FPR95$\downarrow$ & AUROC$\uparrow$ & FPR95$\downarrow$ & AUROC$\uparrow$ & FPR95$\downarrow$ & AUROC$\uparrow$ & FPR95$\downarrow$ & AUROC$\uparrow$ & FPR95$\downarrow$\\
      \midrule
      MSP & 0.3575 & 0.9656 & 0.6124 & 0.9109 & 0.5978 & 0.9210 & 0.8032 & 0.7878 & 0.5734 & 0.9127\\
      ODIN & 0.4895 & 0.9436 & 0.5499 & 0.9013 & 0.5478 & 0.9222 & 0.7644 & 0.7756 & 0.5598 & 0.9018 \\
      Mahalonobis & 0.5819 & \textbf{0.8388} & 0.5880 & 0.9181 & 0.5620 & 0.9435 & 0.8188 & 0.7195 & 0.6060 & 0.8846 \\
      Energy & 0.3550 & 0.9681 & \textbf{0.6256} & 0.8900 & \textbf{0.6110} & 0.9113 & \textbf{0.8520} & 0.6683 & 0.5871 & 0.9018 \\
      kNN+ & 0.4645 & 0.9591 & 0.6151 & 0.8977 & 0.5956 & 0.9223 & 0.8288 & 0.7023 & 0.6006 & 0.8957 \\
      GSC & 0.5593 & 0.9250 & 0.6047 & \textbf{0.8580} & 0.5959 & 0.8635 & 0.7429 & 0.7855 & 0.6078 & 0.8662 \\
      TASTE (ours) & \textbf{0.6144} & 0.865 & 0.6193 & 0.8691 & 0.5954 & \textbf{0.8611} & 0.7647 & \textbf{0.5677} & \textbf{0.6285} & \textbf{0.8263} \\
      \bottomrule
    \end{tabular}}
  \vspace{2pt}
\caption*{\footnotesize
Results are averaged within each category.
Adversarial attacks include FGSM, PGD, and AutoAttack.
CIFAR-10-C and CIFAR-10-P report averages over all corruption and perturbation types, respectively.
OOD benchmarks include SVHN, LSUN, iSUN, Textures, and Places365. Detailed decompaction of each data category and more granular results are presented in the Appendix~\ref{app:experimental_results}.
}
\end{table*}

\textbf{Results and benchmark analysis.}
Table~\ref{tab:multilevel-example} reports OOD detection performance across
adversarial attacks, corruption-based shifts (CIFAR-10-C), structured
perturbations (CIFAR-10-P), and classical OOD benchmarks.
Consistent with prior work, no single method dominates across all regimes
\cite{tajwarNoTrueStateoftheArt2021}. 
TASTE exhibits a consistent and competitive profile.
It achieves the strongest overall average performance, with particularly low
FPR95, indicating improved reliability in low–false-positive regimes.
On adversarial attacks—where MSP and ODIN degrade substantially, TASTE
outperforms confidence-, representation-, and distance-based methods, indicating
sensitivity to task-relevant perturbations even when classifier confidence
remains high. On CIFAR-10-C and CIFAR-10-P, performance is competitive with strong
representation-based methods, consistent with the fact that many corruptions
only partially interact with the classifier’s decision boundaries.
On standard OOD benchmarks, Stein residuals achieve the lowest average FPR95,
even when AUROC differences are modest.

\subsection{Per-pixel anomaly detection on MVTec AD}
\label{sec:mvtec-qualitative}

\textbf{Evaluation on MVTec AD.}
We evaluate per-pixel TASTE residuals on the MVTec Anomaly Detection dataset to
assess spatial localisation on high-resolution real-world imagery.
MVTec AD is used for localisation-focused evaluation and ablations, as it
provides pixel-level ground-truth annotations supporting both quantitative
metrics (pixel-level AUROC, average precision, AUPRO) and visual inspection.
Our goal is not to compete with specialised industrial detectors, but to
demonstrate that TASTE produces meaningful, spatially coherent localisation
signals in a fully \emph{zero-shot} setting: neither the classifier nor the score
model is trained or fine-tuned on MVTec AD data.

For each test image, we compute per-pixel Stein heatmaps from the
coordinate-wise operator and compare them to ground-truth defect masks.
Across categories, the heatmaps align spatially with defect regions.
Representative examples are shown in Appendix~\ref{ap:per-pixel-heatmaps}.

\textbf{Quantitative localisation results.}
Next we report standard image- and pixel-level
localisation metrics.
Averaged over all $15$ MVTec categories, TASTE achieves an image-level AUROC of
$0.6527$ and average precision of $0.817$, indicating consistent image-level
separability in a zero-shot regime.
At the pixel level, performance reaches an AUROC of $0.6977$ and an AUPRO of
$0.688$.

Performance varies across category types.
Texture categories exhibit substantially stronger localisation
(pixel-level AUROC $0.8682$, AUPRO $0.8745$), while object categories are more
challenging (AUROC $0.6124$, AUPRO $0.5947$).

Overall, these results confirm that TASTE produces meaningful, spatially
structured anomaly signals without task-specific training.
While not designed to match state-of-the-art industrial detectors, the method
provides a principled zero-shot localisation mechanism whose behaviour is
transparent and consistent with the theoretical analysis.

\section{Discussion and Conclusion}

We introduced {TASTE}, a task-aware OOD detection framework that applies
Stein operators to fixed, deployed predictors.
By coupling model input sensitivity with a learned score of the training
distribution, TASTE yields OOD signals that align with task degradation and admit
interpretable per-sample and per-input decompositions.
Across controlled shifts and standard benchmarks, the method achieves
competitive performance while providing diagnostic capabilities unavailable to
purely confidence-based approaches.

TASTE provides a task-aware lens on distribution shift by explicitly linking 
the input sensitivity of a fixed predictor to the geometry of the training
distribution via Stein operators.
Rather than detecting all deviations from the data distribution, the resulting
residuals respond selectively to shifts that influence 
the model, offering a
useful perspective for reliability analysis and post-deployment monitoring.
This framing is particularly relevant in security-critical settings, where
benign covariate changes and harmful task-level perturbations must be
distinguished.

\textbf{Limitations.}
The approach remains constrained by the quality of the underlying score model,
the cost of estimating differential operators, and the fact that Stein
identities hold in expectation—so pointwise residuals may exhibit nontrivial
variance.  These limitations do not undermine the conceptual framework, but
highlight practical considerations: accuracy of the generative model affects
signal fidelity, differential estimators influence stability, and care is needed
when interpreting per-sample statistics.

\textbf{Choice of Stein operator.}
Different Stein operators encode different geometric notions of compatibility
between a predictor and a data distribution.  The Langevin operator captures
curvature and drift alignment via second-order information, while the
first-order operator and its $L^2$-norm variant emphasise gradient–score
alignment and are computationally lighter.  The choice of operator therefore
reflects what type of interaction between the predictor and the input
distribution one wishes to probe: curvature-based, gradient-based, or score-
weighted smoothness, echoing broader geometric perspectives on representation
and invariance in deep learning \citep{bronsteinGeometricDeepLearning2021}.

\textbf{Reversing the perspective.}
Rather than beginning with a predefined Stein operator and analysing its effect,
one may instead start from a desired interaction between the shift score
$\nabla\log(q/p)$ and some function of the model (e.g.\ its gradient, curvature,
or feature map) and then construct the corresponding Stein operator.  This “design-by-intent’’ viewpoint opens the door to customised task-aware operators tailored to particular vulnerabilities,
architectures, or modalities \citep{mijouleSteinsDensityMethod2023}.

\textbf{Future directions.}
Promising avenues include developing lower-variance and lower-cost
approximations to Stein operators, understanding how to choose operators in a
principled task-dependent way, and extending the framework toward kernelised
Stein discrepancies to capture richer functional classes \citep{mijouleSteinsDensityMethod2023} or operate in feature
space. These directions may further
connect Stein-based diagnostics to established OOD detection pipelines,
adversarial robustness evaluations, and model auditing tools.

Overall, this work positions Stein operators as practical, plug-and-play tools
for task-aware monitoring of deployed models, bridging generative information
about the data distribution with discriminative model sensitivity.
We hope this perspective motivates further exploration of task-aware operator
design, scalable approximations, and applications to robustness, auditing, and
security-critical model monitoring.

\section*{Impact Statement}
This paper presents work whose goal is to advance the field of Machine Learning. There are many potential societal consequences of our work, none which we feel must be specifically highlighted here.

\bibliography{references_v2}
\bibliographystyle{icml2026}

\newpage
\clearpage

\appendix 

\onecolumn


\section{Core methodology: assumptions, proofs and additional Stein identities}
\label{app:stein-proofs}
For convenience, this section of the Appendix 
is structured as follows.

\begin{itemize}[leftmargin=*]
\item Appendix~\ref{app:stein-basics}: Assumptions and proof of the Stein identity \eqref{eq:stein-identity-main} under $p$;
\item Appendix~\ref{app:projection}:
Assumptions and proof of the projection identity Proposition \ref{prop:projection-main} used to interpret $S_f(p,q)$;
\item Appendix~\ref{app:tilt}:
 Assumptions and proof of the controlled small-shift expansion \eqref{eq:tilt-expansion-main} for exponential tilts;
\item Appendix~\ref{app:directional}: Assumptions and proof of the score-model error decomposition \eqref{eq:directional-main};
\item Appendix~\ref{app:score-model-stability-bounds}: Assumptions and proof for the score-model error stability 
\eqref{eq:scoreerror}. 
\end{itemize}

\subsection{Stein identity for the Langevin operator}
\label{app:stein-basics}

Let $f: \R^d \rightarrow \R$ be twice differentiable.
Recall the Langevin Stein operator acting on $f$ by
\[
  \mathcal{L}_p f(x)=\Delta f(x)+(\nabla\log p(x))^\top \nabla f(x)
  \;=\;
  \frac{1}{p(x)}\,\nabla\cdot\big(p(x)\nabla f(x)\big).
\]

\begin{definition}[Stein class for $\mathcal{L}_p$]
\label{def:stein-class-app}
Let $p$ be a continuously differentiable density on $\R^d$. A function
$f:\R^d\to\R$ belongs to the Stein class of $p$ (for $\mathcal{L}_p$), denoted
$f\in\mathcal{F}(p)$, if:
\begin{enumerate}[label=\textnormal{(S\arabic*)}, leftmargin=*]
\item \label{S1}
$f$ is twice continuously differentiable and $\Delta f$, $\nabla f$ are locally
integrable with respect to Lebesgue measure.
\item \label{S2}
The vector field $p(x)\nabla f(x)$ is integrable and its flux over spheres
vanishes:
\[
\lim_{R\to\infty}\int_{\partial B_R} p(x)\nabla f(x)\cdot n(x)\,dS(x)=0,
\]
where $B_R \subset \R^d$ is the Euclidean ball of radius $R$ in $\R^d$, and $n(x)$ is the outward unit normal.
\item \label{S3}
$\mathcal{L}_p f$ is integrable under $p$.
\end{enumerate}
\end{definition}

\medskip
\begin{proposition}[Stein identity]
\label{prop:stein-identity-app}
If $p$ is continuously differentiable and $f\in\mathcal{F}(p)$, then
\[
  \E_p[\mathcal{L}_p f(X)] = 0.
\]
\end{proposition}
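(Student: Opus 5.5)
The plan is to use the divergence form \eqref{eq:langevin-stein-divergence-main}, apply the divergence theorem on balls $B_R$, and let $R\to\infty$ using \ref{S2}.

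\textbf{Step 1: pointwise identity.} Since $p>0$ is $C^1$ and $f$ is $C^2$ by \ref{S1}, the vector field $p\nabla f$ is $C^1$. The product rule then gives, at every $x$,
\[
p(x)\,\mathcal{L}_p f(x)=p(x)\Delta f(x)+\nabla p(x)^\top\nabla f(x)=\nabla\cdot\big(p\nabla f\big)(x).
\]
This uses $s_p=\nabla p/p$.

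\textbf{Step 2: truncation.} By \ref{S3}, $p\,\mathcal{L}_p f\in L^1(\mathbb{R}^d)$. Dominated convergence therefore gives
\[
\E_p[\mathcal{L}_p f(X)]=\lim_{R\to\infty}\int_{B_R}\nabla\cdot(p\nabla f)\,dx .
\]

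\textbf{Step 3: boundary flux.} On each bounded ball the field $p\nabla f$ is $C^1$ up to the boundary. The classical divergence theorem then applies:
\[
\int_{B_R}\nabla\cdot(p\nabla f)\,dx=\int_{\partial B_R}p\,\nabla f\cdot n\,dS .
\]
By \ref{S2} the right-hand side tends to $0$ as $R\to\infty$. Combining this with Step 2 gives $\E_p[\mathcal{L}_p f(X)]=0$.

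The main point needing care is the passage to the limit in Step 2. It is legitimate only because \ref{S3} gives absolute integrability of $p\,\mathcal{L}_p f$. The flux condition alone controls the limit along balls, but it does not make the full integral well defined. Integrability of $p\nabla f$ and local integrability of the derivatives are not needed beyond ensuring that the integrals over each $B_R$ are finite, and that already follows from continuity. No other obstacle is expected.
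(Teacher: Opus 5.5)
Your proof is correct and follows the paper's argument: the divergence form, truncation to balls $B_R$, the divergence theorem on each ball, and \ref{S2} to make the boundary flux vanish. Your use of dominated convergence via \ref{S3} to justify passing from $\int_{\mathbb{R}^d}$ to $\lim_{R\to\infty}\int_{B_R}$ spells out a step that the paper's proof leaves implicit.
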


\begin{proof}
By \eqref{S3}, $\E_p[\mathcal{L}_p f(X)]$ exists. Using the divergence form,
\[
  \E_p[\mathcal{L}_p f(X)]
  = \int_{\R^d} p(x)\mathcal{L}_p f(x)\,dx
  = \int_{\R^d}\nabla\cdot\big(p(x)\nabla f(x)\big)\,dx
  = \lim_{R \rightarrow \infty} \int_{B_R} \nabla\cdot\big(p(x)\nabla f(x)\big)\,dx,
\]
where $B_R$ is the ball of radius $R$, as above.
By the divergence theorem,
\[
  \int_{B_R}\nabla\cdot\big(p\nabla f\big)\,dx
  = \int_{\partial B_R} p(x)\nabla f(x)\cdot n(x)\,dS(x).
\]
Letting $R\to\infty$ and using \eqref{S2} yields
$\int_{\R^d}\nabla\cdot(p\nabla f)\,dx=0$, hence $\E_p[\mathcal{L}_p f]=0$.
\end{proof}

\subsection{Proof of the projection identity (Proposition~\ref{prop:projection-main})}
\label{app:projection}

We prove the identity
\[
S_f(p,q)
=
-\,\E_q\!\Big[
\nabla f(X)^\top\nabla\log\frac{q(X)}{p(X)}
\Big]
\]
under sufficient regularity conditions.

\begin{assumption}[Regularity for the  projection identity]
\label{ass:projection-app}
Let $p,q$ be continuously differentiable densities on $\R^d$ with $q$ absolutely continuous w.r.t. $p$ and
let $l(x)=q(x)/p(x)$. Assume:
\begin{enumerate}[label=\textnormal{(P\arabic*)}, leftmargin=*]
\item \label{P1}
$f\in\mathcal{F}(p)$;
\item \label{P2} $\mathcal{L}_p f \in L^1(q)$;
\item \label{P3}
$l$ is 
differentiable and $p\nabla ( f\,l)$ is integrable.
\item \label{P4}
The boundary flux vanishes for the vector field $p\,l\,\nabla f$:
\[
\lim_{R\to\infty}\int_{\partial B_R} p(x)\,l(x)\,\nabla f(x)\cdot n(x)\,dS(x)=0.
\]
\item \label{P5}
The product $\nabla f^\top \nabla l$ is integrable under Lebesgue measure.
\end{enumerate}
\end{assumption}
For convenience, we give a full statement of Proposition~\ref{prop:projection-main}.

{\bf{Proposition}~\ref{prop:projection-main}}[Projection identity]
{\it Let $p, q$ and $f$ satisfy \ref{P1}-- \ref{P5}. Then 
\[
  S_f(p,q)
  =
  -\,\E_q\!\Big[
    \nabla f(X)^\top \nabla\log\frac{q(X)}{p(X)}
  \Big].
\] 

}
\begin{proof}[Proof of Proposition~\ref{prop:projection-main}]
Let $p,q$ and $f$ be such that \ref{P1} -- \ref{P4} are satisfied. From \ref{P1}, $\mathcal{L}_p f$ exists, and from \ref{P2}, $S_f(p,q)$ exists. Substituting $q=lp$ in the divergence form \eqref{eq:langevin-stein-divergence-main} gives 
\[
S_f(p,q)
= \int_{\R^d} q(x)\,\mathcal{L}_p f(x)\,dx
= \int_{\R^d} l(x)\,\nabla\cdot\big(p(x)\nabla f(x)\big)\,dx.
\]
Integrating by parts, which is justified by \ref{P3} and \ref{P5}, using the product rule in the divergence form yields 
\[
\int_{\R^d} l\,\nabla\cdot(p\nabla f)\,dx
=
\int_{\R^d}\nabla\cdot(l\,p\nabla f)\,dx
-
\int_{\R^d} p\,\nabla f^\top \nabla l\,dx.
\]
By \ref{P4} and the divergence theorem, the first term on the right-hand side vanishes; hence
\[
S_f(p,q) = -\int_{\R^d} p(x)\,\nabla f(x)^\top \nabla l(x)\,dx.
\]
Now we  use $\nabla l = l\,\nabla\log l$ to obtain
\[
S_f(p,q)
= -\int_{\R^d} p(x)\,l(x)\,\nabla f(x)^\top \nabla\log l(x)\,dx
= -\int_{\R^d} q(x)\,\nabla f(x)^\top \nabla\log\frac{q(x)}{p(x)}\,dx,
\]
which is the desired identity.
\end{proof}

\subsection{Exponential-tilt expansions and remainder control}
\label{app:tilt}

We derive the small-$\varepsilon$ expansions used in
\eqref{eq:tilt-expansion-main}. First, we formally state the result.

\begin{proposition}
Let $p$ be a continuously differentiable probability density on $\mathbb{R}^d$, and let $q_\varepsilon(x)$
be its exponential tilt by a measurable function $h$
\[
q_\varepsilon(x)
= \frac{1}{Z_\varepsilon}\, p(x)\, e^{\varepsilon h(x)}.
\]
Assume the following:
\begin{enumerate}[label=\textnormal{(\roman*)}, leftmargin=*]
    \item There exists $\delta>0$ such that
    $\E_p\!\big[e^{\gamma |h(X)|}\big]<\infty$ for all $|\gamma|<\delta$.
    \item The Langevin Stein operator satisfies
    $\mathcal{L}_p f \in L^2(p)$.
\end{enumerate}
Then, for all positive $\varepsilon$ such that $\varepsilon<\delta/4$, the following expansions
hold:
\begin{align}
S_f(p,q_\varepsilon)
&= \varepsilon\, \mathrm{Cov}_p\!\bigl(\mathcal{L}_p f(X),\, h(X)\bigr)
  + O(\varepsilon^2),
\label{eq:stein-prop-expansion}
\\[4pt]
\Var_{q_\varepsilon}[\mathcal{L}_p f(X)]
&= \Var_p[\mathcal{L}_p f(X)]
\nonumber\\
&\quad
 + \varepsilon\, \mathrm{Cov}_p\!\bigl((\mathcal{L}_p f(X))^2,\, h(X)\bigr)
 + O(\varepsilon^2).
\label{eq:stein-prop-variance}
\end{align}
\end{proposition}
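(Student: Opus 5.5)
The plan is to write every $q_\varepsilon$-expectation as a reweighted $p$-expectation, $\E_{q_\varepsilon}[\phi(X)] = \E_p[\phi(X)e^{\varepsilon h(X)}]/Z_\varepsilon$ with $Z_\varepsilon=\E_p[e^{\varepsilon h(X)}]$. We then Taylor-expand $e^{\varepsilon h}$ to first order with an explicit remainder. Throughout we use the Stein identity, Proposition~\ref{prop:stein-identity-app}, in the form $\E_p[\mathcal{L}_p f(X)]=0$, which needs $f\in\mathcal{F}(p)$. This standing assumption is implicit in the statement, and I will state it explicitly.

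\textbf{Normaliser and remainder.} The elementary bound $|e^{y}-1-y|\le \tfrac12 y^2 e^{|y|}$ gives
\[
e^{\varepsilon h}=1+\varepsilon h+R_\varepsilon,\qquad |R_\varepsilon|\le \tfrac12\varepsilon^2 h^2 e^{\varepsilon|h|}.
\]
Fix $\eta>0$ with $2\varepsilon+\eta<\delta$, which is possible since $\varepsilon<\delta/4$. Since $h^4\le C_\eta e^{\eta|h|}$, assumption (i) makes every moment $\E_p[|h|^k e^{2\varepsilon|h|}]$ with $k\le 4$ finite. In particular $Z_\varepsilon=1+\varepsilon\E_p[h]+O(\varepsilon^2)$, so $Z_\varepsilon^{-1}=1-\varepsilon\E_p[h]+O(\varepsilon^2)$ and $Z_\varepsilon^{-1}=1+O(\varepsilon)$.

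\textbf{First expansion.} We have $S_f(p,q_\varepsilon)=Z_\varepsilon^{-1}\E_p[\mathcal{L}_pf\,(1+\varepsilon h+R_\varepsilon)]$. The zeroth-order term vanishes by the Stein identity. The first-order term is $\varepsilon\E_p[\mathcal{L}_pf\,h]$, which equals $\varepsilon\Cov_p(\mathcal{L}_pf,h)$ precisely because $\E_p[\mathcal{L}_pf]=0$. For the remainder, Cauchy--Schwarz and (ii) give
\[
|\E_p[\mathcal{L}_pf\,R_\varepsilon]|\le \tfrac12\varepsilon^2\|\mathcal{L}_pf\|_{L^2(p)}\,\big(\E_p[h^4e^{2\varepsilon|h|}]\big)^{1/2}=O(\varepsilon^2).
\]
Multiplying by $Z_\varepsilon^{-1}=1+O(\varepsilon)$ only adds $O(\varepsilon^2)$, which yields \eqref{eq:stein-prop-expansion}.

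\textbf{Variance expansion.} Write $\Var_{q_\varepsilon}[\mathcal{L}_pf]=\E_{q_\varepsilon}[(\mathcal{L}_pf)^2]-S_f(p,q_\varepsilon)^2$. The subtracted term is $O(\varepsilon^2)$ by the first expansion. For the first term,
\[
\E_{q_\varepsilon}[(\mathcal{L}_pf)^2]=Z_\varepsilon^{-1}\big(\E_p[(\mathcal{L}_pf)^2]+\varepsilon\E_p[(\mathcal{L}_pf)^2h]+\E_p[(\mathcal{L}_pf)^2R_\varepsilon]\big).
\]
Inserting $Z_\varepsilon^{-1}=1-\varepsilon\E_p h+O(\varepsilon^2)$, the order-$\varepsilon$ part is $\varepsilon\big(\E_p[(\mathcal{L}_pf)^2h]-\E_p[(\mathcal{L}_pf)^2]\E_p[h]\big)=\varepsilon\Cov_p((\mathcal{L}_pf)^2,h)$. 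Using $\E_p[(\mathcal{L}_pf)^2]=\Var_p[\mathcal{L}_pf]$ (Stein identity again) gives \eqref{eq:stein-prop-variance}, provided the remainder terms are $O(\varepsilon^2)$.

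\textbf{Main obstacle.} The hard step is controlling $\E_p[(\mathcal{L}_pf)^2h]$ and $\E_p[(\mathcal{L}_pf)^2R_\varepsilon]$. With only $\mathcal{L}_pf\in L^2(p)$, the function $(\mathcal{L}_pf)^2$ is merely in $L^1(p)$, and Cauchy--Schwarz against $h^2e^{\varepsilon|h|}$ is unavailable. The first thing I would try is to note that (ii) suffices for the mean expansion but apparently not for the variance. I would therefore prove \eqref{eq:stein-prop-variance} under the mild strengthening $\mathcal{L}_pf\in L^4(p)$, or alternatively $\mathcal{L}_pf\in L^{2+\kappa}(p)$ combined with Hölder and the exponential moments of $h$. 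Under this strengthening the bound $\tfrac12\varepsilon^2\|\mathcal{L}_pf\|_{L^4(p)}^2(\E_p[h^4e^{2\varepsilon|h|}])^{1/2}$ closes the argument exactly as in the first expansion. If the authors intend the variance statement under (ii) alone, I would look for a truncation argument for it, but I expect the $L^4$ route to be the honest one.
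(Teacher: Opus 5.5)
Your proposal follows the paper's proof essentially step for step. The shared ingredients are the reweighting by $e^{\varepsilon h}/Z_\varepsilon$, the remainder bound $|R(y)|\le\tfrac12 y^2e^{|y|}$, Cauchy--Schwarz against $h^2e^{\varepsilon|h|}$, the Stein identity (which the paper also uses without listing $f\in\mathcal{F}(p)$), and $\Var=\E[g^2]-(\E g)^2$; your treatment of the normaliser is cleaner than the paper's, which multiplies by an expansion of $\E_p[e^{-\varepsilon h}]$ as if it were $1/Z_\varepsilon$, when only the first-order terms of the two coincide.

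The obstacle you flag is a genuine gap in the paper, whose ``same argument with $g^2$'' silently requires $\|\mathcal{L}_pf\|_{L^4(p)}<\infty$, and no truncation argument can close it. Take $d=1$, $p=\phi$ standard Gaussian, $h(x)=x$, and $f\in\mathcal{F}(p)$ solving $\mathcal{L}_pf=g:=\tanh(x)\,(1+x^2)^{-1/2}\phi(x)^{-1/2}$, which is possible since $g$ is odd and $g\phi$ has Gaussian tails. Then $g\in L^2(p)$, yet $\E_{q_\varepsilon}[g^2]\propto\int\tanh^2(x)\,e^{\varepsilon x}(1+x^2)^{-1}\,dx=\infty$ for every $\varepsilon>0$, so your $L^4$ (or $L^{2+\kappa}$) strengthening is genuinely necessary for the variance expansion.
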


\begin{proof}
Our goal is to expand $\E_{q_\varepsilon}[g(X)]$ and
$\Var_{q_\varepsilon}[g(X)]$ for small~$\varepsilon$. We proceed in several steps.

\textbf{Expansion of \texorpdfstring{$\E_{q_\varepsilon}[g(X)]$}{Eq[g(X)]}.}Let $g(X)=\mathcal{L}_p f(X)$, which is square-integrable under $p$ by assumption.
By definition of the exponential tilt,
\begin{equation}
    \label{proof:first}
\E_{q_\varepsilon}[g(X)]
= \frac{\E_p[g(X)e^{\varepsilon h(X)}]}{\E_p[e^{\varepsilon h(X)}]}.
\end{equation}

Under the assumptions stated in the proposition, we shall prove below that 
\begin{equation}\label{eq:expansion_to_prove} 
\E_p[g(X)e^{\varepsilon h(X)}] = \E_p[g(X)] + \varepsilon\E_p[g(X)h(X)] + O(\varepsilon^2), 
\end{equation}
and similarly,
\begin{equation}\label{eq:simple-expansion}
\E_p[e^{-\varepsilon h(X)}]
= 1 - \varepsilon\,\E_p[h(X)] + O(\varepsilon^2).
\end{equation}
For now we assume that these expansions hold, in order 
to preserve to flow of the argument. The detailed derivation is deferred to the end of this subsection. 
Multiplying the numerator and denominator expansions yields with \eqref{proof:first} that
\[
\E_{q_\varepsilon}[g(X)]
= \E_p[g(X)]
+ \varepsilon\Bigl(\E_p[g(X)h(X)] - \E_p[g(X)]\,\E_p[h(X)]\Bigr)
+ O(\varepsilon^2),
\]
which is exactly
\[
\E_{q_\varepsilon}[g(X)]
= \E_p[g(X)]
+ \varepsilon\,\mathrm{Cov}_p(g(X),h(X))
+ O(\varepsilon^2).
\]

Applying the same argument with $g(X)^2$ in place of $g(X)$ yields
\[
\E_{q_\varepsilon}[g(X)^2]
= \E_p[g(X)^2]
+ \varepsilon\,\mathrm{Cov}_p(g(X)^2,h(X))
+ O(\varepsilon^2).
\]


\textbf{Specialisation to the Stein quantity \texorpdfstring{$g=\mathcal L_p f$}{g=Lpf}.}\label{sec:expansion-derivation}
Plugging in $g=\mathcal L_p f$ we obtain the deterministic expansions
\[
\E_{q_\varepsilon}[\mathcal L_p f(X)]
= \varepsilon\,\mathrm{Cov}_p(\mathcal L_p f(X), h(X))
  + O(\varepsilon^2),
\]
and
\[
\E_{q_\varepsilon}[(\mathcal L_p f(X))^2]
= \E_p[(\mathcal L_p f(X))^2]
  + \varepsilon\,\mathrm{Cov}_p((\mathcal L_p f(X))^2, h(X))
  + O(\varepsilon^2).
\]
Since the mean is $O(\varepsilon)$, its square is $O(\varepsilon^2)$, hence
\[
\Var_{q_\varepsilon}[\mathcal L_p f(X)]
=
\Var_p[\mathcal L_p f(X)]
+ \varepsilon\,\mathrm{Cov}_p((\mathcal L_p f(X))^2, h(X))
+ O(\varepsilon^2).
\]

\textbf{Proof of First-Order Expansion of $\mathbb{E}[g e^{\varepsilon h}]$.}

It now remains to prove the expressions \eqref{eq:expansion_to_prove} and \eqref{eq:simple-expansion} used above. We begin with the Taylor expansion of the exponential function $f(y) = e^y$ around $y=0$. For any real scalar $y \in \mathbb{R}$, we have:
\[
e^y = 1 + y + R(y),
\]
where the remainder term $R(y)$ is bounded by the Lagrange error bound:
\begin{equation} \label{eq:lagrange}
|R(y)| \leq \frac{y^2}{2} e^{|y|}.
\end{equation}
For any fixed vector $x$, let $y = \varepsilon h(x)$. Substituting this into the expression above and multiplying by $g(x)$, we obtain:
\[
g(x) e^{\varepsilon h(x)} = g(x)\Big(1 + \varepsilon h(x) + R(\varepsilon h(x))\Big) = g(x) + \varepsilon g(x)h(x) + g(x) R(\varepsilon h(x)).
\]
Taking the expectations:
\begin{equation} \label{eq:expansion}
\mathbb{E}_p[g(x) e^{\varepsilon h(x)}] = \mathbb{E}_p[g(x)] + \varepsilon \mathbb{E}_p[g(x)h(x)] + \mathbb{E}_p[g(x) R(\varepsilon h(x))].
\end{equation}
We next show that the remainder term $E_{\text{err}} = \mathbb{E}_p[g(x) R(\varepsilon h(x))]$ satisfies $|E_{\text{err}}| \leq C \varepsilon^2$ for some constant $C$. Using the bound from \eqref{eq:lagrange}:
\[
|E_{\text{err}}| \leq \mathbb{E}_p\left[|g(x)| \cdot |R(\varepsilon h(x))|\right] \leq \mathbb{E}_p\left[ |g(x)| \frac{\varepsilon^2 h(x)^2}{2} e^{\varepsilon |h(x)|} \right].
\]
Factoring out constants:
\[
|E_{\text{err}}| \leq \frac{\varepsilon^2}{2} \mathbb{E}_p\left[ |g(x)| h(x)^2 e^{\varepsilon |h(x)|} \right].
\]
We now apply the Cauchy-Schwarz inequality to bound the expectation:
\[
\mathbb{E}_p\left[ |g(x)| \left( h(x)^2 e^{\varepsilon |h(x)|} \right) \right] \leq \lVert g \rVert_{L^2(p)} \cdot \lVert h^2 e^{\varepsilon |h|} \rVert_{L^2(p)}.
\]
We note that $\lVert g \rVert_{L^2(p)}$ is finite by assumption.
It remains to show that the second term is bounded. The norm is given by:
\[
\lVert h^2 e^{\varepsilon |h|} \rVert_{L^2(p)} = \left( \mathbb{E}_p\left[ h(x)^4 e^{2\varepsilon |h(x)|} \right] \right)^{1/2}.
\]
By the exponential-integrability assumption, the moment generating function of
the scalar random variable $|h(X)|$ exists in a neighbourhood of zero, which in
particular implies the finiteness of all polynomial moments of $h$ under $p$.
Moreover, exponential integrability allows control of mixed
polynomial–exponential moments.  Fix $\varepsilon>0$ sufficiently small, for
instance $\varepsilon<\delta/4$.  Applying the Cauchy--Schwarz inequality, we
obtain
\[
\mathbb{E}_p\!\big[h(X)^4 e^{2\varepsilon|h(X)|}\big]
\le
\big(\mathbb{E}_p[h(X)^8]\big)^{1/2}
\big(\mathbb{E}_p[e^{4\varepsilon |h(X)|}]\big)^{1/2}.
\]
Since $\varepsilon<\delta/4$, the exponential-integrability assumption on $h$ implies
$\mathbb{E}_p[e^{4\varepsilon |h(X)|}]<\infty$, and exponential integrability
also ensures $\mathbb{E}_p[h(X)^8]<\infty$.  Consequently, the mixed moment
$\mathbb{E}_p[h(X)^4 e^{2\varepsilon|h(X)|}]$ is finite for all
$\varepsilon<\delta/4$, justifying the uniform remainder bound used in \eqref{eq:expansion_to_prove}. Expansion \eqref{eq:simple-expansion} follows 
similarly by using the expansion $e^{-\epsilon h(x)} = 1 - \epsilon h(x) + \tilde{R}(\epsilon h(x)).$

For completeness, we state the explicit bound. Let $K = \lVert g \rVert_{L^2(p)} \cdot \lVert h^2 e^{\varepsilon |h|} \rVert_{L^2(p)} < \infty$. Then:
\[
|E_{\text{err}}| \leq \frac{K}{2} \varepsilon^2.
\]

Thus, $E_{\text{err}} = O(\varepsilon^2)$, completing the proof.
\end{proof}

\subsection{Directional score-model error decomposition}
\label{app:directional}

Here we provide the full derivation of \eqref{eq:directional-main} and the
tilt-specialisation used in the main text.

Let $\tilde p$ be a learned density model with score $\tilde s(x)=\nabla\log\tilde p(x)$.
Define
\[
  \mathcal{L}_{\tilde p} f(x)
  :=
  \Delta f(x) + \tilde s(x)^\top \nabla f(x),
  \qquad
  g(x):=(\tilde s(x)-s_p(x))^\top \nabla f(x),
  \qquad
  l(x):=\frac{q(x)}{p(x)}.
\]

\begin{proposition}[Directional decomposition]
\label{prop:directional-app}
Assume that $p$ is a continuously differentiable probability density, that $q$ is absolutely continuous w.r.t. $p$, that $f\in\mathcal{F}(p) \cap \mathcal{F}(q)$, and that $\mathcal{L}_{\tilde p}f$ and $g$ are integrable
under both $p$ and $q$. Then
\[
  \E_q[\mathcal{L}_{\tilde p} f]
  =
  \E_p[\mathcal{L}_{\tilde p} f]
  +
  S_f(p,q)
  +
  \E_p[g(X)(l(X)-1)].
\]
\end{proposition}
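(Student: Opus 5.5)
The identity is purely algebraic once the approximate operator is split into the exact one plus an error. I would begin from the pointwise decomposition
\[
\mathcal{L}_{\tilde p} f(x) = \mathcal{L}_p f(x) + g(x),
\]
which is immediate from the definitions since $\tilde s^\top\nabla f = s_p^\top\nabla f + g$. Because $\mathcal{L}_{\tilde p}f$ and $g$ are integrable under both $p$ and $q$, so is their difference $\mathcal{L}_p f$. Linearity of expectation can therefore be applied under each measure separately:
\[
\E_q[\mathcal{L}_{\tilde p} f] = \E_q[\mathcal{L}_p f] + \E_q[g],
\qquad
\E_p[\mathcal{L}_{\tilde p} f] = \E_p[\mathcal{L}_p f] + \E_p[g].
\]

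Next I would identify each term. By definition \eqref{eq:stein-functional-def-main}, $\E_q[\mathcal{L}_p f]=S_f(p,q)$. Since $f\in\mathcal{F}(p)$, Proposition~\ref{prop:stein-identity-app} gives $\E_p[\mathcal{L}_p f]=0$, and hence $\E_p[\mathcal{L}_{\tilde p} f]=\E_p[g]$.

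For the remaining term I use absolute continuity of $q$ with respect to $p$, writing $q=lp$. This gives $\E_q[g]=\E_p[g\,l]$, with integrability following from $g\in L^1(q)$. Subtracting then yields
\[
\E_q[\mathcal{L}_{\tilde p} f]-\E_p[\mathcal{L}_{\tilde p} f] = S_f(p,q) + \E_p[g\,l]-\E_p[g] = S_f(p,q)+\E_p[g(l-1)].
\]
Rearranging gives the claim.

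The only delicate point is bookkeeping of integrability, so that no expression of the form $\infty-\infty$ appears when splitting expectations. The stated hypotheses cover exactly this. The assumption $f\in\mathcal{F}(q)$ is not actually needed beyond guaranteeing that $\mathcal{L}_p f\in L^1(q)$, and even that already follows from the difference argument above.
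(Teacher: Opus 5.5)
Your proof is correct and follows the paper's own argument: split $\mathcal{L}_{\tilde p} f = \mathcal{L}_p f + g$, remove $\E_p[\mathcal{L}_p f]$ with the Stein identity under $p$, recognise $\E_q[\mathcal{L}_p f]=S_f(p,q)$, and rewrite $(\E_q-\E_p)[g]=\E_p[g(l-1)]$ using $q=lp$. Your integrability bookkeeping is more careful than the paper's: you deduce $\mathcal{L}_p f\in L^1(p)\cap L^1(q)$ as a difference of integrable functions. You are also right that $f\in\mathcal{F}(q)$ is superfluous, although strictly that hypothesis concerns $\mathcal{L}_q f$, not $\mathcal{L}_p f$.
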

\begin{proof}
We begin with the decomposition
\[
  \mathcal{L}_{\tilde p} f
  = \mathcal{L}_p f
    + (\tilde s - \nabla\log p)^\top \nabla f,
\]
which valid for all $f\in\mathcal{F}(p) \cap \mathcal{F}(q)$.
Taking expectations under $q$ and $p$ and subtracting gives
\[
  \E_q[\mathcal{L}_{\tilde p} f]
  - \E_p[\mathcal{L}_{\tilde p} f]
  =
  \E_q[\mathcal{L}_p f]
  + \Bigl(\E_q - \E_p\Bigr)
       \bigl[(\tilde s - \nabla\log p)^\top\nabla f \bigr].
\]
Here we used the Stein identity,  Proposition \ref{prop:stein-identity-app}. The first term on the right-hand side is $S_f(p,q)$ by definition.
For the second term, write $q(x)=l(x)p(x)$ with $l=q/p$, giving
\[
  \Bigl(\E_q - \E_p\Bigr)[g(X)]
  = \E_p[g(X)(l(X)-1)],
\]
where $g(x)=(\tilde s(x)-\nabla\log p(x))^\top\nabla f(x)$.
This yields \eqref{eq:directional-main}.

For the exponential-tilt case $q_\varepsilon(x)\propto p(x)e^{\varepsilon h(x)}$,
the expansion follows directly from the
change-of-measure expansion proved in Appendix~\ref{app:tilt}, combined with the fact that
$r_\varepsilon(x)=1+\varepsilon\,(h(x)-\E_p[h]) + O(\varepsilon^2)$, which can be established using the same technique as in Appendix~\ref{app:tilt}.
This yields
\[
  \E_p[g(X)(r_\varepsilon(X)-1)]
  = \varepsilon\,\Cov_p(g,h) + O(\varepsilon^2).
\]
\end{proof}

\subsection{Score-model error decomposition and bound}

\label{app:score-model-stability-bounds}
\begin{proposition}[Fisher-controlled stability to score-model error]
\label{lem:fisher-stability}
Let $p$ be the continuously differentiable training density on $\mathbb{R}^d$, let $\tilde p$ be a learned
density with score $\tilde s = \nabla \log \tilde p$, and let $q$ be a test
density such that $q \ll p$ with density ratio $l(x) = q(x)/p(x)$.
Let $s_p = \nabla \log p$ and define
\[
  g(x)
  := \bigl(\tilde s(x) - s_p(x)\bigr)^\top \nabla f(x).
\]
Assume $\tilde s - s_p \in L^2(p)$,  $\nabla f \in L^4(p)$, and $l \in L^4(p)$.
Then the score-model error term satisfies
\[
  \bigl|\langle g,\ l-1\rangle_{L^2(p)}\bigr|
  \;\le\;
  \sqrt{J(p\|\tilde p) }\;
  \|\nabla f\|_{L^4(p)}\;
  \|l-1\|_{L^4(p)},
\]
where
\[
  J(p\|\tilde p)
  := \E_p\bigl[\|\tilde s(X) - s_p(X)\|^2\bigr]
\]
is the Fisher divergence of $\tilde p$ from $p$.
\end{proposition}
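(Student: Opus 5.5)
The plan is to apply Hölder's inequality in $L^2(p)$ followed by Cauchy--Schwarz, arranged so that the score error appears squared (giving the Fisher divergence) while $\nabla f$ and $l-1$ each appear to the fourth power. Write
\[
\langle g,\ l-1\rangle_{L^2(p)} = \E_p\bigl[(\tilde s(X)-s_p(X))^\top \nabla f(X)\,(l(X)-1)\bigr].
\]
First, apply the pointwise Cauchy--Schwarz inequality in $\R^d$, $|(\tilde s - s_p)^\top\nabla f| \le \|\tilde s - s_p\|\,\|\nabla f\|$. This gives
\[
\bigl|\langle g,\ l-1\rangle_{L^2(p)}\bigr| \le \E_p\bigl[\|\tilde s - s_p\|\,\|\nabla f\|\,|l-1|\bigr].
\]

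Second, apply the three-function Hölder inequality with exponents $(2,4,4)$, valid since $\tfrac12+\tfrac14+\tfrac14=1$. Equivalently, use Cauchy--Schwarz once to separate $\|\tilde s - s_p\|$ from the product $\|\nabla f\|\,|l-1|$, then again on $\E_p[\|\nabla f\|^2 |l-1|^2]$. This yields
\[
\E_p\bigl[\|\tilde s - s_p\|^2\bigr]^{1/2}\,\E_p\bigl[\|\nabla f\|^4\bigr]^{1/4}\,\E_p\bigl[|l-1|^4\bigr]^{1/4}.
\]
The first factor is exactly $\sqrt{J(p\|\tilde p)}$. The second is $\|\nabla f\|_{L^4(p)}$, interpreted with the Euclidean norm of the vector field. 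The third is $\|l-1\|_{L^4(p)}$.

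Third, check integrability, which is the only point needing care. $\tilde s - s_p\in L^2(p)$ makes $J$ finite, and $\nabla f\in L^4(p)$ is assumed. From $l\in L^4(p)$ and $p$ being a probability measure, Minkowski gives $\|l-1\|_{L^4(p)}\le \|l\|_{L^4(p)}+1<\infty$. The same Hölder bound shows $g\,(l-1)\in L^1(p)$, so the inner product is well defined and the manipulation is justified.

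There is no genuine obstacle. The main thing to get right is choosing the exponents so that the score error enters only through $L^2(p)$, matching the Fisher divergence.
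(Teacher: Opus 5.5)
Your proposal is correct and follows essentially the same route as the paper: it uses pointwise Cauchy--Schwarz in $\R^d$, then Cauchy--Schwarz in $L^2(p)$ to isolate the score error as $\sqrt{J(p\|\tilde p)}$, then Hölder with exponents $(2,2)$ on $\E_p[\|\nabla f\|^2|l-1|^2]$. Your explicit integrability check ($\|l-1\|_{L^4(p)}\le\|l\|_{L^4(p)}+1$ and absolute integrability of $g\,(l-1)$) is a small addition that the paper leaves implicit.
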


\begin{proof}
By definition,
\[
  \langle g,\ l-1\rangle_{L^2(p)}
  =
  \E_p\!\left[(\tilde s(X)-s_p(X))^\top \nabla f(X)\,(l(X)-1)\right].
\]
Apply Cauchy--Schwarz in $L^2(p)$ by grouping the score-model error separately:
\[
\begin{aligned}
\bigl|\E_p[g(X)(l(X)-1)]\bigr|
&\le
\Bigl(\E_p\|\tilde s(X)-s_p(X)\|^2\Bigr)^{1/2}
\Bigl(\E_p\bigl[\|\nabla f(X)\|^2(l(X)-1)^2\bigr]\Bigr)^{1/2}.
\end{aligned}
\]
The first factor equals $\sqrt{J(p\|\tilde p)}$ by definition.
For the second factor, apply Hölder's inequality with conjugate exponents $2$ and $2$:
\[
  \E_p\bigl[\|\nabla f(X)\|^2(l(X)-1)^2\bigr]
  \le
  \bigl(\E_p\|\nabla f(X)\|^{4}\bigr)^{1/2}
  \bigl(\E_p|l(X)-1|^{4}\bigr)^{1/2}.
\]
Taking square roots of both sides gives
\[
  \Bigl(\E_p\bigl[\|\nabla f(X)\|^2(l(X)-1)^2\bigr]\Bigr)^{1/2}
  \le
  \bigl(\E_p\|\nabla f(X)\|^{4}\bigr)^{1/4}
  \bigl(\E_p|l(X)-1|^{4}\bigr)^{1/4}
  =
  \|\nabla f\|_{L^4(p)}\,\|l-1\|_{L^4(p)}.
\]
Combining the bounds yields
\[
  \bigl|\E_p[g(X)(l(X)-1)]\bigr|
  \le
  \sqrt{J(p\|\tilde p)}\;
  \|\nabla f\|_{L^4(p)}\;
  \|l-1\|_{L^4(p)},
\]
as claimed.
\end{proof}

\noindent\textbf{Remark.}
The same argument applies with any Hölder pair $p,q>1$ satisfying
$1/p+1/q=1$, yielding bounds involving higher-order $L^{2p}(p)$ and
$L^{2q}(p)$ norms of $\nabla f$ and the density ratio $l-1$, respectively.

\section{Experimental setup and implementation details}
\label{app:experimental-setup}

This appendix provides implementation details for all experiments reported in
Section~\ref{sec:experiments}, including model architectures, training
procedures, score estimation, and evaluation protocols. All experiments are conducted in a fully \emph{unsupervised} setting:
neither the classifier nor the score model is trained or fine-tuned on any OOD data. We use Adam \citep{kingmaAdamMethodStochastic2017} for all experiments.

\subsection{Controlled 2D directional shift experiment}
\label{app:2d-setup}

\textbf{Data generation.}
Training samples are drawn from the standard 2-dimensional Gaussian
$p(x)=\mathcal{N}(0,I_2)$. Test distributions are obtained by translating $p$ by
a fixed magnitude $\varepsilon=10$ along directions
$\varepsilon R_\varphi u$, where $u=(1,1)^\top/\sqrt{2}$ and
$\varphi\in[0,2\pi)$.
Each test set contains $1000$ samples.

\textbf{Model.}
The model is a fully connected ReLU network with a single hidden layer of size $64$. The model is trained on samples from $p$ using
learning rate $0.001$, for $100$ epochs.

\textbf{Stein quantities.}
Since $p$ is known analytically, the score
$s_p(x)=-\;x$ is used directly. Adjusted Stein residuals $r_f(x)$ are computed using
Algorithm~\ref{alg:merged-stein-batched-clean} and averaged across each test set to obtain an estimate of  $S_f(p,q_\varphi)$.

\subsection{MNIST geometric perturbation experiments}
\label{app:mnist-setup}

\textbf{Dataset and preprocessing.}
MNIST images are zero-padded from $28\times28$ to $64\times64$ prior to training
and evaluation.
This padding ensures that translations do not remove semantic content, and it allows
invariance effects to be isolated from boundary artefacts.

\textbf{Perturbations.}
Test sets are generated using:
\begin{itemize}[leftmargin=*]
\item Translations with $\ell^\infty$ magnitude
$t\in\{0,2,4,\dots\}$. For each datapoint, the translation diraction is chosen uniformly from a $\ell^\infty$ ball of desired radius.
\item Rotations with angle
$\alpha\in\{-90,-72,\dots,72\}$ degrees.
\end{itemize}

\textbf{Classifier.}
We use an AlexNet-style convolutional
classifier adapted to single-channel inputs, with five $3\times3$
convolutional layers, ReLU activations, and channel widths
$64$, $192$, $384$, $256$, and $256$.
Spatial downsampling is performed using average pooling, yielding feature maps of sizes $32\times32$ and $16\times16$, followed by adaptive average pooling to a $1\times1$ resolution. The classifier head is a three-layer fully connected component with sizes
($512\!\to\!128\!\to\!10$).

\textbf{Score model.}
We use a lightweight convolutional score network to
approximate the input-space score field $\hat{s}_p(x)\approx\nabla_x\log p(x)$.
The model consists of four $3\times3$ convolutional layers with ReLU activations
and channel widths $32$, $64$, $64$, and $1$, preserving the $64\times64$
spatial resolution throughout. The model is trained using Denosing Score Matching 
using standard practices.

\textbf{Stein residuals.}
Adjusted Stein residuals $r_f(x)$ are computed using
Algorithm~\ref{alg:merged-stein-batched-clean}, with the correction term
$D_f$ estimated on held-out MNIST training samples. The Laplacian was estimated using Hutchinson's estimator with the number of samples $K$ set to $5$.

\subsection{CIFAR-10 benchmark evaluation}
\label{app:cifar-setup}

\textbf{Classifier.}
The in-distribution classifier is a ResNet-18. We take model weights pre-trained on ImageNet and fine-tune it on CIFAR-10, with learning rate $0.0001$ for $5$ epochs.

\textbf{Score model.}
The score function $\hat s_p(x)$ is estimated using a pretrained diffusion model. Specifically, we obtain the weights from the publicly released \texttt{google/ddpm-cifar10-32} checkpoint. The model is trained to predict the additive noise $\varepsilon$ in the diffusion process, from which the score is obtained via the standard conversion $\hat s_p(x_t) = -\varepsilon_\theta(x_t,t)/\sigma_t$. While the diffusion model is trained over 1000 noise levels, we empirically find that evaluating the score at a relatively low time step ($t=50$ out of $1000$) yields a useful balance between proximity to the clean data score and numerical stability. Lower noise levels (smaller $t$) tend to produce noisier or unstable score estimates, whereas higher noise levels carry less information about the data distribution. Prior work on score-based models suggests that intermediate noise levels often provide the most robust signal for downstream tasks \citep{songScoreBasedGenerativeModeling2021,hoDenoisingDiffusionProbabilistic2020}, even though they are not typically interpreted as clean-data scores per se. For empirical analysis see Appendix~\ref{sec:diffusion-time step-sensitivity}.


\subsection{Baselines and hyperparameters}
\label{app:baselines}

All baseline methods are implemented using publicly available codebases or
standard reference implementations (main dependancy: pytorch-ood \citep{kirchheim2022pytorch}).
Hyperparameters are selected according to the respective papers or using default setting provided in the package.
Exact settings for MSP, ODIN, Mahalanobis, energy scores, kNN+, and GSC are
summarised in below.

\subsection{MVTec AD localisation experiments}
\label{app:mvtec-setup}

\textbf{Dataset.}
We use the MVTec Anomaly Detection dataset with the standard train/test split.
Only \texttt{train/good} images are used for calibration.

\textbf{Models.}
\begin{itemize}[leftmargin=*]
\item Classifier: ImageNet-pretrained ResNet-50, with weights downloaded thorugh torchvision. 
\item Score model: ImageNet-pretrained diffusion model - obtained from the publicly available checkpoint HF checkopint \emph{google/ddpm-ema-celebahq-256}. Similarly to \ref{app:cifar-setup}, the score is obtained via the standard conversion $\hat s_p(x_t) = -\varepsilon_\theta(x_t,t)/\sigma_t$ with time step $t=50$ (out of $1000$).
\end{itemize}
No model is fine-tuned on MVTec AD. 

\textbf{Per-pixel Stein heatmaps.}
Per-pixel residuals are computed via the coordinate-wise Stein operator. Note that this gives a per-input score, and so there are three per pixel - one for each input channel. We take their sum to obtain the per-pixel score.
Heatmaps are resized to $256 \times 256$ for inference and upsampled to
ground-truth resolution for evaluation.

\textbf{Evaluation.}
We report pixel-level AUROC, average precision, and AUPRO.
Thresholds for visualisation are obtained via upper-tail calibration on
\texttt{train/good} images at quantile $\alpha=0.01$.

Additional qualitative results are shown in Appendix~\ref{ap:per-pixel-heatmaps}.

\FloatBarrier
\section{Granular Experimental Results}\label{app:experimental_results}

\textbf{Benchmark datasets and shift regimes.}
We evaluate task-aware OOD detection across a broad set of datasets designed to
capture qualitatively different forms of distribution shift, ranging from
semantic dataset shift to low-level corruptions, structured geometric
perturbations, and adversarial examples, see also \cite{tajwarNoTrueStateoftheArt2021}.

\smallskip
\noindent\textbf{Classical OOD benchmarks.}
Following standard practice, we use SVHN, LSUN, iSUN, Describable Textures, and
Places365 as semantic OOD datasets with respect to CIFAR-10.
These datasets differ substantially from CIFAR-10 in terms of image content,
statistics, and semantics, and are commonly used to benchmark the ability of OOD
methods to detect inputs drawn from entirely different data-generating
processes.
SVHN consists of real-world digit images with markedly different backgrounds
and color statistics; LSUN and iSUN contain large-scale scene images; Textures
focuses on fine-grained texture patterns; and Places365 covers a wide range of
scene categories.
Together, these datasets probe classical semantic OOD detection, where the test
distribution differs from the training distribution at a high semantic level.

\smallskip
\noindent\textbf{CIFAR-10-C (common corruptions).}
To evaluate robustness under realistic covariate shift, we use CIFAR-10-C,
which applies 19 common image corruptions—such as Gaussian noise, blur, fog,
snow, contrast changes, and compression artifacts—each at five severity levels.
These corruptions alter low-level image statistics while largely preserving
high-level semantic content.
CIFAR-10-C is widely used to assess distribution shift arising from sensor noise,
environmental conditions, or data acquisition artifacts, and provides a
systematic test of how OOD methods respond to gradual, non-adversarial changes
in the input distribution.

\smallskip
\noindent\textbf{CIFAR-10-P (structured perturbations).}
We additionally evaluate on CIFAR-10-P, which applies temporally consistent and
structured geometric perturbations such as translations, rotations, scaling,
shearing, and motion blur.
Unlike CIFAR-10-C, these perturbations often leave pixel-level statistics
relatively unchanged while inducing changes that directly interact with the
classifier’s invariances and equivariances.
CIFAR-10-P therefore provides a particularly relevant testbed for
\emph{task-aware} OOD detection, as some perturbations (e.g.\ translations) are
largely benign for convolutional networks, whereas others (e.g.\ rotations)
can substantially degrade task performance despite minimal changes in overall
image statistics.

\smallskip
\noindent\textbf{Adversarial attacks.}
To probe behaviour under worst-case, task-adversarial distribution shifts, we
include adversarial examples generated using FGSM, PGD, and AutoAttack under
both $\ell^\infty$ and $\ell^2$ constraints, with multiple perturbation budgets.
These attacks are specifically constructed to induce classifier failure while
keeping perturbations visually subtle and often imperceptible.
Adversarial examples therefore represent a regime in which confidence-based and
likelihood-based OOD methods are known to struggle, as inputs may remain close
to the training distribution in pixel space while causing severe task-level
errors.
Including adversarial attacks allows us to assess whether task-aware Stein
residuals respond to shifts that directly affect the predictor, even when
traditional OOD signals fail.

\smallskip
\noindent\textbf{Summary.}
Across these datasets, we cover four complementary shift regimes:
(i) semantic dataset shift (classical OOD benchmarks),
(ii) low-level covariate shift (CIFAR-10-C),
(iii) structured, invariance-breaking perturbations (CIFAR-10-P),
and (iv) worst-case, task-adversarial perturbations (adversarial attacks).
This comprehensive evaluation enables a detailed assessment of how different
OOD detection methods respond to shifts that are benign, harmful, or adversarial
with respect to the underlying prediction task.

\smallskip
\noindent\textbf{Granular results.}
Each table in this section reports a per-dataset breakdown for a single OOD
detection method (see the baselines in Section~\ref{sec:benchmark-cifar} for details) and corresponds to one column of the aggregated results
presented in the main text. AUROC is the area under the receiver-operator curve; FPR95 is the false positive rate at 95\% true positive rate.
These fine-grained results expose substantial variability in OOD detection
performance across datasets, corruption types, and shift regimes.
Such variability highlights the sensitivity of existing methods to the specific
nature of distribution shift and provides additional empirical support for the
central observation of \citet{tajwarNoTrueStateoftheArt2021}: namely, that no
single OOD detection method consistently dominates across all benchmarks, and
that comparative performance is strongly dataset-dependent.

\begin{table}[H]
\centering
\caption{OOD detection results: adversarial}
\resizebox{\textwidth}{!}{%
\begin{tabular}{lllrrrrrr}
\toprule
OOD Dataset & metric & MSP & Energy & ODIN & Mahalanobis & kNN+ & GSC & TASTE (ours) \\
\midrule
adversarial\_autoattack\_linf\_2\_255 & AUROC & 0.5019 & 0.4896 & 0.5393 & 0.5445 & 0.5276 & \textbf{0.5455} & 0.4786 \\
 & FPR95 & 0.9611 & 0.9643 & 0.93 & 0.9371 & 0.9513 & \textbf{0.8902} & 0.9571 \\
adversarial\_autoattack\_linf\_4\_255 & AUROC & 0.2841 & 0.289 & 0.4851 & 0.505 & 0.3839 & 0.5277 & \textbf{0.6543} \\
 & FPR95 & 0.9681 & 0.9711 & 0.9532 & 0.9213 & 0.9711 & 0.9504 & \textbf{0.893} \\
adversarial\_autoattack\_linf\_8\_255 & AUROC & 0.1557 & 0.1675 & 0.3319 & 0.7207 & 0.4341 & 0.6431 & \textbf{0.8609} \\
 & FPR95 & 0.9688 & 0.9713 & 0.9718 & \textbf{0.513} & 0.9688 & 0.9605 & 0.602 \\
adversarial\_fgsm\_linf\_4\_255 & AUROC & 0.6254 & 0.6039 & 0.5673 & 0.6172 & \textbf{0.6266} & 0.6236 & 0.4866 \\
 & FPR95 & 0.923 & 0.9147 & 0.9306 & 0.9071 & 0.8999 & \textbf{0.8361} & 0.9508 \\
adversarial\_fgsm\_linf\_8\_255 & AUROC & 0.6075 & 0.6026 & 0.5151 & 0.6341 & \textbf{0.6388} & 0.6051 & 0.4949 \\
 & FPR95 & 0.922 & 0.8993 & 0.9715 & 0.8969 & 0.8847 & \textbf{0.8524} & 0.948 \\
adversarial\_pgd\_l2\_0.5\_steps=50 & AUROC & 0.2656 & 0.2741 & 0.5036 & 0.4714 & 0.3748 & 0.4665 & \textbf{0.5727} \\
 & FPR95 & 0.998 & 0.9985 & 0.9418 & 0.9507 & 0.9918 & 0.9745 & \textbf{0.9348} \\
adversarial\_pgd\_l2\_1.0\_steps=50 & AUROC & 0.05975 & 0.07737 & 0.3817 & 0.634 & 0.311 & 0.5941 & \textbf{0.8612} \\
 & FPR95 & 1 & 1 & 0.9631 & 0.6251 & 0.9981 & 0.9984 & \textbf{0.5297} \\
adversarial\_pgd\_linf\_2\_255\_steps=50 & AUROC & 0.5837 & 0.5521 & 0.5402 & \textbf{0.5937} & 0.5812 & 0.5818 & 0.433 \\
 & FPR95 & 0.9646 & 0.9689 & 0.9205 & 0.9273 & 0.9484 & \textbf{0.8532} & 0.9637 \\
adversarial\_pgd\_linf\_4\_255\_steps=50 & AUROC & 0.3609 & 0.3548 & \textbf{0.5535} & 0.5159 & 0.4419 & 0.4719 & 0.5017 \\
 & FPR95 & 0.9908 & 0.9929 & \textbf{0.9151} & 0.9442 & 0.981 & 0.9442 & 0.9486 \\
adversarial\_pgd\_linf\_8\_255\_steps=50 & AUROC & 0.1304 & 0.1389 & 0.4769 & 0.5827 & 0.3248 & 0.5334 & \textbf{0.7578} \\
 & FPR95 & 0.9991 & 0.9997 & 0.9381 & \textbf{0.7652} & 0.996 & 0.9901 & 0.7966 \\

\bottomrule
\end{tabular}%
}
\end{table}

\begin{table}[H]
\centering
\caption{OOD detection results: CIFAR-10-C}
\label{tab:ood-cifar10c}\resizebox{\textwidth}{!}{%
\begin{tabular}{lllrrrrrr}
\toprule
OOD Dataset & metric & MSP & Energy & ODIN & Mahalanobis & kNN+ & GSC & TASTE (ours) \\
\midrule

cifar10c\_brightness & AUROC & 0.5321 & 0.5389 & 0.4928 & 0.5356 & 0.543 & \textbf{0.5594} & 0.5373 \\
 & FPR95 & 0.9454 & 0.9354 & 0.9562 & 0.9452 & 0.939 & \textbf{0.8936} & 0.9335 \\
cifar10c\_contrast & AUROC & 0.6945 & 0.7206 & \textbf{0.802} & 0.5841 & 0.6653 & 0.6424 & 0.6846 \\
 & FPR95 & 0.86 & 0.781 & \textbf{0.5962} & 0.929 & 0.8666 & 0.848 & 0.7849 \\
cifar10c\_defocus\_blur & AUROC & 0.5801 & 0.5763 & \textbf{0.5864} & 0.5667 & 0.5847 & 0.573 & 0.5842 \\
 & FPR95 & 0.9308 & 0.9212 & \textbf{0.8546} & 0.9318 & 0.9186 & 0.8724 & 0.8969 \\
cifar10c\_elastic\_transform & AUROC & 0.6027 & \textbf{0.6091} & 0.5777 & 0.5549 & 0.5956 & 0.5838 & 0.6088 \\
 & FPR95 & 0.9272 & 0.9236 & 0.8942 & 0.9504 & 0.9278 & 0.8726 & \textbf{0.8505} \\
cifar10c\_fog & AUROC & 0.6217 & 0.6364 & \textbf{0.6875} & 0.5347 & 0.6108 & 0.6011 & 0.6465 \\
 & FPR95 & 0.9018 & 0.8914 & \textbf{0.7756} & 0.9596 & 0.9116 & 0.8474 & 0.8526 \\
cifar10c\_frost & AUROC & 0.6086 & \textbf{0.6278} & 0.528 & 0.5703 & 0.6049 & 0.6094 & 0.627 \\
 & FPR95 & 0.9166 & 0.8964 & 0.957 & 0.9458 & 0.9148 & \textbf{0.8602} & 0.8693 \\
cifar10c\_gaussian\_blur & AUROC & 0.617 & 0.6142 & \textbf{0.6462} & 0.6018 & 0.6192 & 0.5965 & 0.6206 \\
 & FPR95 & 0.9222 & 0.9128 & \textbf{0.8006} & 0.9184 & 0.91 & 0.8576 & 0.8583 \\
cifar10c\_gaussian\_noise & AUROC & 0.6341 & \textbf{0.6509} & 0.4403 & 0.6333 & 0.646 & 0.6276 & 0.6332 \\
 & FPR95 & 0.8998 & 0.881 & 0.9884 & 0.8932 & 0.869 & \textbf{0.8476} & 0.8778 \\
cifar10c\_glass\_blur & AUROC & 0.6751 & \textbf{0.7104} & 0.5464 & 0.6415 & 0.6791 & 0.648 & 0.6725 \\
 & FPR95 & 0.882 & 0.8446 & 0.9454 & 0.8968 & 0.8704 & 0.8238 & \textbf{0.7801} \\
cifar10c\_impulse\_noise & AUROC & 0.702 & \textbf{0.7305} & 0.4776 & 0.7233 & 0.7255 & 0.6708 & 0.6812 \\
 & FPR95 & 0.865 & 0.8148 & 0.985 & \textbf{0.7908} & 0.7958 & 0.8264 & 0.7985 \\
cifar10c\_jpeg\_compression & AUROC & 0.5505 & 0.5577 & 0.5125 & 0.5168 & 0.5397 & \textbf{0.5603} & 0.5521 \\
 & FPR95 & 0.936 & 0.927 & 0.9338 & 0.9516 & 0.9344 & \textbf{0.8758} & 0.9042 \\
cifar10c\_motion\_blur & AUROC & 0.6302 & 0.6409 & 0.6239 & 0.6099 & \textbf{0.6453} & 0.6033 & 0.6365 \\
 & FPR95 & 0.9102 & 0.9062 & 0.8426 & 0.9094 & 0.8946 & 0.8512 & \textbf{0.8343} \\
cifar10c\_pixelate & AUROC & 0.5407 & 0.5479 & 0.5094 & 0.5214 & 0.5376 & \textbf{0.5619} & 0.5486 \\
 & FPR95 & 0.9368 & 0.9178 & 0.9372 & 0.9408 & 0.9268 & \textbf{0.8882} & 0.9193 \\
cifar10c\_saturate & AUROC & 0.5668 & 0.5892 & 0.5264 & 0.5833 & \textbf{0.5923} & 0.5875 & 0.5599 \\
 & FPR95 & 0.9298 & 0.9018 & 0.9292 & 0.9038 & 0.9024 & \textbf{0.8762} & 0.9253 \\
cifar10c\_shot\_noise & AUROC & 0.5986 & \textbf{0.6101} & 0.437 & 0.5952 & 0.6074 & 0.605 & 0.6015 \\
 & FPR95 & 0.9104 & 0.8944 & 0.9848 & 0.9096 & 0.8914 & \textbf{0.8536} & 0.9004 \\
cifar10c\_snow & AUROC & 0.6104 & 0.6159 & 0.4865 & 0.5842 & 0.6032 & \textbf{0.6177} & 0.6169 \\
 & FPR95 & 0.9206 & 0.9046 & 0.972 & 0.9346 & 0.9134 & \textbf{0.8438} & 0.8872 \\
cifar10c\_spatter & AUROC & 0.621 & \textbf{0.6448} & 0.4804 & 0.6007 & 0.626 & 0.6222 & 0.6176 \\
 & FPR95 & 0.905 & 0.8742 & 0.9726 & 0.9044 & 0.887 & \textbf{0.8506} & 0.892 \\
cifar10c\_speckle\_noise & AUROC & 0.6064 & \textbf{0.6209} & 0.444 & 0.6026 & 0.6138 & 0.6152 & 0.6119 \\
 & FPR95 & 0.9118 & 0.8902 & 0.9842 & 0.912 & 0.8878 & \textbf{0.8532} & 0.9009 \\
cifar10c\_zoom\_blur & AUROC & 0.6434 & 0.645 & 0.6435 & 0.6118 & \textbf{0.6466} & 0.6042 & 0.6444 \\
 & FPR95 & 0.895 & 0.8912 & \textbf{0.8146} & 0.9182 & 0.8952 & 0.8592 & 0.8275 \\

\bottomrule
\end{tabular}%
}
\end{table}

\begin{table}[!t]
\centering
\caption{OOD detection results: CIFAR-10-P}
\label{tab:ood-cifar10p}
\resizebox{\textwidth}{!}{%
\begin{tabular}{lllrrrrrr}
\toprule
OOD Dataset & metric & MSP & Energy & ODIN & Mahalanobis & kNN+ & GSC & TASTE (ours) \\
\midrule

cifar10p\_brightness & AUROC & 0.5497 & 0.5549 & 0.499 & 0.5488 & 0.5583 & \textbf{0.5707} & 0.5557 \\
 & FPR95 & 0.9395 & 0.9357 & 0.9585 & 0.9415 & 0.934 & \textbf{0.8783} & 0.9205 \\
cifar10p\_gaussian\_noise & AUROC & 0.5069 & 0.5107 & 0.4618 & 0.4965 & 0.5076 & \textbf{0.5434} & 0.5107 \\
 & FPR95 & 0.9473 & 0.9454 & 0.9629 & 0.9559 & 0.9505 & \textbf{0.8995} & 0.9456 \\
cifar10p\_motion\_blur & AUROC & 0.6294 & 0.6368 & 0.6134 & 0.6183 & \textbf{0.6446} & 0.6063 & 0.6342 \\
 & FPR95 & 0.9197 & 0.9203 & 0.8495 & 0.9152 & 0.9095 & 0.8507 & \textbf{0.8123} \\
cifar10p\_rotate & AUROC & 0.7012 & \textbf{0.7351} & 0.5982 & 0.6109 & 0.6842 & 0.6782 & 0.6822 \\
 & FPR95 & 0.8787 & 0.8463 & 0.9183 & 0.9477 & 0.8917 & 0.8101 & \textbf{0.731} \\
cifar10p\_scale & AUROC & 0.6425 & 0.636 & 0.6375 & 0.6235 & \textbf{0.6482} & 0.6175 & 0.6448 \\
 & FPR95 & 0.9102 & 0.9122 & 0.8345 & 0.9203 & 0.9056 & 0.8348 & \textbf{0.8149} \\
cifar10p\_shear & AUROC & 0.6641 & \textbf{0.6996} & 0.5723 & 0.5878 & 0.6514 & 0.6558 & 0.6496 \\
 & FPR95 & 0.8966 & 0.8619 & 0.921 & 0.938 & 0.8969 & 0.8365 & \textbf{0.8146} \\
cifar10p\_shot\_noise & AUROC & 0.5136 & 0.5184 & 0.4522 & 0.4989 & 0.5137 & \textbf{0.5458} & 0.5178 \\
 & FPR95 & 0.9444 & 0.9422 & 0.9708 & 0.9559 & 0.946 & \textbf{0.8938} & 0.9459 \\
cifar10p\_snow & AUROC & 0.5989 & \textbf{0.6132} & 0.475 & 0.5697 & 0.5946 & 0.6123 & 0.6041 \\
 & FPR95 & 0.9179 & 0.905 & 0.9735 & 0.9357 & 0.9116 & \textbf{0.8551} & 0.8996 \\
cifar10p\_tilt & AUROC & 0.5564 & \textbf{0.5699} & 0.5331 & 0.5059 & 0.5428 & 0.5538 & 0.5572 \\
 & FPR95 & 0.9378 & 0.9359 & 0.9293 & 0.9629 & 0.947 & \textbf{0.8846} & 0.8954 \\
cifar10p\_translate & AUROC & 0.6523 & \textbf{0.6884} & 0.6304 & 0.5762 & 0.6388 & 0.6099 & 0.6318 \\
 & FPR95 & 0.9019 & 0.8788 & 0.9218 & 0.959 & 0.9148 & 0.8726 & \textbf{0.7862} \\
cifar10p\_zoom\_blur & AUROC & 0.5605 & 0.5583 & 0.5526 & 0.5457 & 0.567 & 0.5616 & \textbf{0.5676} \\
 & FPR95 & 0.9374 & 0.9411 & 0.9044 & 0.946 & 0.9379 & \textbf{0.883} & 0.8937 \\

\bottomrule
\end{tabular}%
}
\end{table}

\begin{table}[!t]
\centering
\caption{OOD detection results: OOD benchmarks}
\label{tab:ood-benchmarks}
\resizebox{\textwidth}{!}{%
\begin{tabular}{lllrrrrrr}
\toprule
OOD Dataset & metric & MSP & Energy & ODIN & Mahalanobis & kNN+ & GSC & TASTE (ours) \\
\midrule

iSUN & AUROC & 0.7926 & \textbf{0.8561} & 0.682 & 0.8379 & 0.8203 & 0.7306 & 0.7408 \\
 & FPR95 & 0.7848 & 0.6455 & 0.9396 & 0.6829 & 0.6995 & 0.8223 & \textbf{0.5901} \\
LSUN & AUROC & 0.8208 & \textbf{0.8999} & 0.7047 & 0.88 & 0.8677 & 0.7636 & 0.7666 \\
 & FPR95 & 0.7391 & \textbf{0.5161} & 0.9421 & 0.5912 & 0.5945 & 0.8113 & 0.5364 \\
Places365 & AUROC & 0.8005 & \textbf{0.854} & 0.7206 & 0.7943 & 0.8335 & 0.7715 & 0.7746 \\
 & FPR95 & 0.7878 & 0.6461 & 0.86 & 0.7714 & 0.6894 & 0.7432 & \textbf{0.5609} \\
SVHN & AUROC & 0.8044 & 0.816 & \textbf{0.9216} & 0.7705 & 0.8122 & 0.7106 & 0.749 \\
 & FPR95 & 0.8258 & 0.8118 & \textbf{0.4123} & 0.8402 & 0.7944 & 0.7934 & 0.5432 \\
Textures & AUROC & 0.7976 & \textbf{0.8339} & 0.7931 & 0.8113 & 0.8101 & 0.7384 & 0.7445 \\
 & FPR95 & 0.8018 & 0.7222 & 0.7239 & 0.7119 & 0.7358 & 0.7573 & \textbf{0.592} \\

\bottomrule
\end{tabular}%
}
\end{table}

\FloatBarrier   
\newpage        

\section{Per-pixel anomaly heatmaps}\label{ap:per-pixel-heatmaps}

\begin{figure}[H]   
    \centering
    \vspace{-0.6em} 
    \includegraphics[width=0.55\textwidth]{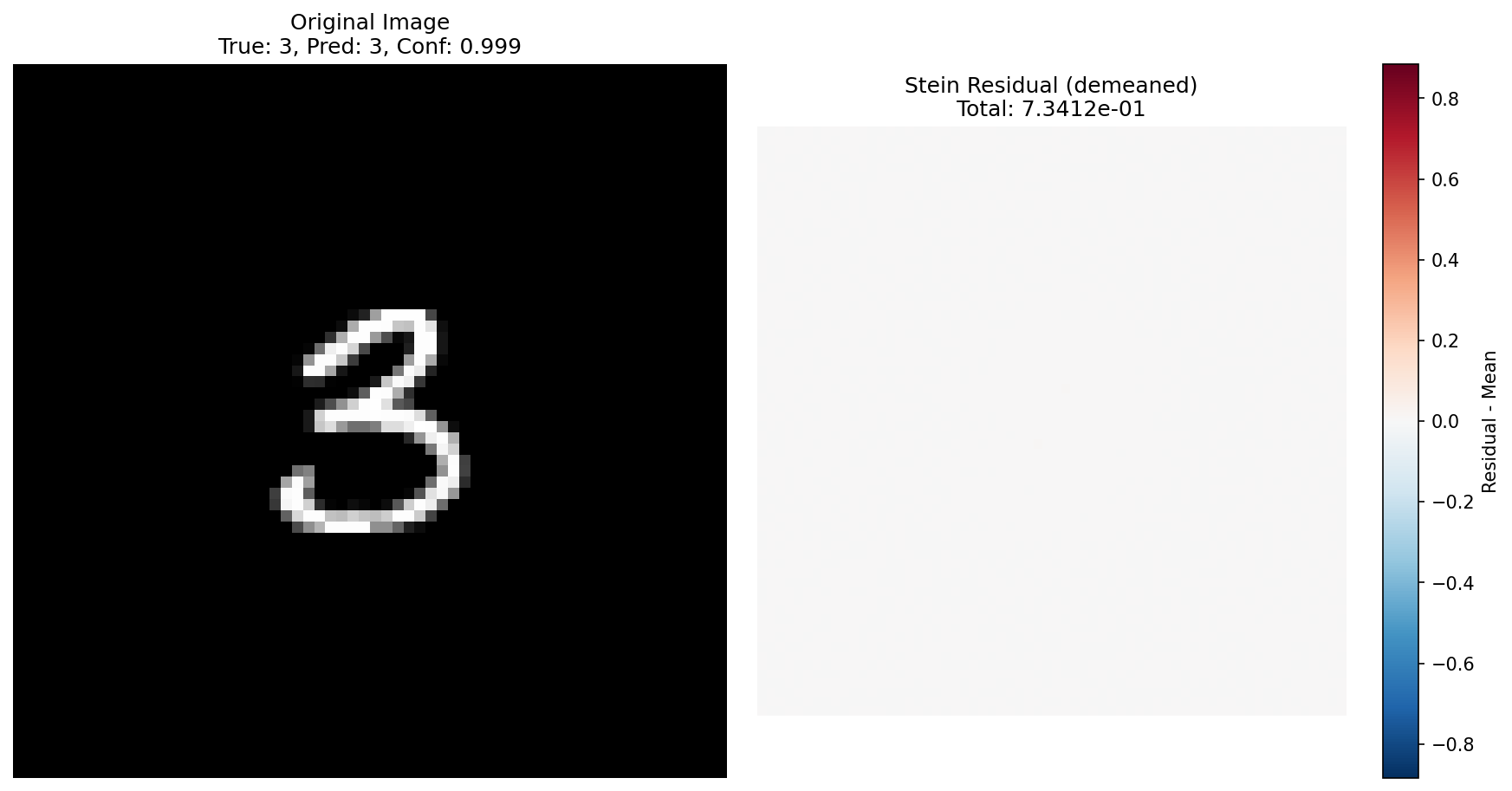}\\[4pt]
    \includegraphics[width=0.55\textwidth]{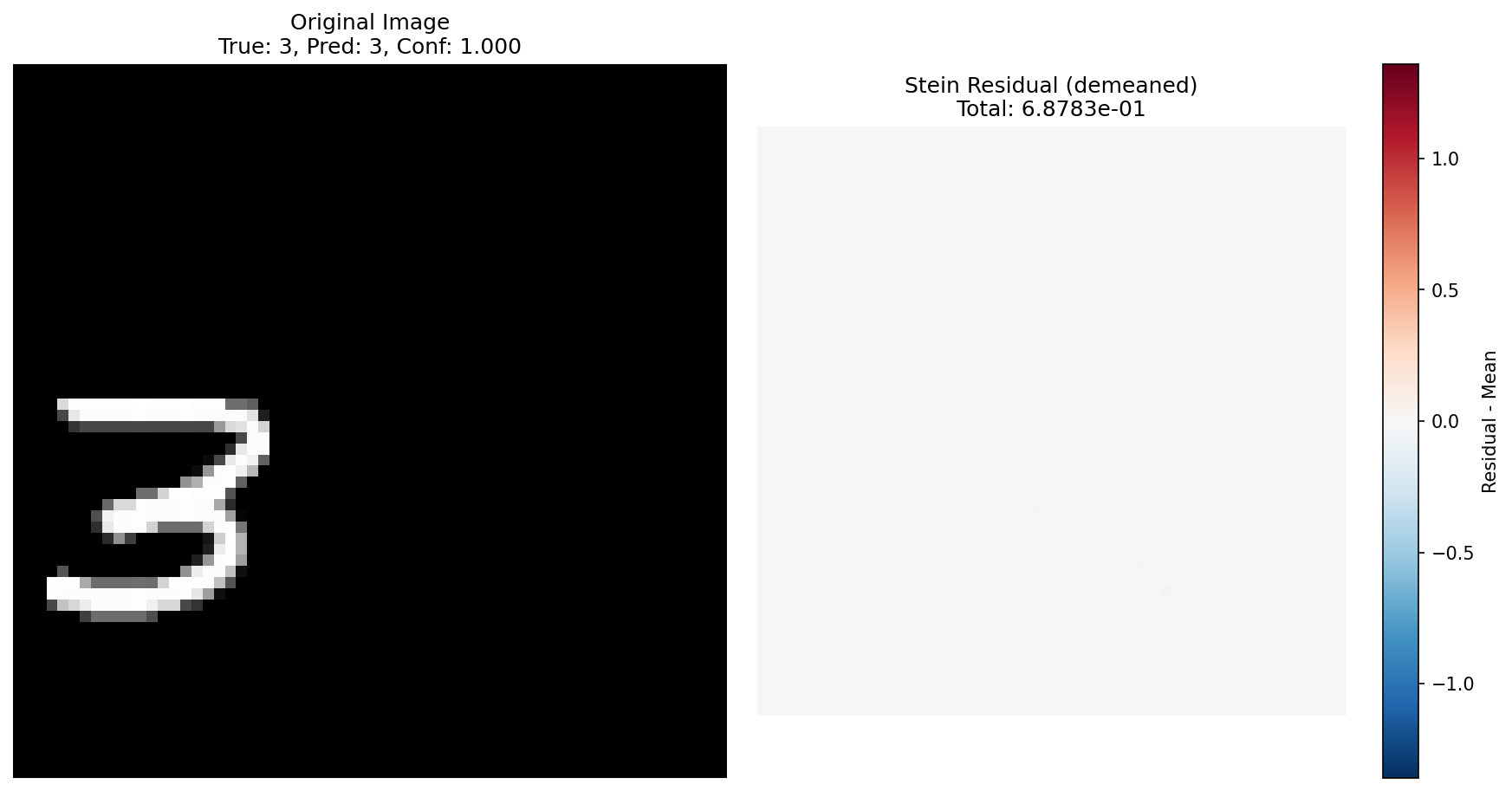}\\[4pt]
    \includegraphics[width=0.55\textwidth]{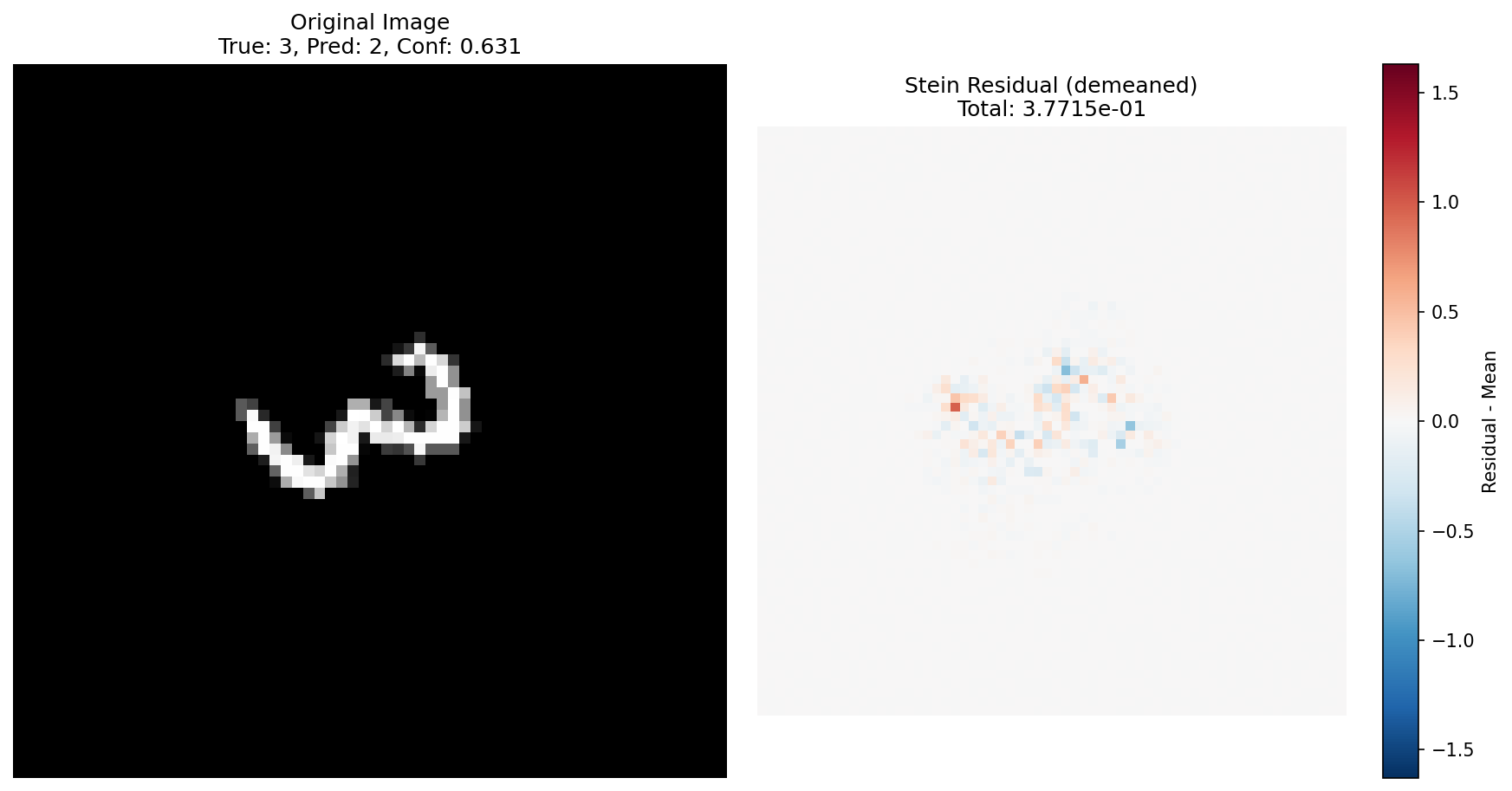}\\[4pt]
    \includegraphics[width=0.55\textwidth]{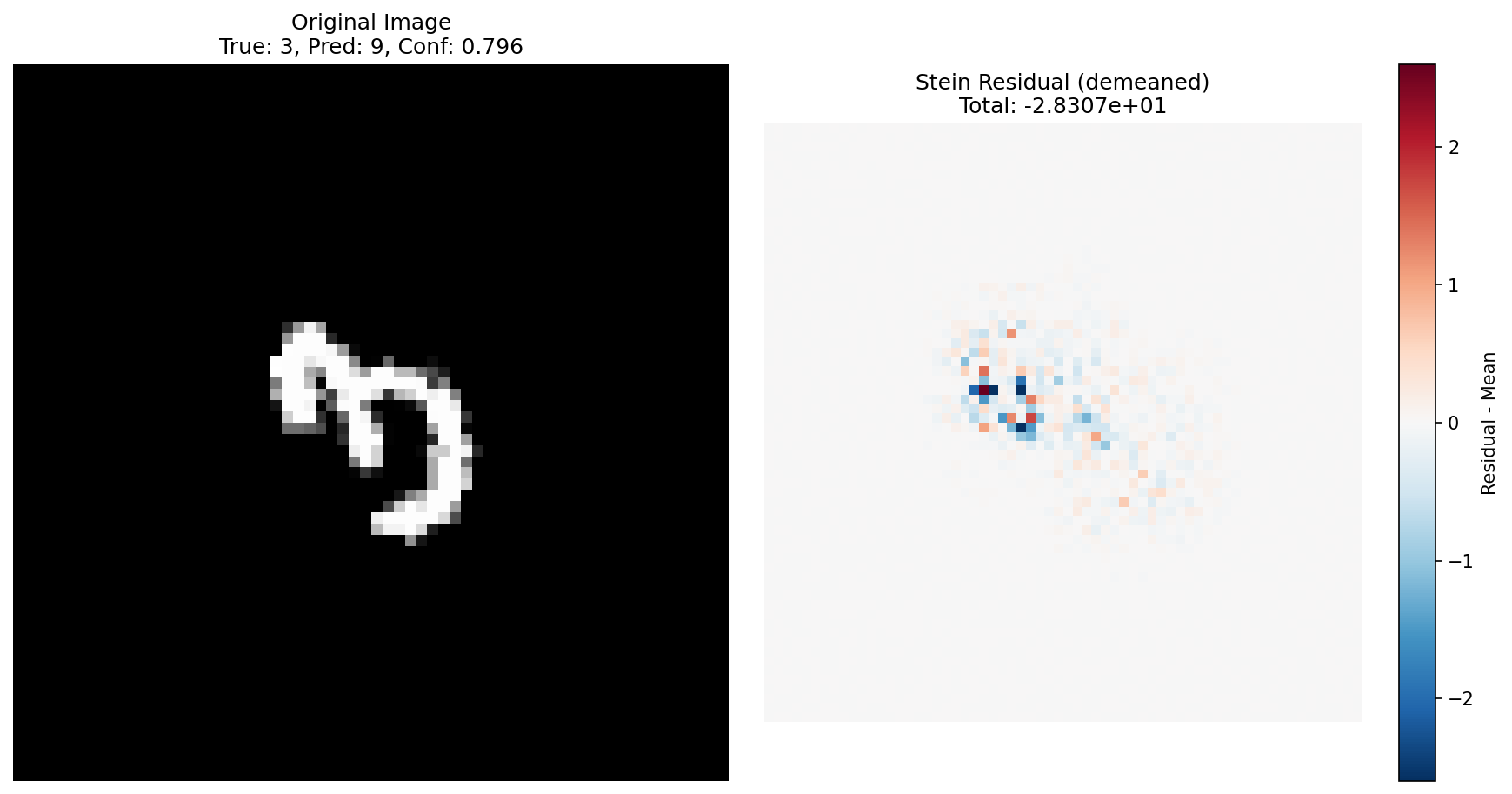}
    \caption{\textbf{Demonstration of per-pixel anomaly heatmaps for the MNIST-based task.}
    Input digits are shown on the left, while the heatmaps are on the right. 
    Note that both original and translated digits do not generate significant anomaly signal.}
    \label{fig:four-stack}
\end{figure}

\begin{figure}[H]
  \centering

  \begin{subfigure}{0.498\linewidth}
    \centering
    \includegraphics[width=\linewidth,height=0.30\textheight,keepaspectratio]{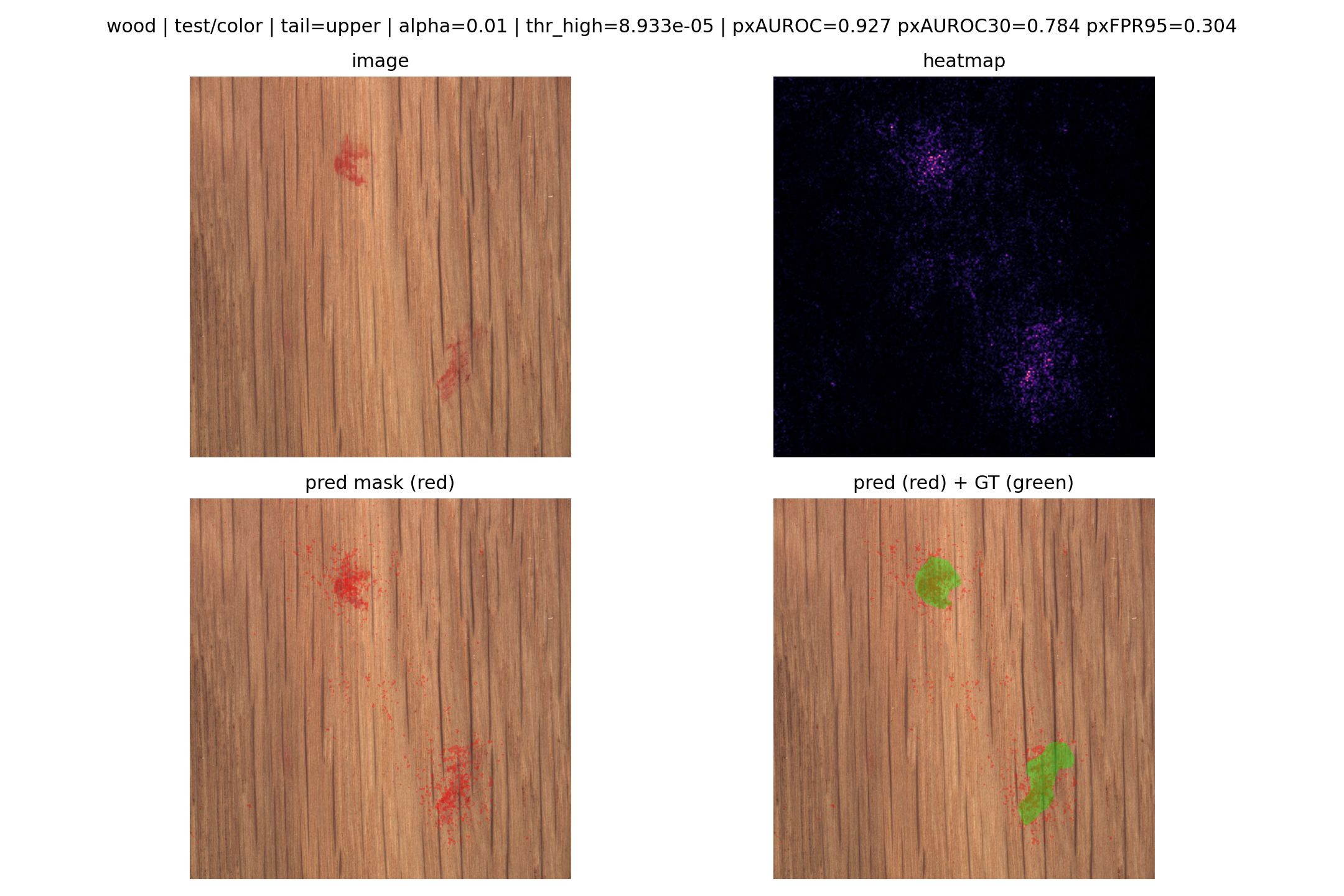}
  \end{subfigure}\hspace{0.004\linewidth}%
  \begin{subfigure}{0.498\linewidth}
    \centering
    \includegraphics[width=\linewidth,height=0.30\textheight,keepaspectratio]{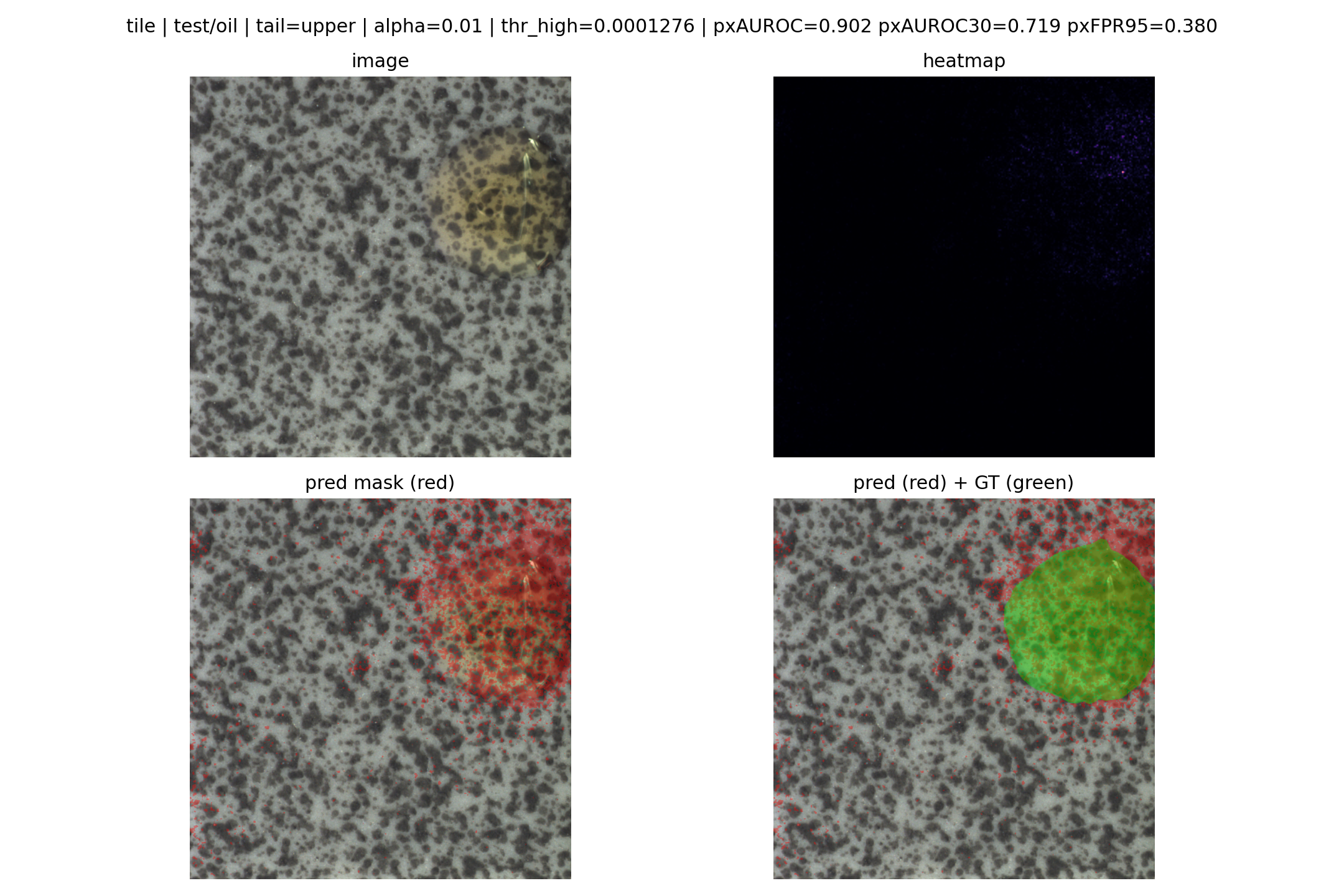}
  \end{subfigure}

  \vspace{0.12em}

  \begin{subfigure}{0.498\linewidth}
    \centering
    \includegraphics[width=\linewidth,height=0.30\textheight,keepaspectratio]{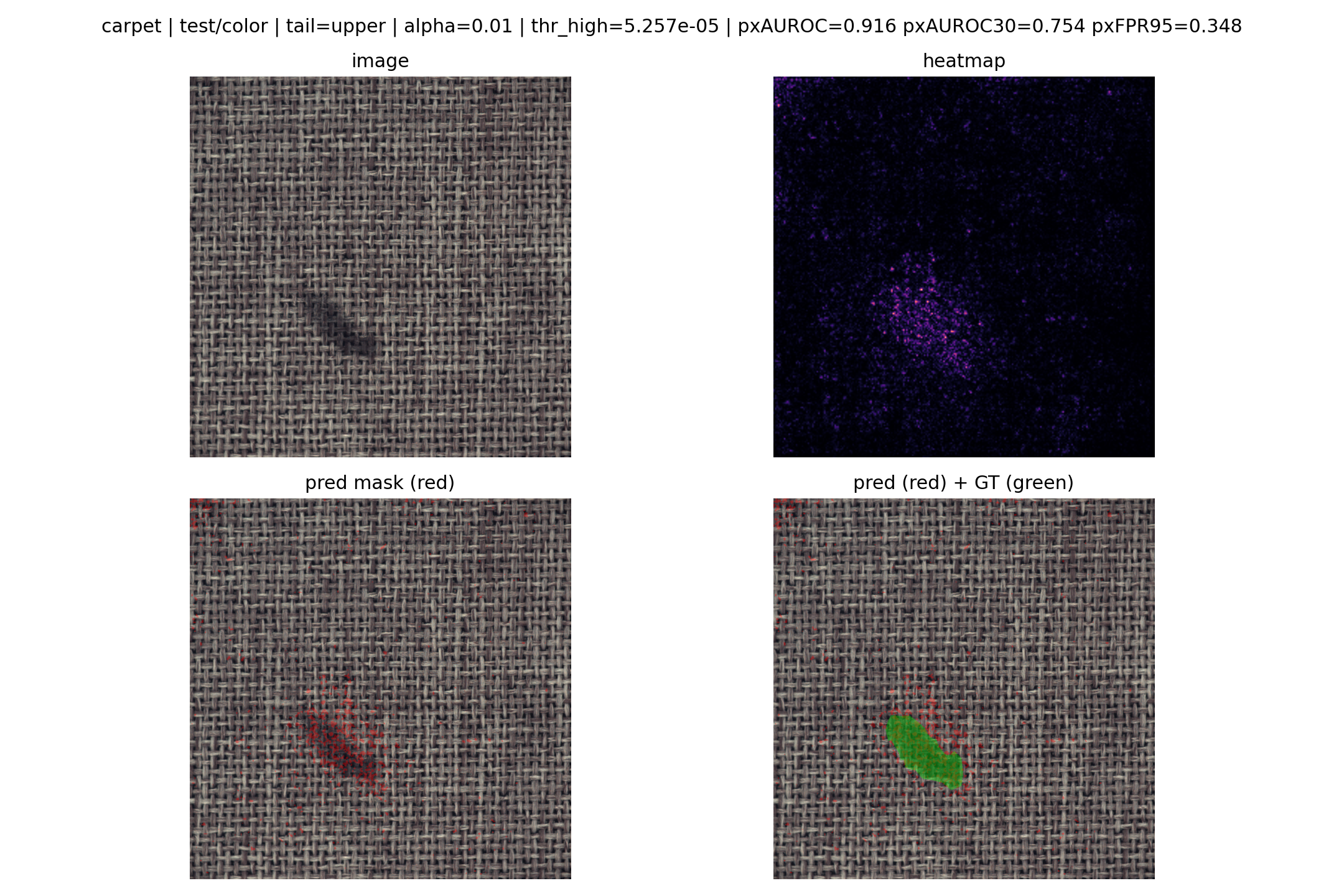}
  \end{subfigure}\hspace{0.004\linewidth}%
  \begin{subfigure}{0.498\linewidth}
    \centering
    \includegraphics[width=\linewidth,height=0.30\textheight,keepaspectratio]{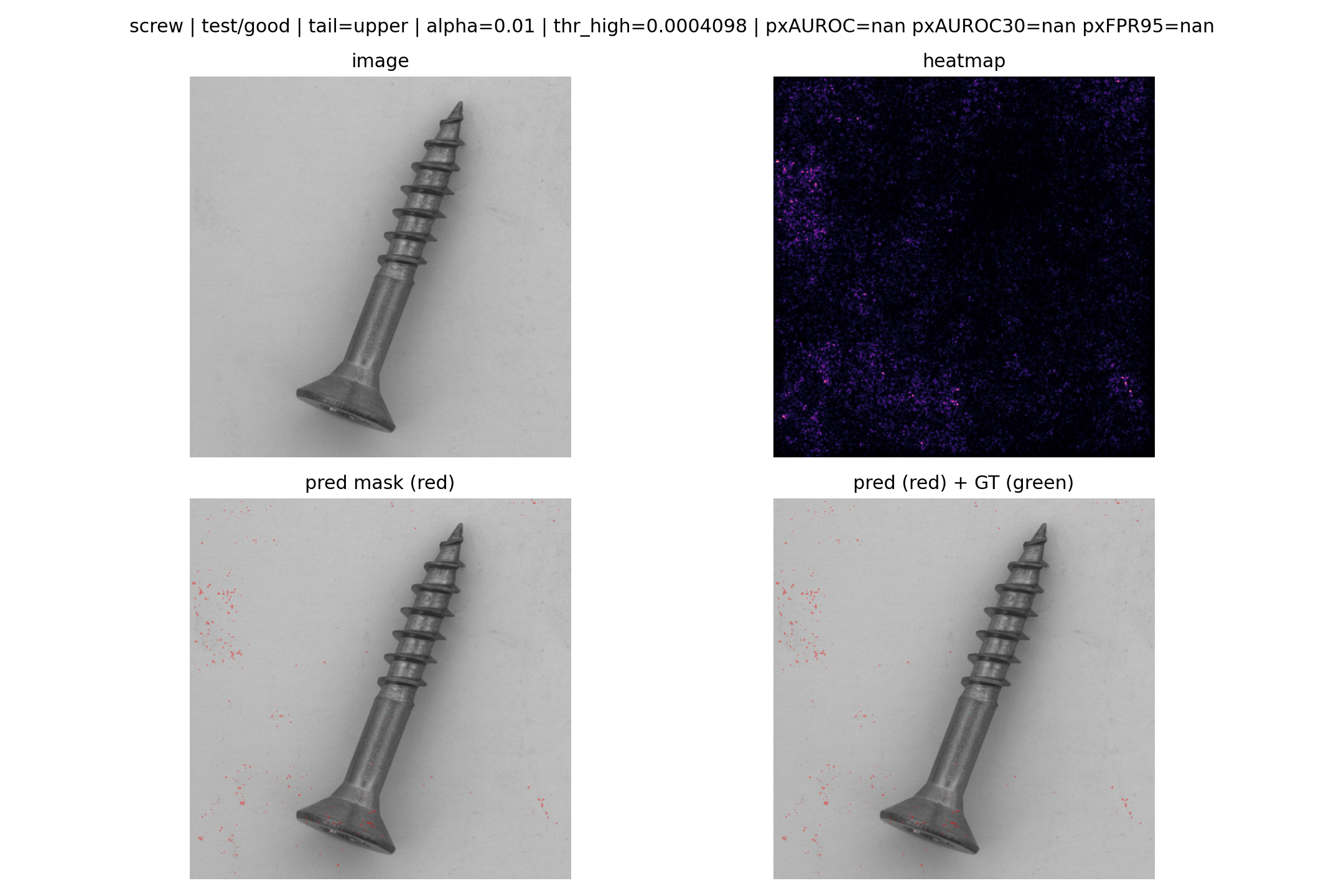}
  \end{subfigure}

  \vspace{0.12em}

  \begin{subfigure}{0.498\linewidth}
    \centering
    \includegraphics[width=\linewidth,height=0.30\textheight,keepaspectratio]{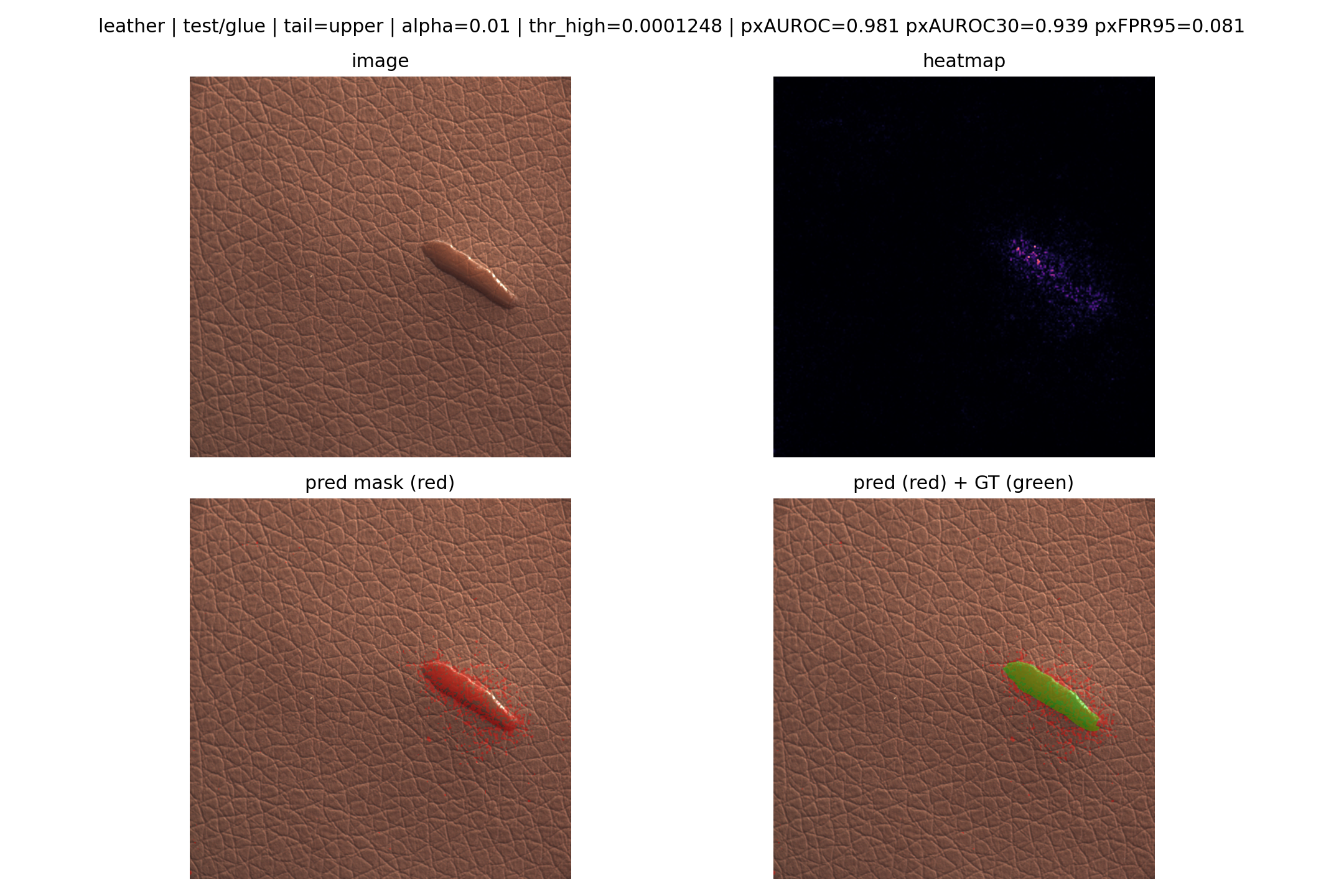}
  \end{subfigure}\hspace{0.004\linewidth}%
  \begin{subfigure}{0.498\linewidth}
    \centering
    \includegraphics[width=\linewidth,height=0.30\textheight,keepaspectratio]{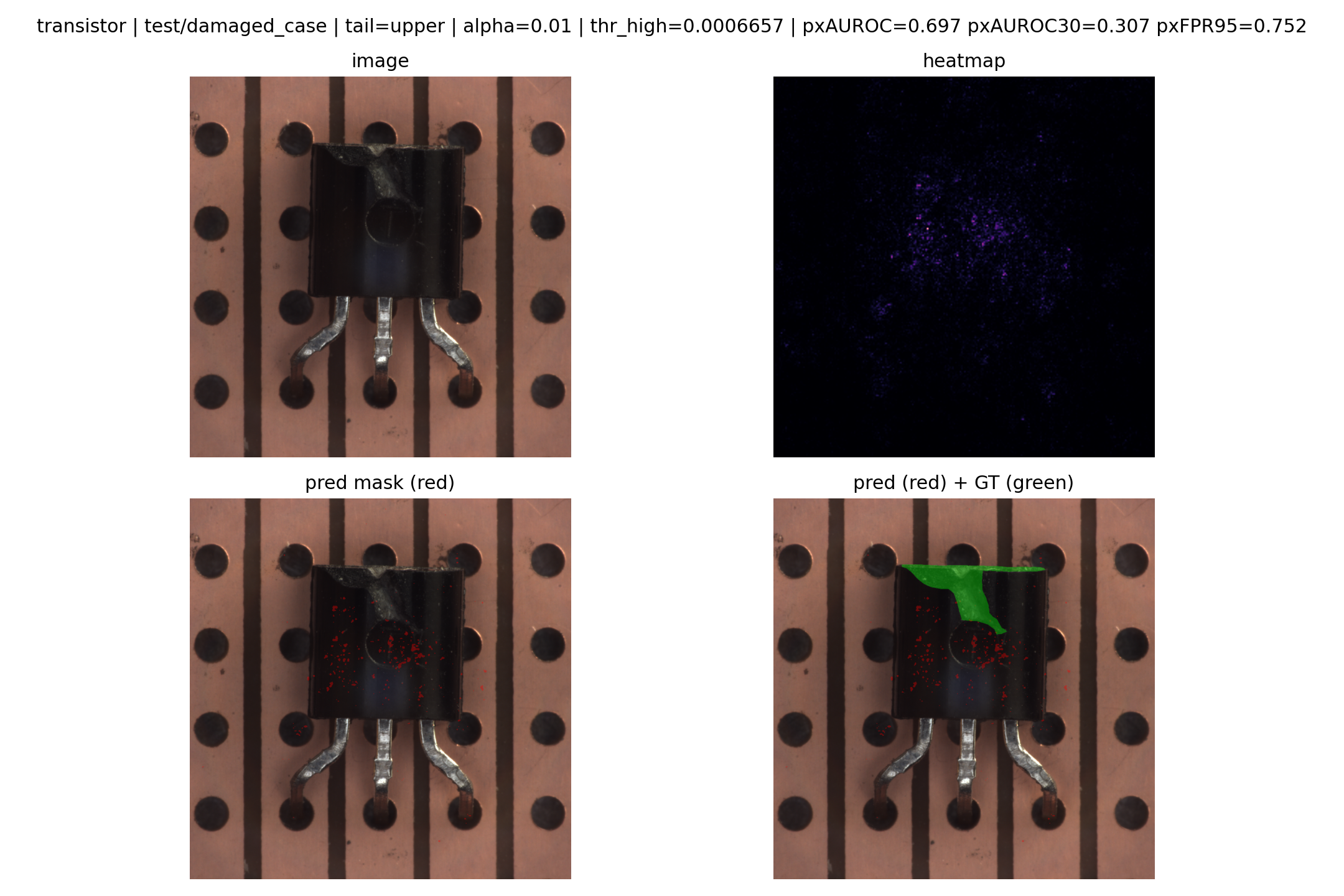}
  \end{subfigure}

  \caption{
    \textbf{Additional qualitative results.}
    Each panel is a randomly chosen example for one of MVTec AD categories. Within each panel there are four subplots: the original image (with corruption), detector's heatmap, prediction overlay at $\alpha = 0.01$, prediction and ground truth (GT) overlay.
  }
  \label{fig:appendix-3x2}
\end{figure}

\begin{figure}[H]
  \centering

  \begin{subfigure}{0.498\linewidth}
    \centering
    \includegraphics[width=\linewidth,height=0.30\textheight,keepaspectratio]{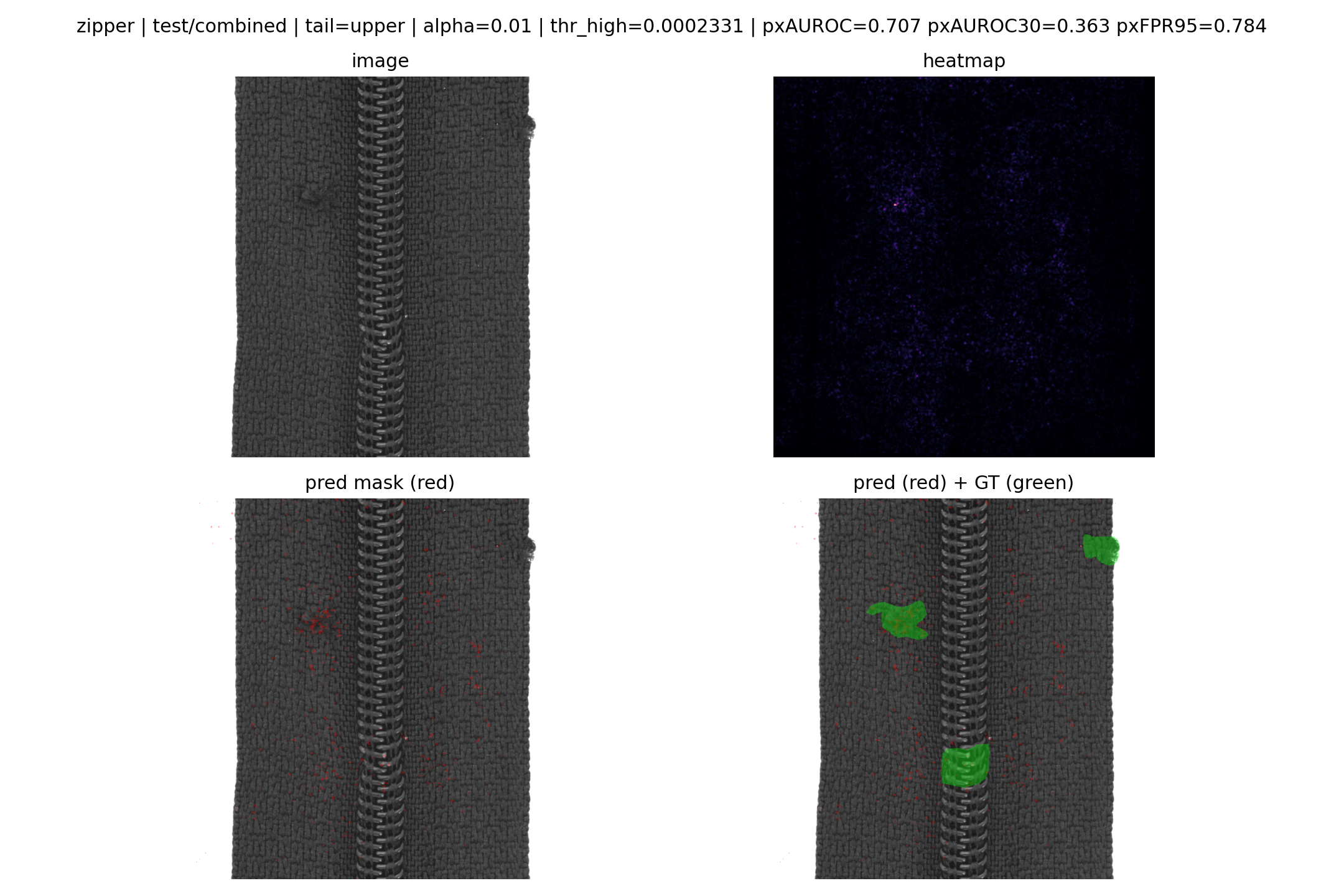}
  \end{subfigure}\hspace{0.004\linewidth}%
  \begin{subfigure}{0.498\linewidth}
    \centering
    \includegraphics[width=\linewidth,height=0.30\textheight,keepaspectratio]{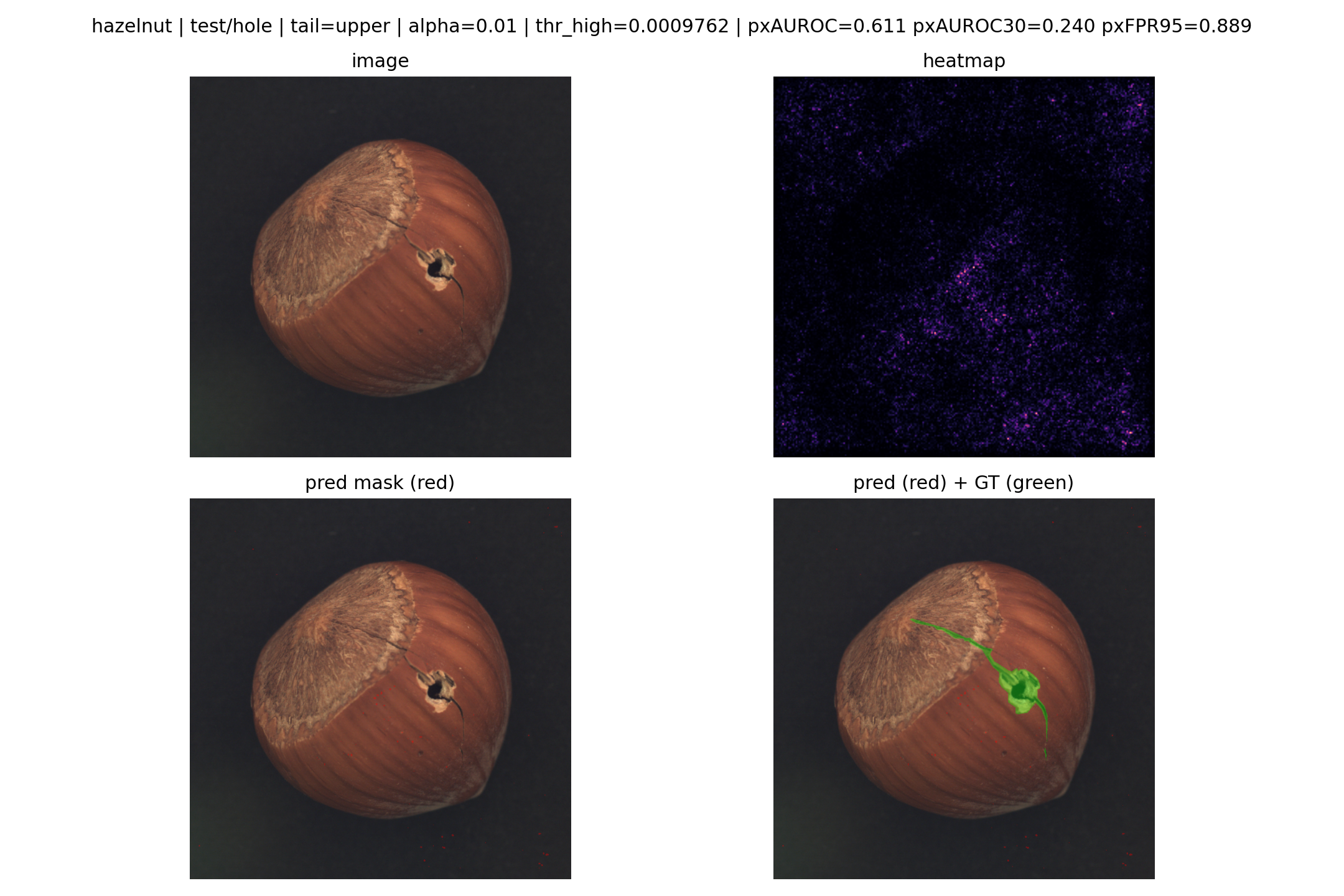}
  \end{subfigure}

  \vspace{0.12em}

  \begin{subfigure}{0.498\linewidth}
    \centering
    \includegraphics[width=\linewidth,height=0.30\textheight,keepaspectratio]{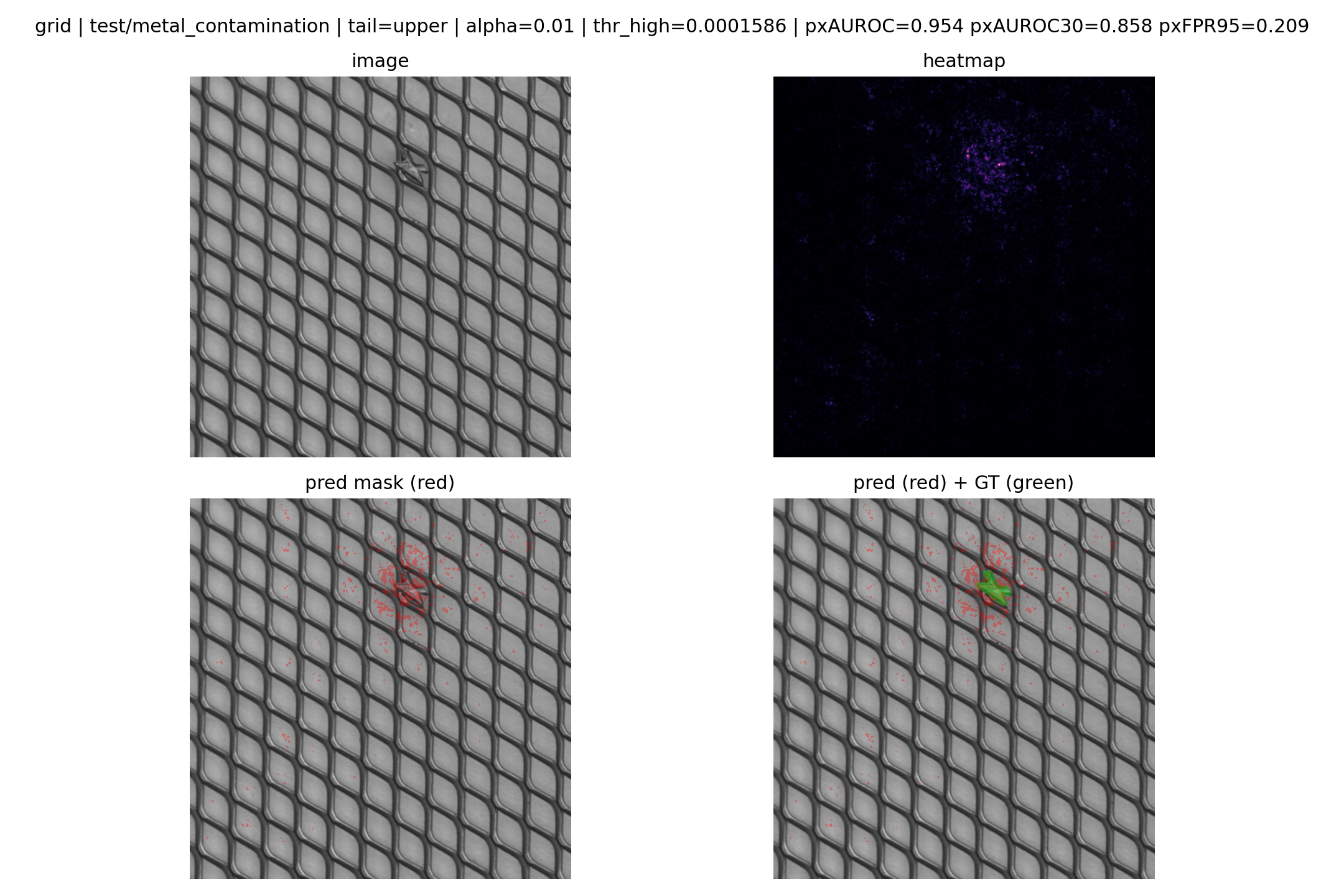}
  \end{subfigure}\hspace{0.004\linewidth}%
  \begin{subfigure}{0.498\linewidth}
    \centering
    \includegraphics[width=\linewidth,height=0.30\textheight,keepaspectratio]{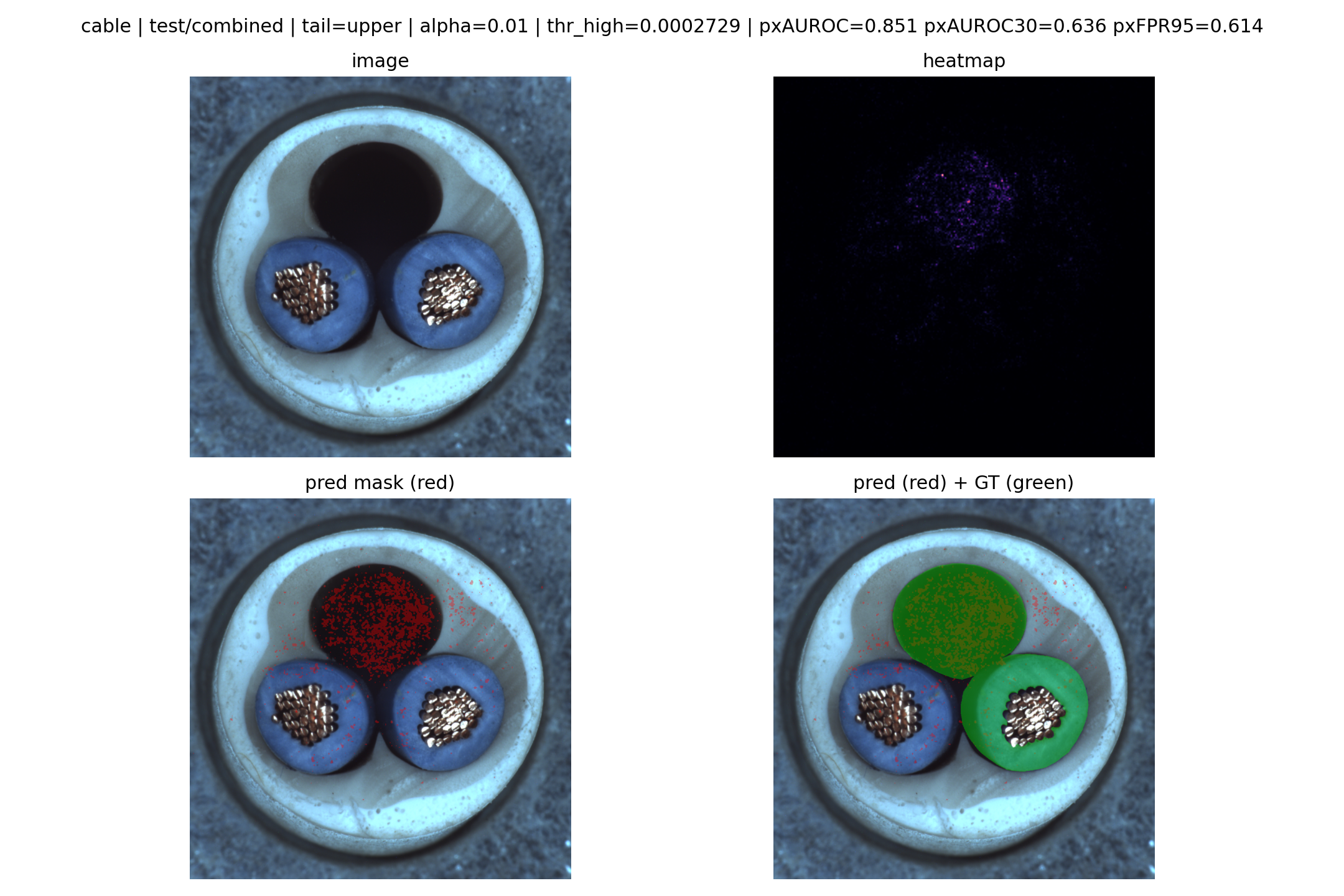}
  \end{subfigure}

  \vspace{0.12em}

  \begin{subfigure}{0.498\linewidth}
    \centering
    \includegraphics[width=\linewidth,height=0.30\textheight,keepaspectratio]{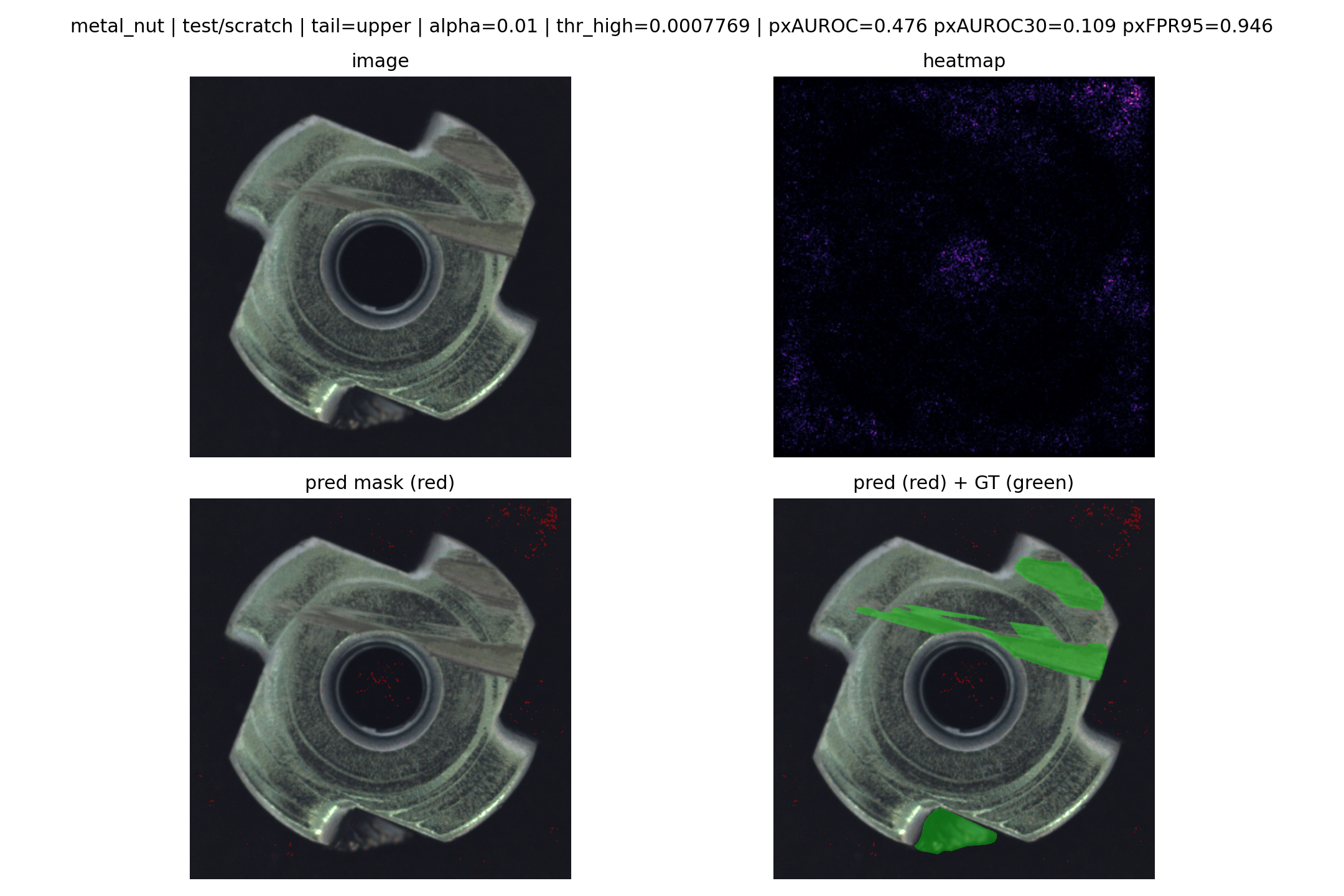}
  \end{subfigure}\hspace{0.004\linewidth}%
  \begin{subfigure}{0.498\linewidth}
    \centering
    \includegraphics[width=\linewidth,height=0.30\textheight,keepaspectratio]{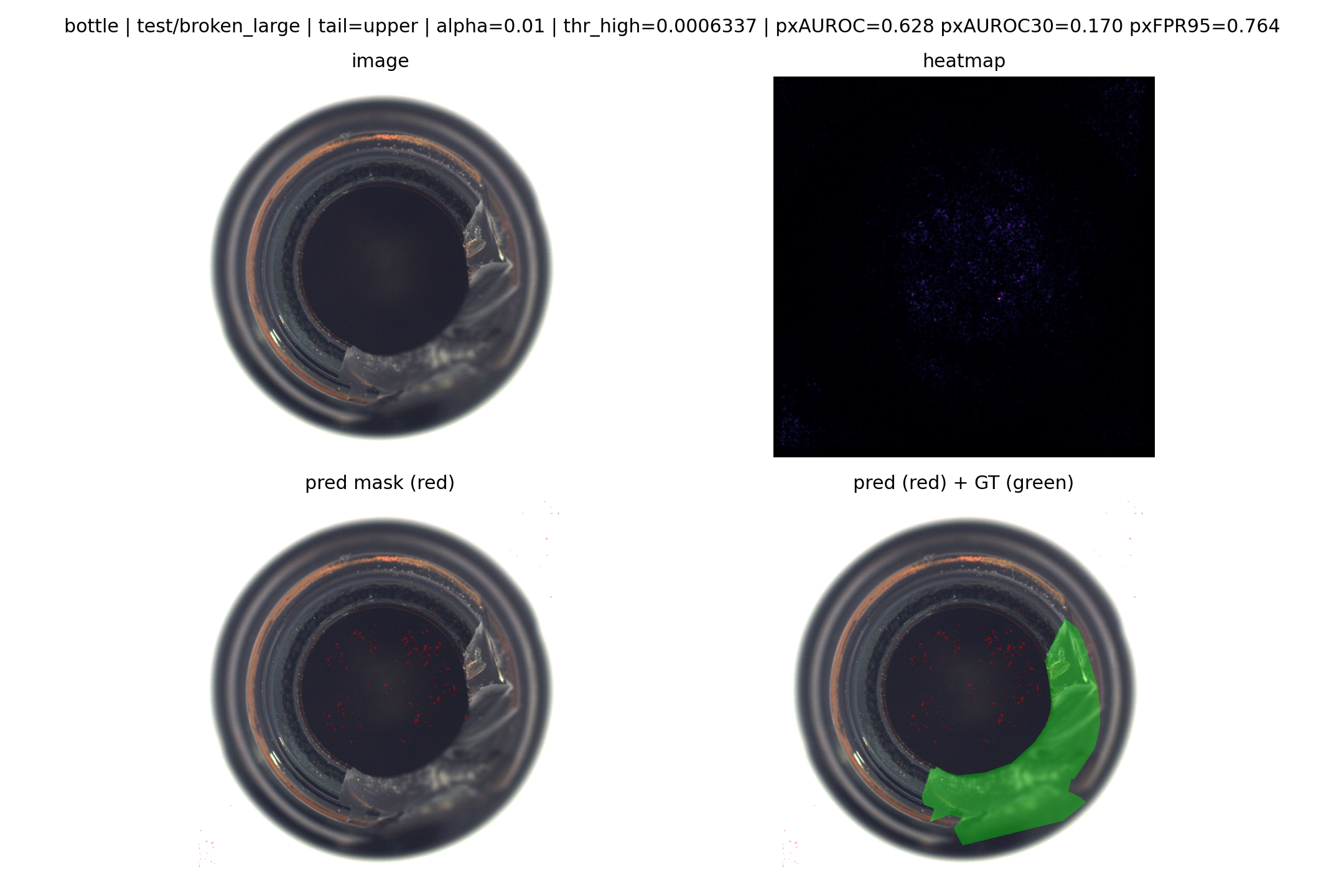}
  \end{subfigure}

  \caption{
    \textbf{Additional qualitative results.}
    Each panel is a randomly chosen example for one of MVTec AD categories. Within each panel there are four subplots: the original image (with corruption), detector's heatmap, prediction overlay at $\alpha = 0.01$, prediction and ground truth (GT) overlay.}
  \label{fig:appendix-3x2B}
\end{figure}

\textbf{Observations.}
Figures \ref{fig:appendix-3x2} and \ref{fig:appendix-3x2B}  show that  
across MVTec AD categories, per-pixel Stein residuals can produce spatially coherent
and visually meaningful localisation signals, particularly on texture classes,
where defects manifest as systematic deviations in local appearance.
Quantitative metrics and qualitative inspection indicate substantially weaker
performance on object categories, where anomalies are often small, highly
specific, and semantically defined (e.g.\ missing components or structural
deformations).
We hypothesise that such object-level defects are less accessible to a
zero-shot pipeline based on a generic ImageNet-pretrained classifier and score
model, which have not been exposed to object-specific normality constraints.
Importantly, no component of our pipeline—neither the classifier nor the score
model—has been trained or adapted on MVTec AD data, making this evaluation
fully \emph{zero-shot}.

\section{Ablation study}
\label{app:ablations}

We present an ablation study designed to isolate the contribution of
individual components of the Stein operator and sensitivity analysis to the choice of diffusion time step in the score model. The ablation (operator decomposition) is evaluated on the full CIFAR-10 OOD
benchmark suite used in the main results, as it probes detection performance
across heterogeneous shift regimes. The sensititity analysis is done on a per-pixel level, using the MVTec AD dataset from the part of the paper.

\subsection{Ablation: Operator decomposition}
\label{app:ablation-operator}

We study the effect of progressively simplifying the Langevin Stein operator
by removing individual components.
Specifically, we compare:
\begin{itemize}[leftmargin=*]
\item \textbf{Full operator (TASTE):}
\[
\mathcal{L}_p f(x) = \Delta f(x) + s_p(x)^\top \nabla f(x),
\]
\item \textbf{No-Laplacian:}
\[
\mathcal{L}_p^{\text{no-lap}} f(x) = s_p(x)^\top \nabla f(x),
\]
\item \textbf{Score-only:}
using only the score $L^2$ norm, without interaction with
$\nabla f$.
\end{itemize}

Table~\ref{tab:ablation1} reports performance across adversarial attacks,
corruption benchmarks (CIFAR-10-C), structured perturbations (CIFAR-10-P), and
standard OOD datasets. This setup is identical to experiments in Section~\ref{sec:benchmark-cifar}. 

\begin{table*}[t]
  \centering
  \caption{Operator construction ablation study - performance on CIFAR-10 across heterogeneous shift regimes. 
Results are averaged within each category. The dataset composition is identical to that used in the main evaluation suite presented in Section~\ref{sec:benchmark-cifar}.
}
  \resizebox{\textwidth}{!}{%
    \begin{tabular}{l
                    cc  
                    cc  
                    cc  
                    cc
                    cc
                    }
      \toprule
      & \multicolumn{2}{c}{\textbf{Adversarial}} 
      & \multicolumn{2}{c}{\textbf{CIFAR-10-C}} 
      & \multicolumn{2}{c}{\textbf{CIFAR-10-P}} 
      & \multicolumn{2}{c}{\textbf{OOD-benchmarks}} 
      & \multicolumn{2}{c}{\textbf{Overall}} \\
      \cmidrule(lr){2-3} \cmidrule(lr){4-5} \cmidrule(lr){6-7} \cmidrule(lr){8-9}\cmidrule(lr){10-11}
      Method & AUROC$\uparrow$ & FPR95$\downarrow$ & AUROC$\uparrow$ & FPR95$\downarrow$ & AUROC$\uparrow$ & FPR95$\downarrow$ & AUROC$\uparrow$ & FPR95$\downarrow$ & AUROC$\uparrow$ & FPR95$\downarrow$\\
      \midrule
      TASTE  & 0.6144 & 0.865 & 0.6193 & 0.8691 & 0.5954 & 0.8611 & 0.7647 & 0.5677 & \textbf{0.6285} & \textbf{0.8263} \\
      TASTE-no-Laplacian & 0.6264 & 0.8711 & 0.6098 & 0.8708 & 0.5902 & 0.8657 & 0.7597 & 0.5751 & 0.6253 & 0.8367 \\
      Score-only & 0.5337 & 0.9437 & 0.6364 & 0.787 & 0.5789 & 0.8756 & 0.4452 & 0.8623 & 0.5783 & 0.8518 \\
      \bottomrule
    \end{tabular}%
  }
  \vspace{2pt}
\label{tab:ablation1}
\end{table*}

\textbf{Observations.}
Across all benchmark categories, the full operator and the no-Laplacian variant
exhibit nearly identical performance, while the score-only variant performs
substantially worse.
In particular, the AUROC and FPR95 of \textsc{TASTE} and
\textsc{TASTE-no-Laplacian} differ only marginally, whereas additionally removing the
$\nabla f$ interaction leads to a pronounced drop in performance. This behaviour is explained by the dominance of the dot-product term
$s(x)^\top\nabla f_c(x)$ in many practical settings.
Specifically, the full score is dominated by
\[
|s(x)^\top \nabla f_c(x)|
\;\approx\;
\|s(x)\|\,\|\nabla f_c(x)\|\,|\cos\theta_c|,
\]
where $\theta_c$ is the angle between the score direction and the class-specific
gradient. Empirically, we observe that while the gradient and the Laplacian are of comparable magnitude, the output of the score model is often much larger in scale, causing the dot-product term of the operator to dominate, even if the correlation between the score and the gradient is small.

\textbf{Interpretation.}
From a computational perspective, Table~\ref{tab:ablation1}  suggests that the Laplacian term can be
omitted in many practical applications without significant loss in detection
performance, yielding a cheaper first-order diagnostic.
However, the theoretical guarantees developed in the main text rely on the full
Langevin operator, and the Laplacian remains essential for ensuring a proper
Stein identity and for avoiding directional blind spots in principle.
We therefore view the Laplacian as theoretically fundamental but often
numerically subdominant.

\subsection{Sensitivity to diffusion time step}
\label{sec:diffusion-time step-sensitivity}

The per-pixel Stein residual depends on the quality and scale of the score
estimate $\hat{s}_p(x)\approx\nabla_x\log p_t(x)$ produced by the diffusion
model at a chosen time step $t$.
Since the diffusion score interpolates between fine-grained data geometry
(small $t$) and heavily smoothed distributions (large $t$), the time step
implicitly controls the granularity of the resulting anomaly signal.
We therefore study how per-pixel OOD performance varies as a function of $t$.

\textbf{Setup.}
Using the MVTec AD dataset, we evaluate pixel-level anomaly detection metrics
across a range of diffusion time steps while keeping all other components fixed:
the classifier, Stein operator, residual aggregation, and evaluation protocol.
For each time step, we compute per-pixel Stein heatmaps and report
pixel-level AUROC, AUPRO, and average precision (AP), averaged across categories.
\begin{figure}[H]
  \centering

  \begin{subfigure}[t]{0.48\linewidth}
    \centering
    \includegraphics[
      width=\linewidth,
      height=0.22\textheight,
      keepaspectratio
    ]{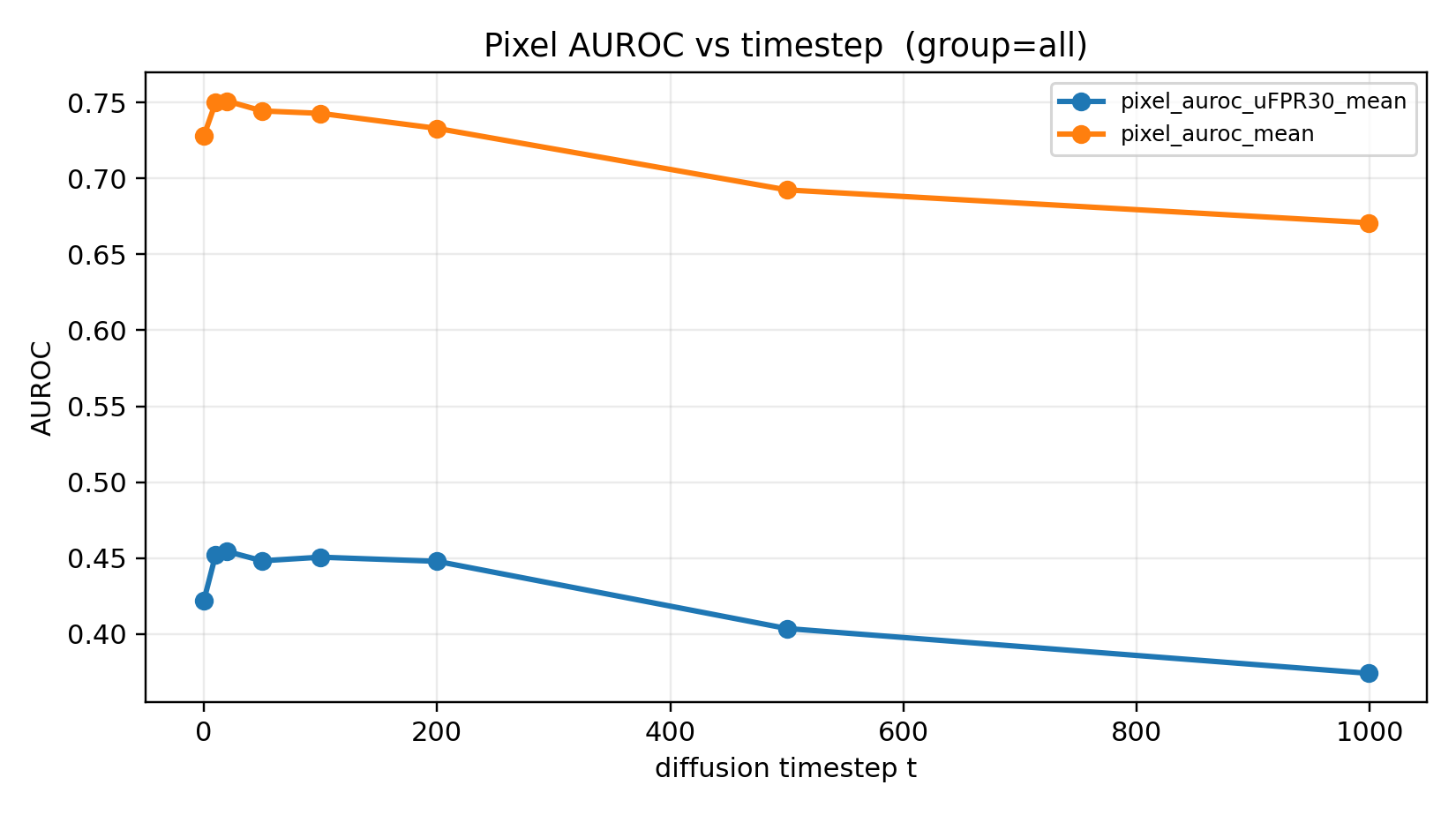}
    \caption{Pixel AUROC}
    \label{fig:pixel-auroc-time step}
  \end{subfigure}
  \hfill
  \begin{subfigure}[t]{0.48\linewidth}
    \centering
    \includegraphics[
      width=\linewidth,
      height=0.22\textheight,
      keepaspectratio
    ]{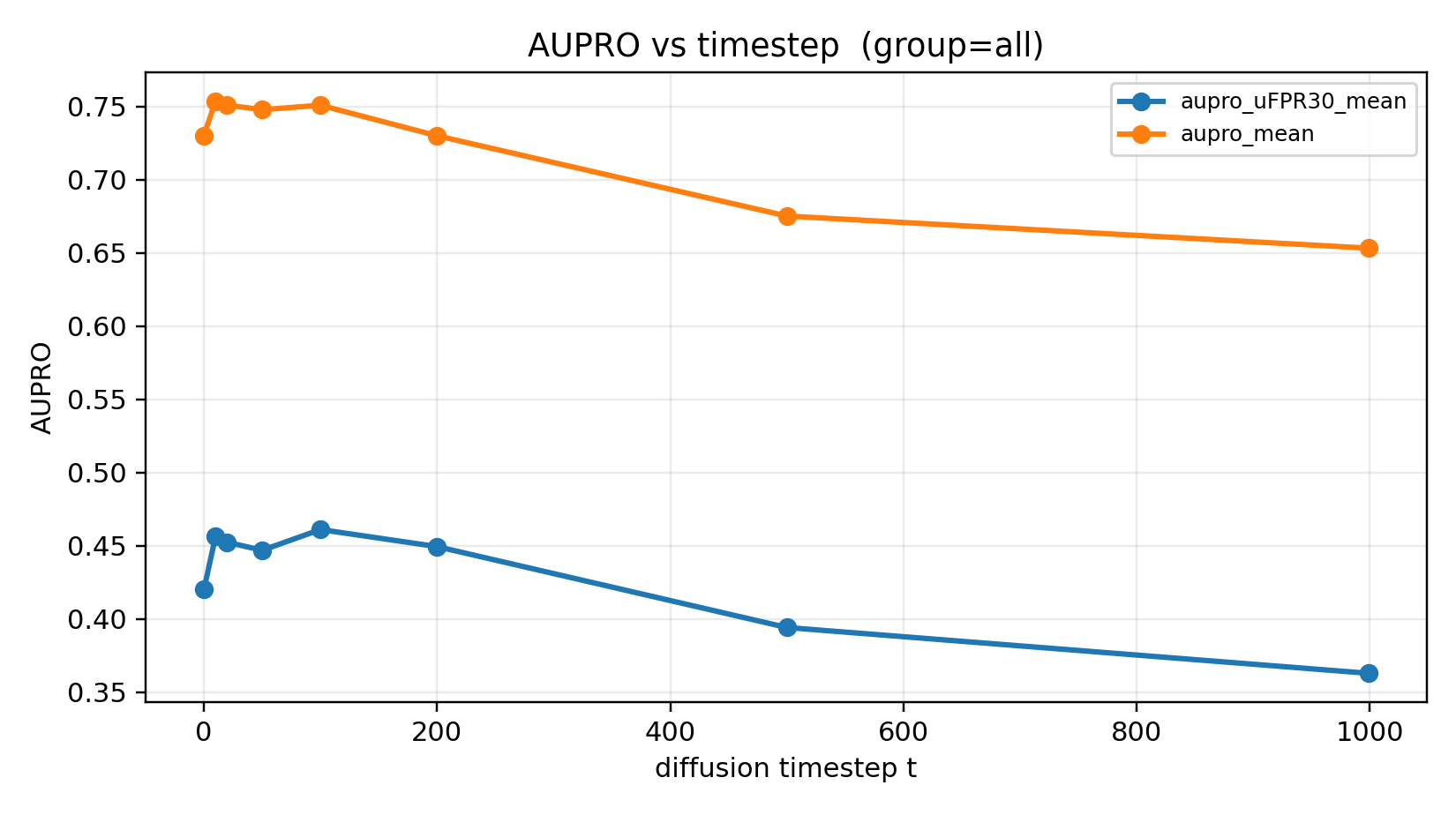}
    \caption{AUPRO}
    \label{fig:pixel-aupro-time step}
  \end{subfigure}

  \vspace{0.6em}

  \begin{subfigure}[t]{\linewidth}
    \centering
    \includegraphics[
      width=\linewidth,
      height=0.22\textheight,
      keepaspectratio
    ]{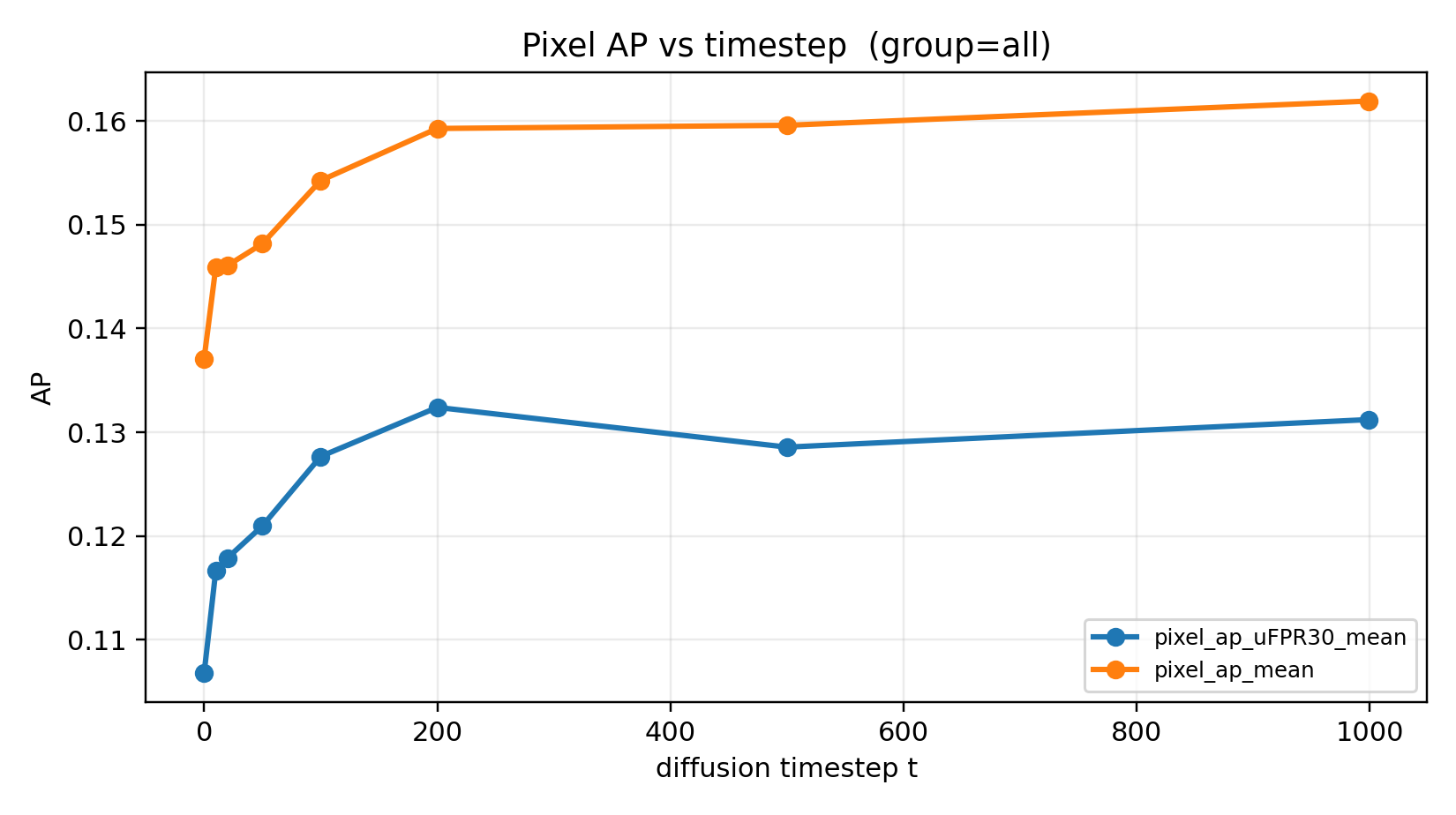}
    \caption{Pixel AP}
    \label{fig:pixel-ap-time step}
  \end{subfigure}

  \caption{
    \textbf{Effect of diffusion time step on per-pixel anomaly detection.}
    As the diffusion time step increases, ranking-based metrics (AUROC, AUPRO)
    decrease, while average precision improves, indicating a shift from global
    separability to more confident, sparse anomaly responses. Metric in orange are standard area under curve statistics, while those in blue impose additional of False Positive Rate below 30\% and thus promoting early separation.}
  \label{fig:mvtec-time step-metrics}
\end{figure}

\textbf{Observed behaviour.}
Figure~\ref{fig:mvtec-time step-metrics} shows a consistent and interpretable
trade-off.
As the diffusion time step increases, both pixel-level AUROC and AUPRO decrease,
indicating reduced global separability and weaker region-level coherence of the
anomaly scores.
In contrast, average precision increases monotonically with time step.
This divergence reflects a shift in the score distribution:
higher time steps smoothen the score field and suppress moderate responses, while
amplifying extreme deviations from the data manifold.
As a result, fewer pixels receive large scores, but those that do are more likely
to correspond to true defect regions.

\textbf{Interpretation.}
These results highlight that different metrics probe distinct aspects of
per-pixel anomaly detection.
AUROC and AUPRO assess global ranking quality and spatial consistency,
respectively, and therefore favour lower diffusion noise where fine-grained
structure is preserved.
Average precision, by contrast, emphasises the reliability of the
highest-scoring pixels and is sensitive to the purity of the score tail.
From this perspective, increasing the diffusion time step trades global
separability for more confident, sparse anomaly responses.
This behaviour is consistent with the geometry of diffusion score fields and
suggests that the time step can be selected based on the desired operating regime:
lower time steps for comprehensive localisation and higher time steps for
high-confidence defect highlighting.

\textbf{Practical Implications.}
This sensitivity analysis demonstrates that the proposed Stein
residual does not behave arbitrarily with respect to the score model, but
exhibits predictable and structured changes as the underlying notion of data
geometry is varied.
In all cases, the method remains fully zero-shot, requiring no retraining or
fine-tuning on the target dataset.

\section{Per-sample Stein residuals in mixed MNIST / Fashion-MNIST OOD}
\label{sec:mixed-ood}

We now assess the discriminative power of \emph{per-sample} Stein residuals in a
heterogeneous OOD scenario based on mixed in- and out-of-distribution test
sets, a setting that is known to be challenging for OOD detection and has been
used to expose failure modes of likelihood-based and confidence-based
approaches \citep{nalisnickDeepGenerativeModels2019,
rabanserFailingLoudlyEmpirical2019}.

\textbf{Setup.}
The classifier $f_\theta$ and score model $\hat{s}_p$ are trained exactly as in
the previous subsection.  Test sets are constructed by mixing MNIST
(in-distribution) and Fashion-MNIST (OOD) according to a corruption level
$\beta\in\{0.80,0.82,\dots,1.00\}$, where the test set contains $\beta\cdot 10{,}000$ MNIST images and $(1-\beta)\cdot 10000$ Fashion-MNIST images.

\textbf{Aggregate behaviour.}
For each $\alpha$ we compute the classifier accuracy and the mean Stein
signal.  As shown in Figure~\ref{fig:metrics-vs-alpha},
both quantities vary monotonically with the corruption level: accuracy declines
as Fashion-MNIST content increases, while the Stein score increases in magnitude.

\begin{figure}[t]
  \centering
  \includegraphics[width=0.8\linewidth]{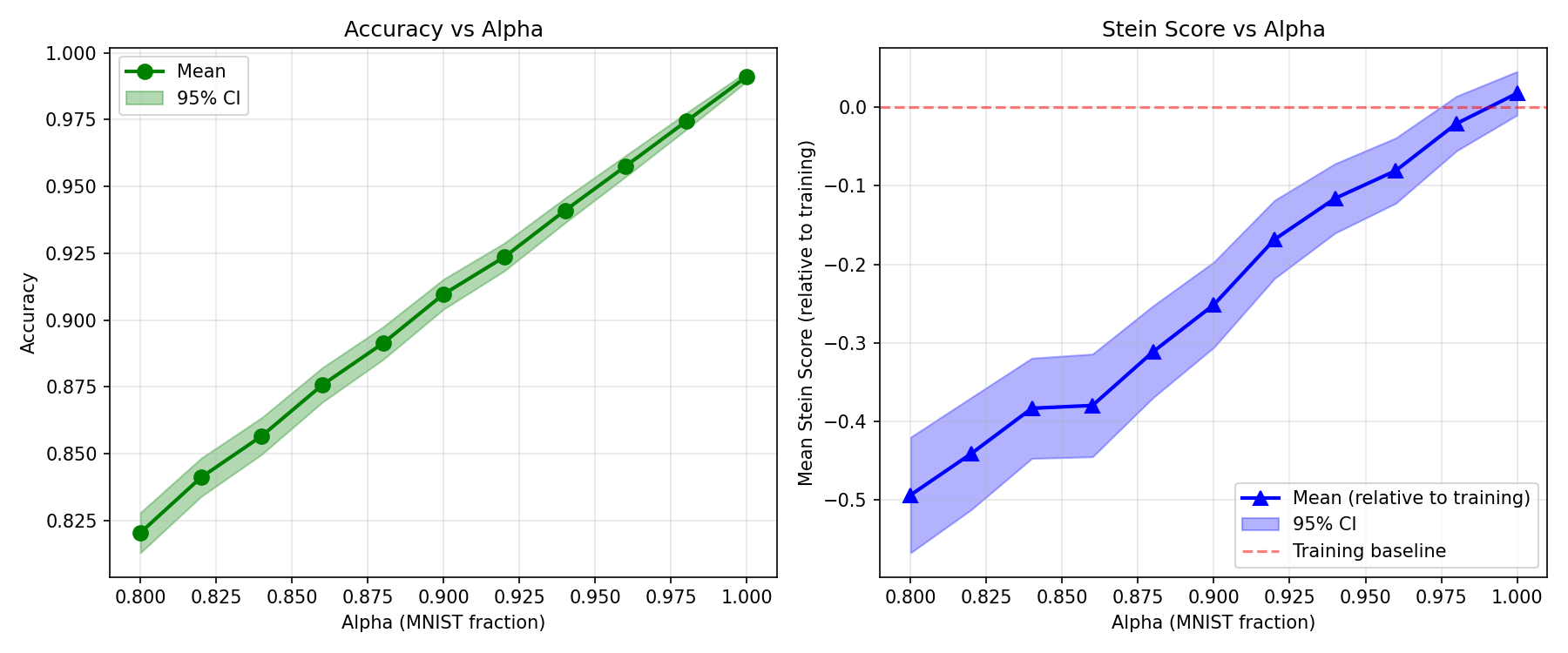}
  \caption{
    \textbf{Model performance and Stein signal as a function of test set corruption level.} The methodology picks up on the presence of out-of-distribution input data for even very small corruption levels. 
  }
  \label{fig:metrics-vs-alpha}
\end{figure}

\begin{figure}[t]
  \centering
  \includegraphics[width=0.8\linewidth]{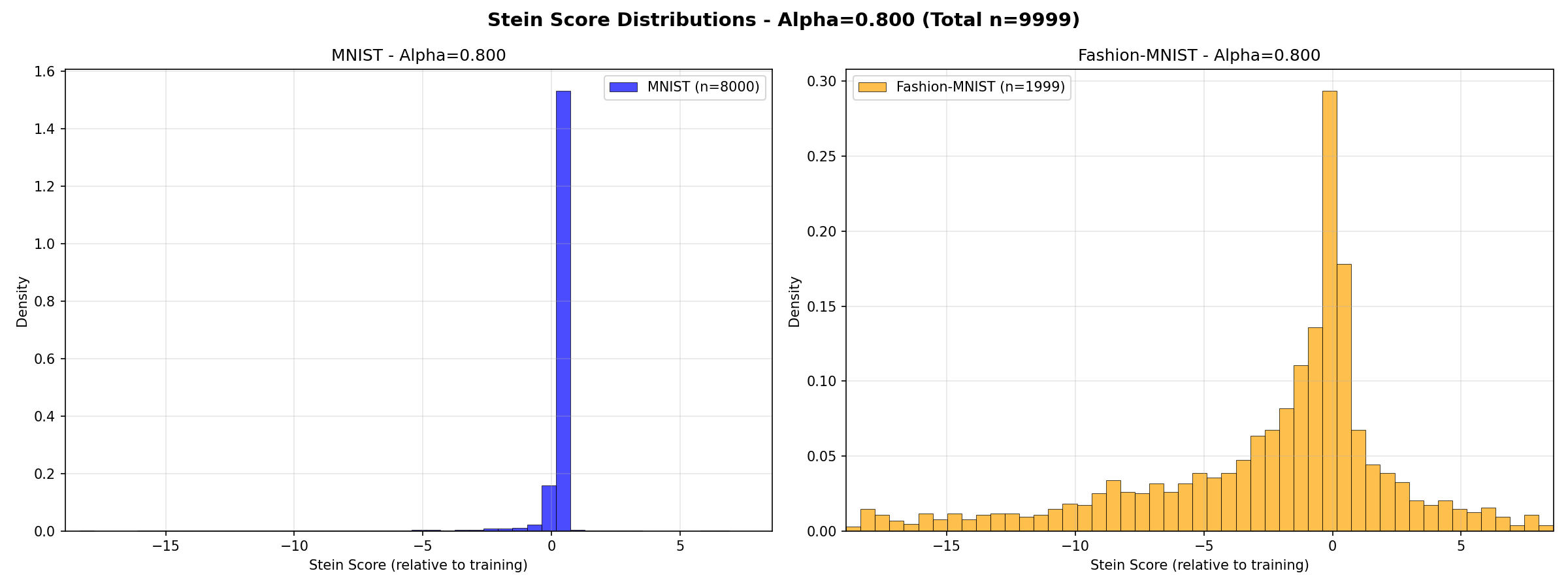}
  \caption{
    \textbf{Distribution of the per-sample Stein score.} The left panel shows evaluation on clean (MNIST) samples, while the right presents the scores generated from corrupted data. The cnosiderable distribution shift of the per-sample Stein score allows for per-input anomaly detection. 
  }
  \label{fig:stein-histogram}
\end{figure}

\textbf{Per-sample separation.}
For a fixed corruption level (e.g.\ $\beta=0.8$), the distribution of
$r_f(x)$ is sharply concentrated near zero for MNIST and displays heavy tails
for Fashion-MNIST (Figure~\ref{fig:stein-histogram}).  This separation enables
a simple calibrated test: using held-out MNIST data, estimate the $95$th
percentile $\tau_{0.95}$ of $|r_f(x)|$ and declare a test point OOD if
$|r_f(x)|>\tau_{0.95}$.  The resulting test achieves high power across all
corruption levels (Figure~\ref{fig:hypothesis-testing}) despite requiring no
supervision on Fashion-MNIST.

\begin{figure}[t]
  \centering
  \includegraphics[width=0.8\linewidth]{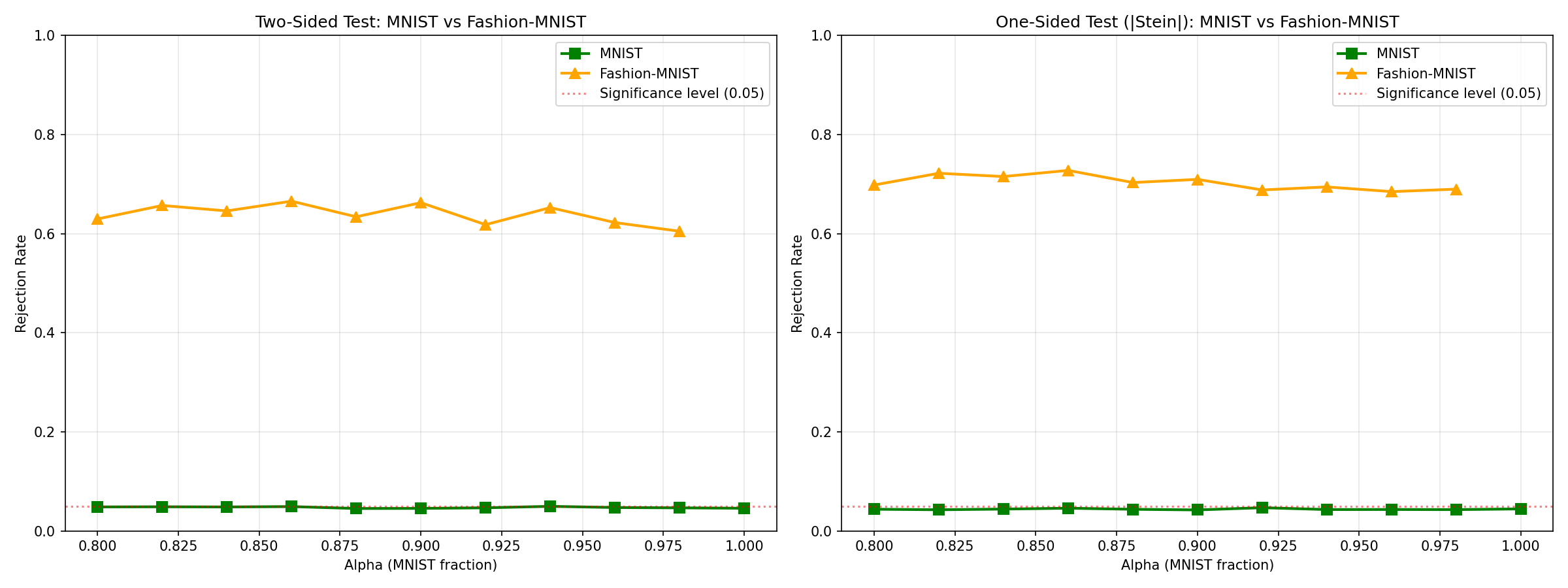}
  \caption{
    \textbf{Stein score-based test power comparison as a function of test set corruption level $\alpha$.} The left panel presents results for two-sided tests, while one-sided (arguably more powerful) test is presented on the right. Also, see \textbf{Appendix~C} for power analysis based on a first-order operator
  }
  \label{fig:hypothesis-testing}
\end{figure}

\section{First-Order Stein Operator Diagnostics}
\label{app:first-order-stein}

For a function $f \in \mathcal{F}(p)$, instead of the Langevin operator $\mathcal{L}_p f(x)=\Delta f(x)+\nabla\log p(x)^\top\nabla f(x) \in \R$, consider a first-order Stein operator defined as
\[
\mathcal{A}_p f(x) = \nabla f(x) + f(x)s_p(x) \in \R^d.
\]
To obtain a scalar diagnostic one typically projects this vector onto a fixed
direction $v$, which we denote by
\begin{equation}\label{eq:first-order-stein-eq}
      \mathcal{A}^v_p f(X)
  =
  v^\top \nabla f(x) + f(x) v^\top s_p(x)
\end{equation}

It is easy to show that $\mathbb{E}_p[\mathcal{A}^v_p f(X)] = 0$ under mild assumptions on $f$. 
Indeed, let $X \sim p$ and assume $f \in \mathcal{F}(p)$ is such that the following integration by parts is valid (e.g., $f p$ vanishes at infinity).
Then
\begin{equation}\label{eq:first-order-stein-proof}
\mathbb{E}_p\!\left[\mathcal{A}^v_p f(X)\right]
=
\mathbb{E}_p\!\left[v^\top \nabla f(X)\right]
+
\mathbb{E}_p\!\left[f(X)\,v^\top s_p(X)\right].
\end{equation}

Since $s_p=\nabla \log p$, we have $f\,s_p = \nabla(fp) / p - \nabla f$, and thus
\[
\mathbb{E}_p\!\left[f(X)\,v^\top s_p(X)\right]
=
\int v^\top f(x)\nabla \log p(x)\,p(x)\,dx
=
\int v^\top \nabla (f(x)p(x))\,dx
-
\int v^\top \nabla f(x)\,p(x)\,dx.
\]
Under the assumed boundary conditions, the first integral vanishes, yielding
\[
\mathbb{E}_p\!\left[f(X)\,v^\top s_p(X)\right]
=
-
\mathbb{E}_p\!\left[v^\top \nabla f(X)\right].
\]
Substituting back into \eqref{eq:first-order-stein-proof}, the two terms cancel and we obtain
\[
\mathbb{E}_p[\mathcal{A}^v_p f(X)] = 0.
\]
We now consider $q$, another smooth density on $\R^d$, absolutely continuous with respect to $p$. Mirroring the argument from the main text, we obtain

\[
\E_{q}[\mathcal{A}^v_p f(X)]
  =-\,v^\top \E_{q}
   \!\left[
      f(X)
      \nabla \log\frac{q(X)}{p(X)}
    \right].
\]

Indeed, taking expectation under $q$ yields
\[
\E_q[\mathcal{A}^v_p f(X)]
= v^\top \E_q[\nabla f(X)] + v^\top \E_q[f(X)\nabla\log p(X)].
\]
Using integration by parts under $q$ and the identity
\[
\E_q[\nabla f(X)] = -\,\E_q\!\left[f(X)\nabla\log q(X)\right],
\]
which holds for $f$ in the Stein class, we obtain
\[
\E_q[\mathcal{A}^v_p f(X)]
= -\,v^\top \E_q\!\left[f(X)\bigl(\nabla\log q(X)-\nabla\log p(X)\bigr)\right].
\]
Rearranging terms gives
$$
\E_{q}[\mathcal{A}^v_p f(X)]
  = -\,v^\top \E_q
   \!\left[
      f(X)\nabla\log\frac{q(X)}{p(X)}
    \right],
$$ 
as claimed.

Thus this functional detects only \emph{value-based alignment}: between the output $f(X)$ and the score mismatch $\nabla\log(q/p)(X)$. In
contrast to the Langevin operator, which probes geometric distortions via
$\nabla f(X)$ and $\Delta f(X)$, the first-order functional reacts primarily
to changes in the distribution of the scalar outputs $f(X)$. This might lead to unexpected behaviour.

\subsection{Directional blind spots: the 2D Gaussian illustration}

To illustrate the potential blind spots, consider the two-dimensional Gaussian mean-shift
model introduced in Section~\ref{sec:2d-rotation-experiment} of the main text
\[
p(x)=\mathcal{N}(0,I_2),
\qquad
q(x)=\mathcal{N}(\varepsilon u, I_2),
\qquad
u=(\cos\theta,\sin\theta).
\]
For the task $f(x)=x_2-x_1$, the scalar first-order functional admits the exact
expression
\[
\mathbb{E}_q[\mathcal{A}^v_p f(X)]
= -\varepsilon^2\bigl(v_1 \cos\theta + v_2 \sin\theta\bigr)\,(\sin\theta - \cos\theta).
\]
For $v=(1,1)$ this expression simplifies to 
\[
\mathbb{E}_q[\mathcal{A}^v_p f(X)]
=
\varepsilon^2 \cos(2\theta)
\]
which corresponds to taking the component-wise average of the operator.

Thus the response is \emph{quadratic} in the shift magnitude and vanishes for
all $\theta=\pi/4,3\pi/4,\dots$, creating entire undetectable families of
distributional shifts (see Figure~\ref{fig:first-order-metrics-vs-angle}).  By contrast, the Langevin Stein operator responds
linearly in $\varepsilon$ and exhibits no such degeneracies. 

\begin{figure}[H]
  \centering
  \includegraphics[width=0.8\linewidth]{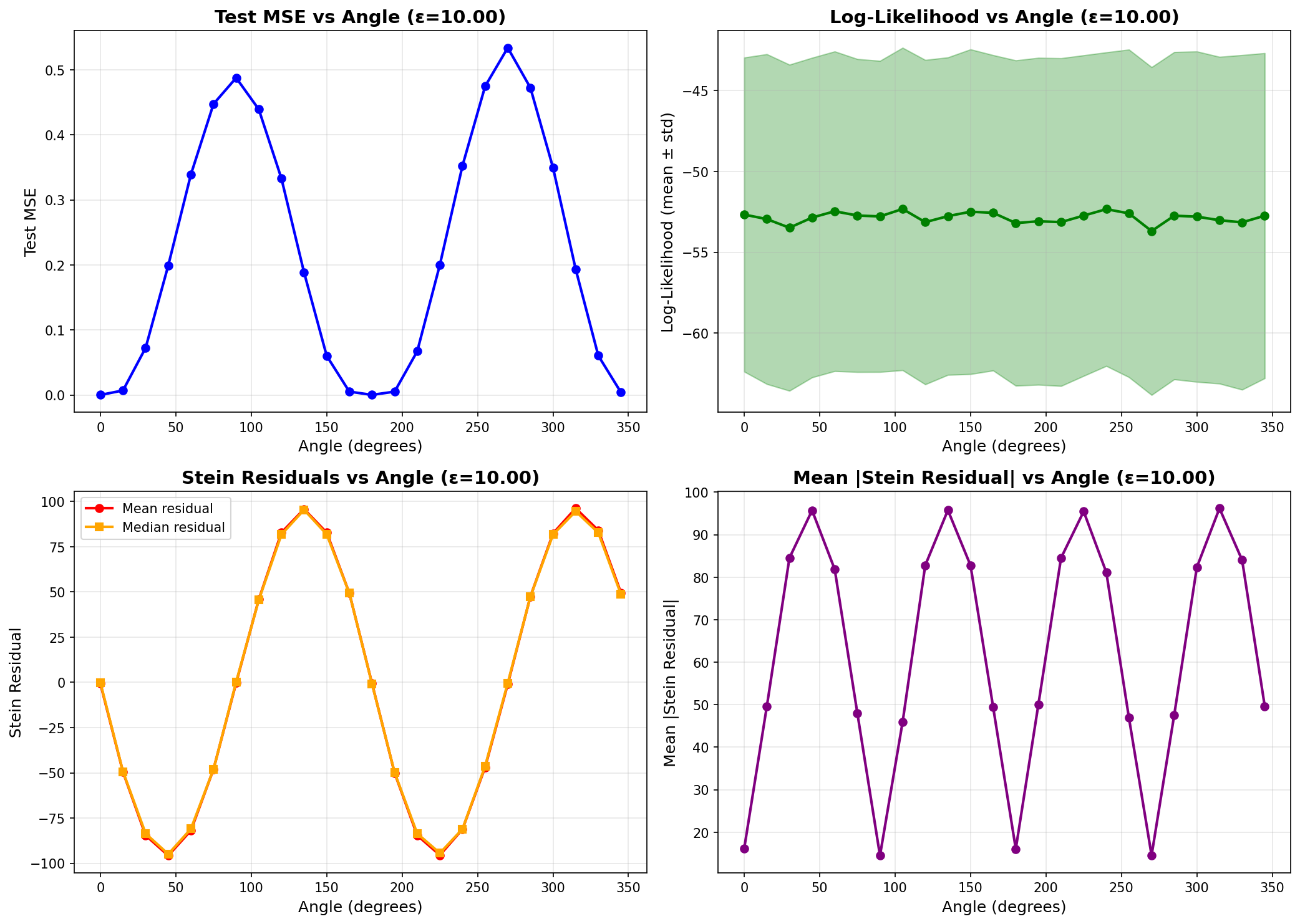}
  \caption{
    \textbf{Directional evaluation of model sensitivity and OOD scoring under the first-order Stein operator \eqref{eq:first-order-stein-eq}.}
    This figure should be compared with \textbf{Figure~\ref{fig:rotation-experiment-lines}}. The signal generated by the (analog of the) TASTE functional for the first-order Stein operator no longer aligns with performance.
  }
\label{fig:first-order-metrics-vs-angle}
\end{figure}

\subsection{The $L^2$ norm of the first-order Stein operator (no blind spots, but not a Stein operator)}

Another way to obtain a scalar quantity - and perhaps more natural - is to consider the squared $L^2$ norm of the operator.
test distribution \(q\):
\begin{equation}\label{eq:L2-full-first-order}
  \mathcal{T}_f(p,q)
  :=
  \E_{q}\!\left[\,
      \big\|\mathcal{A}_p f(X)\big\|_2^2
  \right]
  =
  \E_q\!\left[
     \|\nabla f(X) + f(X)s_p(X)\|_2^2
  \right].
\end{equation}

\textbf{No blind spots.}
Unlike the linear functional $\E_q[\mathcal{A}^v_p f]$, which can vanish for
large families of shifts (e.g.\ whole cones of directions in the 2D Gaussian
example), the quadratic quantity \eqref{eq:L2-full-first-order} is much harder
to drive to zero. Thus the $L^2$ norm of the full first-order field does not suffer from the
same directional ``blind spot'' issues as the projected linear functional.

\textbf{Nonlinearity and the loss of the Stein property.}
A key drawback is that $\|\mathcal{A}_p f\|_2^2$ is \emph{not} linear in $f$ or
in the operator. Therefore,
\[
  \E_{p}\!\left[\|\mathcal{A}_p f(X)\|_2^2\right]
  \neq 0
\]
in general. In fact, for any nontrivial $f$ in the Stein class of $p$,
\[
  \E_{p}\!\left[\|\mathcal{A}_p f(X)\|_2^2\right]
  =
  \Var_p\!\left(\mathcal{A}_p f(X)\right)
  > 0.
\]
Thus $\mathcal{T}_f(p,q)$ is \emph{not} a Stein operator: it does not vanish under
the null hypothesis that $q=p$ and therefore cannot be used directly for discrepancy testing. To convert it into a usable shift-sensitive statistic, we apply
the same principle used for correcting misspecified score models: subtract the
empirical baseline under the training distribution. Under $q=p$, this corrected statistic concentrates near zero, although this is more of a heuristic and does not have the same theoretical properties as as proper Stein operator.   
Under $q\neq p$, we see that it typically becomes positive because the magnitude of the
first-order field changes whenever the distributional shift interacts with
either $f$ or $s_p$. 

Informaly, unlike the linear first-order TASTE functional, the squared-norm
        statistic captures \emph{both magnitude and directional} changes in
        $\nabla f + f s_p$, thus eliminating blind spots.

\end{document}